\newcommand{\cf}{cf.~}
\newcommand{\eg}{e.g.~}
\newcommand{\ie}{i.e.~}
\renewcommand*{\backref}[1]{}
\renewcommand*{\backrefalt}[4]{
  \ifcase #1 (Broken backref)
  \or        (page~#2)
  \else      (pages~#2)
  \fi
}
\DeclarePairedDelimiter\floor{\lfloor}{\rfloor}
\newcommand{\diag}{\mathrm{diag}}
\newcommand{\E}[2]{\mathbb{E}_{#1} \left[ #2 \right]} %
\newcommand{\norm}[1]{\|#1\|} %
\newcommand{\prob}[1]{\mathrm{Pr}\{#1\}}
\newcommand{\real}[1]{\mathbb{R}^{#1}}
\newcommand{\integer}[1]{\mathbb{Z}^{#1}}
\newcommand{\setname}[1]{\mathcal{#1}}
\newcommand{\setsize}[1]{|\mathcal{#1}|}
\newcommand{\setsizesubt}[1]{|\mathcal{#1}_t|}
\newcommand{\spanspace}{\mathrm{span}}
\newcommand{\eqset}{\mathrel{\overset{\makebox[0pt]{\mbox{\normalfont\tiny\sffamily set}}}{=}}}
\newcommand{\eqdef}{\coloneqq}
\newcommand{\eqdefr}{\eqqcolon}
\DeclareMathOperator*{\argmin}{\arg\!\min}
\newcommand{\vecb}[1]{\boldsymbol{#1}}
\newcommand{\mat}[1]{\boldsymbol{#1}}
\let\originalleft\left
\let\originalright\right
\renewcommand{\left}{\mathopen{}\mathclose\bgroup\originalleft}
\renewcommand{\right}{\aftergroup\egroup\originalright}
\newtheorem{theorem}{Theorem}[section]
\newtheorem{definition}{Definition}[section]
\newtheorem{lemma}{Lemma}[section]
\newtheorem{assumption}{Assumption}[section]
\newcommand{\szrat}{s_{\mathrm{zrat}}} %
\newcommand{\sref}{s_{\mathrm{ref}}}
\newcommand{\strans}{s_{\mathrm{tr}}}
\newcommand{\diagpstar}{\mat{D}_{\!\!\vecb{p}^{\!\star}}}
\newcommand{\diagptilde}{\mat{D}_{\!\tilde{\vecb{p}}}}
\newcommand{\diagptildesqrt}{ \mat{D}_{\!\tilde{\vecb{p}}}^{\!\frac{1}{2}} }
\newcommand{\diagsvd}{\mat{D}_{\!\!\sigma}}
\newcommand{\diagsvds}{\mat{D}_{\!\!\sigma}^2}
\newcommand{\diageig}{\mat{D}_{\!\!\mu}}
\newcommand{\ptrmat}{ \mat{P}_{\!\!\mathrm{tr}} } %
\newcommand{\settrans}{\mathcal{S}_{\mathrm{tr}}}
\newcommand{\settr}{\settrans}
\newcommand{\settrsize}{ |\settr| }
\newcommand{\srefset}{\mathcal{S}_{\mathrm{ref}}}
\newcommand{\isd}{\mathring{p}} %
\newcommand{\isdvec}{\mathring{\vecb{p}}}
\newcommand{\isdvecrow}{ \isdvec^{\!\intercal} }
\newcommand{\tmax}{t_{\mathrm{max}}}
\newcommand{\txepmax}{\hat{t}_{\mathrm{max}}^{\mathrm{xep}}} %
\newcommand{\tmix}{t_{ \mathrm{mix}} } %
\newcommand{\tabs}{t_{\!\mathrm{abs}}}
\newcommand{\tabsmax}{\tabs^{\scaleto{\mathrm{max}\mathstrut}{5pt}}} %
\newcommand{\tabsmaxb}{\tabs^{\mathrm{max}}}
\newcommand{\ems}{\tilde{e}_\mathrm{MS}} %
\newcommand{\emst}{e_\mathrm{MS}^t}
\newcommand{\emstot}{\varepsilon_\mathrm{MS}}
\newcommand{\epb}{\tilde{e}_{\mathbb{PB}}}
\newcommand{\epbt}{e_{\mathbb{PB}}^{t}} %
\newcommand{\epbtot}{ \varepsilon_{\mathbb{PB}} }
\newcommand{\bo}{\mathbb{B}} %
\newcommand{\po}{\mathbb{P}} %
\newcommand{\nanchor}{n_{\mathrm{a}}} %
\newcommand{\nanchormax}{n_{\mathrm{a}}^{\mathrm{max}}}
\newcommand{\nsample}{n_{\mathrm{sam}}} %
\newcommand{\nxep}{n_{\mathrm{xep}}}
\newcommand{\zpinvsqrt}{ \mat{Z}^{\nicefrac{\dagger}{2}} }
\newcommand{\algref}[1]{Algo~\ref{#1}}
\newcommand{\algrefand}[2]{Algo~\ref{#1} and~\ref{#2}}
\newcommand{\assref}[1]{Assumption~\ref{#1}}
\newcommand{\defref}[1]{Def~\ref{#1}}
\newcommand{\figref}[1]{Fig~\ref{#1}}
\newcommand{\itemref}[1]{Item~\ref{#1}}
\newcommand{\lineref}[1]{Line~\ref{#1}}
\newcommand{\lmmref}[1]{Lemma~\ref{#1}}
\newcommand{\lmmrefand}[2]{Lemmas~\ref{#1} and~\ref{#2}}
\newcommand{\secref}[1]{Sec~\ref{#1}}
\newcommand{\secrefand}[2]{Secs~\ref{#1} and \ref{#2}}
\newcommand{\thmref}[1]{Thm~\ref{#1}}
\newcommand{\tblref}[1]{Table~\ref{#1}}
\newcommand{\tblrefto}[2]{Tables~\ref{#1} to~\ref{#2}}
\newcommand{\app}[1]{Appendix~#1}
\newcommand{\assume}[1]{Assumption~#1}
\newcommand{\cor}[1]{Corollary~#1} %
\newcommand{\deff}[1]{Def~#1}
\newcommand{\lmm}[1]{Lemma~#1}
\newcommand{\page}[1]{p#1}
\newcommand{\tbl}[1]{Table~#1}
\newcommand{\secc}[1]{Sec~#1}
\newcommand{\thm}[1]{Thm~#1}
\setlist{nosep}
\colorlet{darkgreen}{green!50!black}
\definecolor{stateblue}{cmyk}{0.96,0,0,0}
\definecolor{lightgray}{gray}{0.85}
\definecolor{halfgreen}{rgb}{0,0.5,0}
\newcommand{\manus}{paper} %
\title{Approximate discounting-free policy evaluation \\
from transient and recurrent states}
\author{%
  Vektor Dewanto, Marcus Gallagher \\
  School of Information Technology and Electrical Engineering \\
  University of Queensland, Australia \\
  \texttt{v.dewanto@uqconnect.edu.au, marcusg@uq.edu.au} \\
}
\begin{document}
\maketitle

\begin{abstract}
In order to distinguish policies that prescribe
good from bad actions in transient states,
we need to evaluate the so-called bias of a policy from transient states.
However, we observe that most (if not all) works in
approximate discounting-free policy evaluation thus far are developed
for estimating the bias solely from recurrent states.
We therefore propose a system of approximators for the bias
(specifically, its relative value) from transient and recurrent states.
Its key ingredient is a seminorm LSTD (least-squares temporal difference),
for which we derive its minimizer expression that enables approximation by
sampling required in model-free reinforcement learning.
This seminorm LSTD also facilitates the formulation of a general unifying procedure
for LSTD-based policy value approximators.
Experimental results validate the effectiveness of our proposed method.
\end{abstract}

\section{Introduction}

Consider an environment where there are two types of states:
those that are visited infinitely many times by an agent, and
those that are not (even though the agent is modelled to operate up to infinity).
The members of the former group are called recurrent states, whereas
those of the latter are called transient states.
If all recurrent states form a single closed irreducible set, then we have
the so-called unichain Markov chain (MC).
It is closed in that once the agent is in any member of the set,
the agent cannot go outside to any non-member state.
It is irreducible because from any member of the set, the agent can
visit any other member.
In reinforcement learning (RL), such a unichain MC is induced by (at least)
one of the stationary policies of the Markov decision process (MDP) model,
for which we call such a model a unichain MDP.

The work in this \manus~is concerned with evaluating a stationary policy~$\pi$
from both transient and recurrent states in terms of
discounting-free evaluation functions.
Particularly, the policy value function of interest is the bias (denoted by $v_b$)
of $\pi$ as follows,
\begin{equation}
v_b(\pi, s)
\eqdef \lim_{t_{\mathrm{max}} \to \infty}
\E{A_t \sim \pi(\cdot|s_t), S_{t+1} \sim p(\cdot| s_t, a_t)}{
    \sum_{t=0}^{t_{\mathrm{max}} - 1}
    \Big( r(S_t, A_t) - v_g(\pi) \Big) \Big| S_0 = s, \pi},
\quad \forall s \in \setname{S},
\label{equ:bias_def}
\end{equation}
where $S_t$ and $A_t$ are discrete state and action random variables on
the state set $\setname{S}$ and action set $\setname{A}$
of an infinite-horizon MDP with one-step state transition distribution~$p$,
and reward function~$r$.
Here, $v_g$ denotes the gain (the average-reward) value function, which is
state-invariant whenever the induced MC is unichain.
Both bias and gain do not involve any discount factor (hence, they are said to
be discounting-free; \cf the discounted reward value function).

Evaluating the bias from both state types is essential for carrying out
further policy selection on gain-optimal policies that induce unichain MCs.
This is because the gain value function only concerns with the long-run rewards
(which are earned in recurrent states).
It ignores rewards earned at the outset in transient states, in which
gain-optimal policies therefore cannot distinguish ``good'' from ``bad'' actions.
In other words, they are suboptimal in transient states with respect to
the finest (the most selective) optimality criterion, \ie the Blackwell optimality.

Despite the aforementioned importance, we observe that most works in RL are
designed for approximately evaluating a policy from recurrent states,
specifically for MDPs whose all induced MCs have only recurrent states.
The stationary state distribution is used for weighting the state-wise error terms
in the error function.
This applies to both discounted and discounting-free policy evaluation,
\eg \citep[\assume{3}]{liu_2021_tdgs}, \citep[\secc{2.4.2}]{dann_2014_petd}.
There are a few works that estimate the policy values from transient states.
However, they are applicable merely for MCs with a single recurrent state
whose reward is zero and known (thus no estimation is needed).
For instance, \citet[\thm{1}]{bradtke_1996_lstd} proposed
a discounted-reward estimator for environments with
multiple transient states and a single 0-reward absorbing terminal state.
The error terms are weighted by the visitation probabilities of transient states
from the initial time until absorption, which is known to happen at
the last timestep of a trial (as the agent reaches the absorbing terminal state).

In this \manus, we propose techniques that approximate the bias value
(of any stationary policy) from multiple transient and recurrent states
in model-free RL.
This requires value approximation from both state types,
instead of either one out of two types as the above-mentioned existing works.
Moreover, the state classification is unknown since
the agent does not know the state transition distribution
and does not attempt to estimate it.
This also implies that the agent does not know when it is absorbed into
the closed, irreducible recurrent state set (\ie the absorption time).
If the state classification was known, the state set could be sliced
and two approximators could be built:
one for the recurrent states and one for the transient states;
taking advantage of the existing works.
However in that case, some additional work would still be needed for two reasons.
First is that all recurrent states cannot simply be separated from the state set
(for the sake of the transient state value approximator)
because there must be transitions from some transient states to recurrent states,
which may have non-zero rewards affecting transient state values.
Second is because those two individual bias approximators have different offsets
from the true bias (due to the nature of the error function that they minimize).
Therefore, their approximation results need to be calibrated before being used
simultaneously in a formula that involves bias values of
both transient and recurrent states.
More exposition about these two issues are provided later
in \secrefand{sec:seminorm_lstd}{sec:approxbias}.

We present a system of approximators for the bias (relative) value.
Each approximator is based on least-squares temporal difference (LSTD).
In one extreme (where the computation cost is put aside), the system
instantiates stepwise LSTD approximators that use stepwise state distributions,
denoted as $p_\pi^t$, to weight the state-wise error terms.
Here, $p_\pi^t(s) \eqdef \E{S_0 \sim \isd}{\prob{S_t = s| s_0, \pi}},
\forall s \in \setname{S}$, which indicates the probability of visiting a state $s$
in $t$ timesteps when the agent begins at an initial state $S_0 \sim \isd$
then follows a policy~$\pi$.
Such a system dismisses the need for state classification, but
poses at least three challenges, for which we contribute some solutions.

\textbf{First}, the stepwise state distribution $p_\pi^t$ may not have
the whole state set as its support.
For example, when the agent can only begin in transient states,
$p_\pi^{t=0}$ has zero probability for any recurrent state.
The same goes for transient states with respect to $p_\pi^t$ after
the stationary state distribution is reached
(as there is no chance of visiting transient states
once the recurrent class is entered).
Consequently, the diagonal matrix derived from $p_\pi^t$ may be
positive semidefinite (PSD).
This necessitates an LSTD approximator that involves a seminorm,
hence a generalized pseudoinverse.
The main difficulty comes from the fact that the reverse product law
does not apply to pseudoinverse.
We derive the optimal (minimizing) parameter of the seminorm LSTD, and
its sampling-based estimator in \secref{sec:seminorm_lstd}.

\textbf{Second}, a system of \emph{stepwise} approximators
requires at least one parameter per timestep.
This implies an infinite number of parameters whenever there is
an infinite number of timesteps (as in infinite-horizon MDPs).
To be practical, we propose a procedure that accommodates the specification of
the desired number of approximators (hence, the desired number of parameters in a system).
It determines a number of timestep neighborhoods according to
sampling-based estimated distances among stepwise state distributions.
For every neighborhood, it then applies a seminorm LSTD that weights
the state-wise error terms using the average state distributions on that neighborhood.
This procedure is general and unifies existing LSTD-based methods,
as explained in \secref{sec:unify_lstd_based}.

\textbf{Third}, the approximators of the proposed system estimate the bias
only up to some offset; generally one unique offset for each approximator.
This is inherently due to the limitation of the bias error-function
that those approximators individually minimize.
In order to use the resulting approximations (of all approximators)
in a formula that jointly involves transient and recurrent state values,
we need to calibrate those offsets so that all approximations have
one common offset with respect to the true bias, which is time-invariant.
In \secref{sec:approxbias}, we describe such offset calibration along with
the pseudocode of the proposed system of seminorm LSTD approximators
for the bias (\ie its relative value).

We provide experimental results in \secref{nbwpval:xprmtresult}, which is
preceded by experimental setup in \secref{nbwpval:xprmtsetup}.
Finally, we conclude this work in \secref{nbwpval:conclude}, where
we also describe its limitations as well as some avenues for future research.
The next \secref{nbwpval:prelim} presents some necessary prerequisites
for this work.

\section{Preliminaries} \label{nbwpval:prelim}

Given a stationary policy $\pi$, we are interested in computing
its bias value $v_b(\pi, s), \forall s \in \setname{S}$
as in \eqref{equ:bias_def}.
Since the policy and the value function type are fixed,
we often simplify their notations and write $v(s) \eqdef v_b(\pi, s)$.
This $v(s)$ is then called a state value of $s$.

A parametric state-value approximator is parameterized by a parameter vector
$\vecb{w} \in \setname{W} = \real{\dim(\vecb{w})}$.
Linear parameterization (which is the focus of this work) gives
\begin{equation} \label{equ:vhat_linear}
\hat{v}(s, \vecb{w}) \eqdef \vecb{w}^\intercal \cdot \vecb{f}(s) \approx v(s),
\quad \forall s \in \setname{S}, \forall \vecb{w} \in \setname{W},
\quad \text{equivalently,} \quad
\hat{\vecb{v}}(\vecb{w}) = \mat{F} \vecb{w},
\end{equation}
where $v(s)$ denotes the true (ground-truth) state value of $s$,
$\vecb{f}(s) \in \real{\dim(\vecb{w})}$ the state feature vector of~$s$,
and $\mat{F} \in \real{\setsize{S} \times \dim(\vecb{w})}$
the corresponding state feature matrix
(whose $s$-th row contains $\vecb{f}^\intercal(s)$).
Here, the approximate state value vector $\hat{\vecb{v}} \in \real{\setsize{S}}$
is obtained by stacking all scalar approximations
$\hat{v}(s) \in \real{} , \forall s \in \setname{S}$ on top of each other.

One way to learn $\vecb{w}$ is by minimizing
the weighted mean squared projected Bellman error (MSPBE),
denoted as $\epb(\vecb{w})$ in \eqref{equ:epb}.
This error function is derived based on the identity in
the average-reward Bellman equation, namely
\begin{equation}
v(s_t) = r(s_t) - g + \E{p(\cdot|s_t)}{v(S_{t+1})},
\quad \forall s_t \in \setname{S},\ \text{equivalently,} \quad
\vecb{v} = \vecb{r} - \vecb{g} + \mat{P}\vecb{v} \eqdefr \bo[\vecb{v}],
\label{equ:poisson_avgrew}
\end{equation}
and some projection to obtain the representation of $\bo[\vecb{v}]$
in the parameter space.
Here, the reward function $r(s_t) = \E{\pi}{r(s_t, A_t)}$ corresponds to
the reward vector $\vecb{r} \in \real{\setsize{S}}$,
the gain $g \eqdef v_g(\pi)$ corresponds to the gain vector
$\vecb{g} \eqdef g \vecb{1} \in \real{\setsize{S}}$
(using the vector $\vecb{1}$, whose entries are all 1's), and
$\mat{P} \in \real{\setsize{S} \times \setsize{S}}$ is
the one-step state transition stochastic matrix of an induced MC, whose
$s_t$-th row represents a next-state conditional distribution
$p_\pi(S_{t+1}|s_t) = \sum_{a_t \in \setname{A}} \pi(a_t|s_t) p(S_{t+1}| s_t, a_t)$.
The operator $\bo: \real{\setsize{S}} \mapsto \real{\setsize{S}}$
is termed as the Bellman policy-evaluation operator on
$\vecb{v} \in \real{\setsize{S}}$.

The MSPBE is defined as follows,
\begin{equation}
\epb(\vecb{w})
\eqdef \norm{
    \underbrace{
        \hat{\vecb{v}}(\vecb{w}) - \tilde{\po} \bo \hat{\vecb{v}}(\vecb{w})
    }_{ \Delta_{\hat{\vecb{v}}} }
}_{\tilde{\vecb{p}}}^2
= \Delta_{\hat{\vecb{v}}}^{\!\!\intercal}\ \diagptilde\ \Delta_{\hat{\vecb{v}}}
= \sum_{s \in \setname{S}} \tilde{p}(s)\ \Delta_{\hat{v}[s]}^{\!2},
\label{equ:epb}
\end{equation}
where $\tilde{\vecb{p}} \in \real{\setsize{S}}$ is a vector of probability values
of some state distribution $\tilde{p}(s) = \prob{S = s}, \forall s \in \setname{S}$,
and $\diagptilde$ is an $\setsize{S}$-by-$\setsize{S}$ diagonal matrix with
$\tilde{\vecb{p}}$ along its diagonal.
Here, $\tilde{\po}$ denotes a projection operator such that
\begin{equation} \label{equ:epb_projector}
\tilde{\po} \vecb{v} = \mat{F} \vecb{w}^\diamond,\ \text{where}\
\vecb{w}^\diamond = \argmin_{\vecb{w} \in \setname{W}}
    \Big\{ \norm{\mat{F} \vecb{w} - \vecb{v}}_{\tilde{\vecb{p}}}^2
    = \sum_{s \in \setname{S}} \tilde{p}(s)\
        ( \vecb{f}^{\intercal}(s)\ \vecb{w} - v(s) )^{2} \Big\}.
\end{equation}

At this stage, what is left to fully define $\epb$ is
the state distribution $\tilde{p}$, whose probability values serve as weights
in \eqref{equ:epb} and \eqref{equ:epb_projector}.
For recurrent MCs, one natural choice for $\tilde{p}$ is
the stationary state distribution $p^\star$ that indicates the state visitation
frequency in the long-run.
More precisely,
\begin{equation} \label{equ:pstar_lim}
p^\star(s) = \mathbb{E}_{S_0 \sim \isd}
\Big[ p^\star(s|s_0)
\eqdef \lim_{\tmax \to \infty} \frac{1}{\tmax} \sum_{t=0}^{\tmax - 1} p^t(s | s_0)
= \underbrace{
        \lim_{\tmax \to \infty} p^{\tmax}(s | s_0)
    }_\text{when the MC is aperiodic}
\Big],
\quad \forall s \in \setname{S},
\end{equation}
where $p^\star(s|s_0)$ is the limiting distribution of the stepwise $p^t(s|s_0)$
as $t$ goes to infinity
(nonetheless, $p^\star$ may be achieved in finite time).
Since all states of a recurrent MC are recurrent, its $p^\star$ has
the whole state set as its support, \ie $p^\star(s) > 0, \forall s \in \setname{S}$.
This is advantageous because $\diagpstar$ is positive definite (PD) so that
\eqref{equ:epb} and \eqref{equ:epb_projector} involve
a (weighted Euclidean) norm, and the inverse $\diagpstar^{-1}$ exists.

\begin{assumption} \label{assume:indep_fea}
The state feature matrix $\mat{F}$ has a full column rank.
This is equivalent to saying that all state feature vectors are linearly independent.
\emph{(Remark: this assumption is not required by our proposed seminorm LSTD
in \secref{sec:seminorm_lstd}.)}
\end{assumption}

In fact, setting $\tilde{\vecb{p}} \gets \vecb{p}^\star$
leads to the LSTD method for recurrent MDPs \citep{yu_2009_lspe}.
Whenever \assref{assume:indep_fea} is satisfied, the projection operator
in \eqref{equ:epb} is defined as
$\po \eqdef
\mat{F} (\mat{F}^\intercal \diagpstar \mat{F})^{-1}
\mat{F}^\intercal \diagpstar$.
Then, the optimal parameter value (which minimizes $\epb$) is given by
\begin{equation}
\vecb{w}^* = \mat{X}^{-1} \vecb{y},
\qquad \text{where $\vecb{w}^* \in \setname{W}$ and}
\label{equ:td_fixedpoint}
\end{equation}
\begin{align}
\mat{X}
& = \sum_{s \in \setname{S}} p^\star(s) \sum_{s' \in \setname{S}} p(s'| s)
    \Big[ \vecb{f}(s)  \Big( \vecb{f}(s) - \vecb{f}(s') \Big)^{\!\!\intercal} \Big]
    = \sum_{s \in \setname{S}} p^\star(s) \vecb{f}(s)
    \Big[ \Big(
        \vecb{f}(s) - \sum_{s' \in \setname{S}} p(s'| s) \vecb{f}(s')
        \Big)^{\!\!\intercal} \Big] \notag \\
& = \mat{F}^\intercal \diagpstar (\mat{I} - \mat{P}) \mat{F}
    \qquad \in \real{\dim(\vecb{w}) \times \dim(\vecb{w})},\
    \text{with an identity matrix
        $\mat{I} \in \real{\setsize{S} \times \setsize{S}}$, and}
    \label{equ:x_sampling} \\
\vecb{y}
& = \sum_{s \in \setname{S}} p^\star(s) \sum_{a \in \setname{A}} \pi(a|s)
    \Big[ (r(s, a) - g) \vecb{f}(s) \Big]
    = \sum_{s \in \setname{S}} p^\star(s) \Big[ (r(s) - g) \vecb{f}(s) \Big]
    \notag \\
& = \mat{F}^\intercal \diagpstar (\vecb{r} - \vecb{g})
    \qquad \in \real{\dim(\vecb{w})}. \label{equ:y_sampling}
\end{align}
The minimizer $\vecb{w}^*$ in \eqref{equ:td_fixedpoint} involves $\mat{X}^{-1}$,
which exists whenever \assref{assume:indep_fea} is satisfied, and
the singularity of $(\mat{I} - \mat{P})$ is remedied.
For example, by introducing an eligibility factor $\lambda \in (0, 1)$ such that
$\mat{X}$ then involves $(\mat{I} - \mat{P}^{(\lambda)})$, where
$\mat{P}^{(\lambda)}
\eqdef (1 - \lambda) \sum_{\tau=0}^\infty \lambda^\tau \mat{P}^{\tau+1}$.
Another technique is replacing $\mat{X}$ altogether with its non-singular approximation
by some perturbation \citep[\lmm{7}, \cor{1}]{tsitsiklis_1999_avgtd}.
Note that $(\mat{I} - \mat{P})$ is not invertible \citep[\page{596}]{puterman_1994_mdp}.

An LSTD-based method approximately computes the minimizer $\vecb{w}^*$
in \eqref{equ:td_fixedpoint} by the sample means of $\mat{X}$ and $\vecb{y}$
according to \eqref{equ:x_sampling} and \eqref{equ:y_sampling}, respectively.
That is,
\begin{equation}
\hat{\vecb{w}}^*
= \hat{\mat{X}}^{-1} \hat{\vecb{y}}
= \Big( \frac{1}{\nsample} \sum_{i=1}^{\nsample}
    \vecb{f}(s_i) ( \vecb{f}(s_i) - \vecb{f}(s_i'))^{\intercal} \Big)^{-1}
   \Big( \frac{1}{\nsample} \sum_{i=1}^{\nsample} (r_i - g) \vecb{f}(s_i) \Big),
\label{equ:norm_lstd_sampling}
\end{equation}
where $\nsample$ denotes the number of
state $s_i$, next state $s_i'$, and reward $r_i$ samples,
which are collected by the agent through interaction with its environment.
Typically in practice, $\hat{\mat{X}} \gets \hat{\mat{X}} + \epsilon \mat{I}$ for
some small positive $\epsilon > 0$ in order to ensure
the approximation matrix $\hat{\mat{X}}$ is invertible.

One interesting property of LSTD based on $p^\star$ is
that its minimizer \eqref{equ:td_fixedpoint} is also the solution of
the semi-gradient TD method for recurrent MDPs \citep{tsitsiklis_1999_avgtd}.
This method minimizes the weighted mean squared error (MSE) as follows,
\begin{equation}
\ems(\vecb{w})
\eqdef \sum_{s \in \setname{S}} \tilde{p}(s) [ v(s) - \hat{v}(s; \vecb{w}) ]^2
= \norm{\vecb{v} - \hat{\vecb{v}}(\vecb{w})}_{\tilde{\vecb{p}}}^2
= [\vecb{v} - \hat{\vecb{v}}(\vecb{w})]^\intercal\ \diagptilde\
    [\vecb{v} - \hat{\vecb{v}}(\vecb{w})],
\label{equ:ems}
\end{equation}
where $\vecb{v}$ denotes the true (ground-truth) value, while
$\tilde{\vecb{p}} \gets \vecb{p}^\star$ and $\diagptilde$ are
the state distribution and the corresponding diagonal matrix, respectively.
The semi-gradient TD method follows the stochastic gradient descent (SGD) for
updating its parameter $\vecb{w}$.
For linear pameterization \eqref{equ:vhat_linear} such that
$\nabla \hat{v}(S; \vecb{w}) = \vecb{f}(S)$, the SGD update rule for
an approximation iterate $\hat{\vecb{w}}^* \approx \vecb{w}^*$
is given by
\begin{equation}
\underbrace{
    \hat{\vecb{w}}^* \gets  \hat{\vecb{w}}^* - \alpha \hat{\nabla} \ems(\hat{\vecb{w}}^*),
}_\text{\cf the LSTD estimator in \eqref{equ:norm_lstd_sampling}}
\quad \text{with}\
\hat{\nabla} \ems(\hat{\vecb{w}}^*)
\eqdef - \Big(
    \underbrace{
        r(s) - g + \hat{v}(s'; \hat{\vecb{w}}^*)
    }_\text{$\approx v(s)$ based on \eqref{equ:poisson_avgrew}}
    - \hat{v}(s; \hat{\vecb{w}}^*) \Big) \vecb{f}(s),
\label{equ:semigrad_update}
\end{equation}
where $\alpha$ is some positive learning rate and $\hat{\nabla} \ems$ is
the stochastic estimate of the gradient of $\ems$ \eqref{equ:ems} by
one current state $s$ and one next state $s'$ sampled from
$p^\star$ and $p(\cdot|s)$, respectively.
Since the true $v(s)$ is unknown in RL, an approximation is substituted for it
in \eqref{equ:semigrad_update} only after taking the gradient
(hence, the term semi-gradient\footnote{
    In contrast, LSTD methods are based on (true) gradients of
    the error function $\epb$.
    This is possible since $\nabla \epb$ does not involve the true value $v(s)$,
    see \eqref{equ:epb_grad}.
}).
Such approximation is based on the Bellman equation \eqref{equ:poisson_avgrew}.
It can be shown that the approximation iterate
$\hat{\vecb{w}}^*$ \eqref{equ:semigrad_update} converges to the LSTD's minimizer
$\vecb{w}^*$ in \eqref{equ:td_fixedpoint},
for which $\vecb{w}^*$ is called the TD fixed point \citep[\page{206}]{sutton_2018_irl}.
Note that a semi-gradient TD method needs the specification of
the learning rate $\alpha$ and the initial value for $\hat{\vecb{w}}^*$.

\section{Seminorm LSTD approximators} \label{sec:seminorm_lstd}

In this section, we present an LSTD approximator that minimizes $\epb$
\eqref{equ:epb}, whose state distribution $\tilde{p}$ does not necessarily
have the whole state set as its support,
\ie $\tilde{p}(s) \ge 0, \forall s \in \setname{S}$.
This $\tilde{p}$ induces a positive semidefinite (PSD) diagonal matrix $\diagptilde$,
hence a seminorm $\epb$.
Consequently, minimizing $\epb$ and deriving its projector $\tilde{\po}$
\eqref{equ:epb_projector} require solving $\diagptilde$-seminorm LS problems.
We call the corresponding state-value approximator based on such $\epb$
as \emph{a seminorm LSTD}.

A seminorm LSTD is useful for unichain MCs with multiple transient and
recurrent states (and with certain reward structures).
For example, since each state type has different timing
(transient states are visited at the outset before absorption, whereas
recurrent states in the long-run),
a proper $\tilde{p}$ is different for each type so that
the support of a proper type-specific $\tilde{p}$ only contains a subset of
the state set; inducing a seminorm $\epb$.
It is proper in that it provides reasonable weighting for the state-wise error terms
in \eqref{equ:epb} and \eqref{equ:epb_projector}, and
that it enables state sampling in \eqref{equ:x_sampling} and \eqref{equ:y_sampling}.
More importantly, a seminorm LSTD facilitates the derivation of
a general approximation procedure (see \secref{sec:unify_lstd_based}).

The main result of this Section is a sampling-enabler expression for the minimizer
of $\epb$ of a seminorm LSTD.
It is presented in \thmref{thm:epb_minimizer} (\secref{sec:samplingenabler_minimizer}).
For that, the preceding \secref{sec:seminorm_part} contains
the projection operator for the seminorm $\epb$ and two necessary lemmas for the minimizer.

\subsection{Necessary components for the error function and the minimizer}
\label{sec:seminorm_part}

We begin with the projection operator $\tilde{\po}$ that involves
the $\diagptilde$-seminorm.
It is stated in \lmmref{lmm:projector} below.
Recall that $\tilde{\po}$ projects any value $\vecb{v}$ onto the space of
representable parameterized approximators.
To proceed, we need the following \defref{def:mp_pinv}.
\begin{definition} \label{def:mp_pinv}
Given a matrix $\mat{A} \in \real{m \times n}$,
then its Moore-Penrose pseudoinverse $\mat{A}^\dagger \in \real{n \times m}$
is the unique matrix such that
(i) $\mat{A} \mat{A}^\dagger \mat{A} =\mat{A}$,
(ii) $\mat{A}^\dagger \mat{A} \mat{A}^\dagger =\mat{A}^\dagger$,
(iii) $(\mat{A} \mat{A}^\dagger)^{\!\intercal} = \mat{A} \mat{A}^\dagger$, and
(iv) $(\mat{A}^\dagger \mat{A})^{\!\intercal} = \mat{A}^\dagger \mat{A}$.
See \citet[\deff{1.1.3}]{campbell_2009_ginv}.
\end{definition}

\begin{lemma} \label{lmm:projector}
The projection operator $\tilde{\po}$ involving the $\diagptilde$-seminorm
is given by
\begin{equation*}
\tilde{\po} = \mat{F} \mat{Z}^\dagger \mat{F}^\intercal \diagptilde,
\quad \text{where}\
\mat{Z} \eqdef \mat{F}^\intercal \diagptilde \mat{F}
= \E{S \sim \tilde{p}}{\vecb{f}(S) \vecb{f}(S)^\intercal}
\in \real{\dim(\vecb{w}) \times \dim(\vecb{w})}.
\end{equation*}
Here, the state distribution $\tilde{p}$ may have zero probabilities for some states,
\ie $\tilde{p}(s) \ge 0, \forall s \in \setname{S}$.
The superscript $\dagger$ indicates the Moore-Penrose pseudoinverse
(\defref{def:mp_pinv}).
\end{lemma}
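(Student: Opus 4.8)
The plan is to solve the $\diagptilde$-seminorm least-squares problem defining $\tilde{\po}$ in \eqref{equ:epb_projector} directly and read off the operator. Writing the objective as $\norm{\mat{F}\vecb{w} - \vecb{v}}_{\tilde{\vecb{p}}}^2 = (\mat{F}\vecb{w} - \vecb{v})^\intercal \diagptilde (\mat{F}\vecb{w} - \vecb{v})$, this is a convex quadratic in $\vecb{w}$ whose Hessian $2\mat{F}^\intercal\diagptilde\mat{F} = 2\mat{Z}$ is PSD (because $\diagptilde$ is PSD). Hence the first-order stationarity condition is both necessary and sufficient for a global minimizer, and setting the gradient to zero yields the normal equations
\begin{equation*}
\mat{Z}\vecb{w} = \mat{F}^\intercal\diagptilde\vecb{v} \eqdefr \vecb{b}.
\end{equation*}
I would then claim that $\vecb{w}^\diamond = \mat{Z}^\dagger \vecb{b}$ solves this system, so that $\tilde{\po}\vecb{v} = \mat{F}\vecb{w}^\diamond = \mat{F}\mat{Z}^\dagger\mat{F}^\intercal\diagptilde\vecb{v}$, which is exactly the asserted formula for $\tilde{\po}$.

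The step that needs care, and which I expect to be the main obstacle, is that $\mat{Z}$ is singular when $\diagptilde$ is only PSD, so I must first show the normal equations are consistent, \ie $\vecb{b} \in \mathrm{range}(\mat{Z})$, before it is legitimate to invoke $\mat{Z}^\dagger$. To this end I would factor the PSD diagonal matrix as $\diagptilde = \diagptildesqrt\diagptildesqrt$, with $\diagptildesqrt$ the diagonal matrix carrying $\sqrt{\tilde{p}(s)}$, and set $\mat{M} \eqdef \diagptildesqrt\mat{F}$. Then $\mat{Z} = \mat{M}^\intercal\mat{M}$ and $\vecb{b} = \mat{M}^\intercal(\diagptildesqrt\vecb{v})$. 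Invoking the standard identity $\mathrm{range}(\mat{M}^\intercal\mat{M}) = \mathrm{range}(\mat{M}^\intercal)$ (equivalently $\nullspace(\mat{M}^\intercal\mat{M}) = \nullspace(\mat{M})$) immediately gives $\vecb{b} \in \mathrm{range}(\mat{M}^\intercal) = \mathrm{range}(\mat{Z})$, so the system is consistent.

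Given consistency, I would verify that $\mat{Z}^\dagger\vecb{b}$ indeed solves the normal equations: by \defref{def:mp_pinv} the matrix $\mat{Z}\mat{Z}^\dagger$ is symmetric and idempotent, hence the orthogonal projector onto $\mathrm{range}(\mat{Z})$; since $\vecb{b} \in \mathrm{range}(\mat{Z})$, we get $\mat{Z}(\mat{Z}^\dagger\vecb{b}) = \mat{Z}\mat{Z}^\dagger\vecb{b} = \vecb{b}$, as required. This confirms that $\vecb{w}^\diamond = \mat{Z}^\dagger\mat{F}^\intercal\diagptilde\vecb{v}$ is a minimizer and yields $\tilde{\po} = \mat{F}\mat{Z}^\dagger\mat{F}^\intercal\diagptilde$. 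The one subtlety worth flagging is non-uniqueness: when $\mat{Z}$ is singular, any $\vecb{w}^\diamond + \vecb{z}$ with $\vecb{z} \in \nullspace(\mat{Z})$ is also a minimizer, but since $\nullspace(\mat{Z}) = \nullspace(\mat{M})$ we have $\mat{F}\vecb{z} \in \nullspace(\diagptilde)$, so all minimizers agree in the $\diagptilde$-seminorm; the pseudoinverse simply selects the minimum-Euclidean-norm representative, which is the canonical choice here and makes $\tilde{\po}$ well defined. Finally, the expectation form $\mat{Z} = \E{S\sim\tilde{p}}{\vecb{f}(S)\vecb{f}(S)^\intercal}$ follows by expanding $\mat{F}^\intercal\diagptilde\mat{F} = \sum_{s \in \setname{S}} \tilde{p}(s)\vecb{f}(s)\vecb{f}(s)^\intercal$.
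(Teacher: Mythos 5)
Your proof is correct, and it reaches the paper's formula by a more self-contained route than the paper does. Both arguments share the same skeleton: factor $\diagptilde = \diagptildesqrt\diagptildesqrt$, reduce the seminorm problem to a Euclidean one, and pass through the normal equations $\mat{Z}\vecb{w} = \mat{F}^\intercal\diagptilde\vecb{v}$. The difference is in how existence of the pseudoinverse solution is justified. The paper works at the level of the transformed system $\diagptildesqrt\mat{F}\vecb{w} = \diagptildesqrt\vecb{v}$ and \emph{imports} the general form of all least-squares solutions, $\vecb{w}^\diamond = (\diagptildesqrt\mat{F})^\dagger\diagptildesqrt\vecb{v} + [\mat{I} - (\diagptildesqrt\mat{F})^\dagger(\diagptildesqrt\mat{F})]\vecb{c}$, from Ben-Israel and Greville, then rewrites the leading term via the identity $\mat{A}^\dagger = (\mat{A}^\intercal\mat{A})^\dagger\mat{A}^\intercal$ and sets the free parameter $\vecb{c} = \vecb{0}$. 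You instead prove from first principles that the normal equations are consistent (via $\mathrm{range}(\mat{M}^\intercal\mat{M}) = \mathrm{range}(\mat{M}^\intercal)$ with $\mat{M} = \diagptildesqrt\mat{F}$) and that $\mat{Z}^\dagger\vecb{b}$ solves them, using only the defining Moore--Penrose properties to show $\mat{Z}\mat{Z}^\dagger$ is the orthogonal projector onto $\mathrm{range}(\mat{Z})$. What the paper's route buys is the explicit parameterization of the \emph{entire} solution set, which is what lets it remark that the chosen solution is not the minimal $\diagptilde$-seminorm one; what your route buys is independence from the quoted general-solution theorem, plus two observations the paper does not make explicit: that $\mat{Z}^\dagger\vecb{b}$ is the minimum-\emph{Euclidean}-norm minimizer (compatible with, not contradicting, the paper's remark about the $\diagptilde$-seminorm), and that any two minimizers $\vecb{w}^\diamond, \vecb{w}^\diamond + \vecb{z}$ with $\vecb{z} \in \nullspace(\mat{Z})$ satisfy $\mat{F}\vecb{z} \in \nullspace(\diagptilde)$, so the resulting projections differ only in seminorm-zero directions --- a cleaner account of why fixing one representative (the paper's $\vecb{c} = \vecb{0}$, your pseudoinverse choice) is harmless.
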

\begin{proof}
The projection operator $\tilde{\po}$ is a matrix that satisfies
$\tilde{\po} \vecb{v} = \mat{F} \vecb{w}^\diamond$, where
\begin{align*}
\vecb{w}^\diamond
& = \argmin_{\vecb{w} \in \setname{W}} \Big\{
    \norm{ \{\hat{\vecb{v}} = \mat{F} \vecb{w} \} - \vecb{v}}_{\diagptilde}^2
    = \sum_{s \in \setname{S}} [ \tilde{p}^{\frac{1}{2}}(s) ]^2\
        [ \hat{v}(s; \vecb{w}) - v(s) ]^2
    = \norm{ \diagptildesqrt (\mat{F} \vecb{w} - \vecb{v}) }_{2}^2
\Big\}.
\end{align*}
Finding $\vecb{w}^\diamond$ amounts to solving for
\begin{align}
\text{the $\diagptilde$-seminorm LS solutions of}\quad
    \mat{F} \vecb{w} &= \vecb{v}, \text {or equivalently,} \notag \\
\text{the Euclidean-norm LS solutions of}\quad
    \diagptildesqrt \mat{F} \vecb{w} & = \diagptildesqrt \vecb{v}.
    \label{equ:euclid_ls}
\end{align}
The latter has the following general form \citep[\page{106}]{ben_2003_ginv},
\begin{align}
\vecb{w}^\diamond
& = [\diagptildesqrt \mat{F}]^\dagger \diagptildesqrt \vecb{v}
    + [\mat{I} - (\diagptildesqrt \mat{F})^\dagger (\diagptildesqrt \mat{F})] \vecb{c} \notag \\
& = \underbrace{
        [(\diagptildesqrt \mat{F})^\intercal \diagptildesqrt \mat{F}]^\dagger
        (\diagptildesqrt \mat{F})^\intercal \diagptildesqrt \vecb{v}
    }_{[\mat{F}^\intercal \diagptilde \mat{F}]^\dagger \mat{F}^\intercal \diagptilde \vecb{v}}
    + [\mat{I} - ((\diagptildesqrt \mat{F})^\intercal \diagptildesqrt \mat{F})^\dagger
        ((\diagptildesqrt \mat{F})^\intercal \diagptildesqrt \mat{F})] \vecb{c},
\label{equ:general_sol}
\end{align}
for an arbitrary vector $\vecb{c} \in \real{\dim(\vecb{w})}$.
Since the gradient at $\vecb{w}^\diamond$ vanishes (a necessary condition for the minimizer),
it can be shown that $\vecb{w}^\diamond$ is also the solution of the normal equation of
\eqref{equ:euclid_ls} as in \citep[\thm{2.1.2}]{campbell_2009_ginv}. %
That is,
\begin{align*}
\nabla \norm{ \mat{F} \vecb{w} - \vecb{v} }_{\diagptilde}^2
= 2 \mat{F}^\intercal \diagptilde (\mat{F} \vecb{w}^\diamond - \vecb{v})
    & = \vecb{0} \tag{Whenever $\vecb{w} = \vecb{w}^\diamond$} \\
\Longleftrightarrow
    \mat{F}^\intercal \diagptilde \mat{F} \vecb{w}^\diamond
    & =  \mat{F}^\intercal \diagptilde \vecb{v}
    \tag{\cf \eqref{equ:general_sol}} \\
\Longleftrightarrow
    (\diagptildesqrt \mat{F})^\intercal \diagptildesqrt \mat{F} \vecb{w}^\diamond
    & = (\diagptildesqrt \mat{F})^\intercal \diagptildesqrt \vecb{v}.
\tag{The normal equation of \eqref{equ:euclid_ls}}
\end{align*}
By setting $\vecb{c}$ to zero in \eqref{equ:general_sol},
we obtain one $\diagptilde$-seminorm LS solution, denoted as $\tilde{\vecb{w}}^\diamond$.
The projection then takes the form of
\begin{equation*}
\tilde{\po} \vecb{v} = \mat{F} \tilde{\vecb{w}}^\diamond
= \mat{F} \{ [\mat{F}^\intercal \diagptilde \mat{F}]^\dagger
    \mat{F}^\intercal \diagptilde \vecb{v} \}
= \mat{F} \{ \mat{Z}^\dagger \mat{F}^\intercal \diagptilde \vecb{v} \}, \quad
\text{hence,}\
\tilde{\po} = \mat{F} \mat{Z}^\dagger \mat{F}^\intercal \diagptilde.
\end{equation*}
Note that this $\tilde{\vecb{w}}^\diamond$ is not
the minimal $\diagptilde$-seminorm $\diagptilde$-LS solution, \ie
$\tilde{\vecb{w}}^\diamond
\ne \argmin_{\vecb{w}^\diamond} \norm{\vecb{w}^\diamond}_{\diagptilde}$,
see \citet[\thm{2}]{proszynski_1995_snls}.
This concludes the proof.
\end{proof}

The next \lmmref{lmm:z_zpinv_psd} describes the relevant properties of
matrix $\mat{Z}$, which emerges during the foregoing derivation of $\tilde{\po}$.
This is essential because $\mat{Z}$ and its pseudoinverse $\mat{Z}^\dagger$
(along with its matrix square root) play an important role in the derivation of
the minimizer of $\epb$ (see \thmref{thm:epb_minimizer}).
\begin{lemma} \label{lmm:z_zpinv_psd}
These real matrices $\mat{Z}$, $\mat{Z}^\dagger$, and $\mat{Z}^{\nicefrac{\dagger}{2}}$
are symmetric positive semidefinite (PSD).
Here, $\mat{Z} \eqdef \mat{F}^\intercal \diagptilde \mat{F}$, and
$\mat{Z}^{\nicefrac{\dagger}{2}}
\eqdef (\mat{Z}^\dagger)^{\frac{1}{2}} = \sqrt{\mat{Z}^\dagger}$,
which is the matrix square root of $\mat{Z}^\dagger$.
\end{lemma}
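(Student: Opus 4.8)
The plan is to reduce all three claims to the spectral decomposition of $\mat{Z}$, since $\mat{Z}$ being real symmetric guarantees such a decomposition exists and makes the other two matrices easy to read off. First I would establish the base case. Because $\diagptilde$ is diagonal with nonnegative diagonal entries $\tilde{p}(s) \ge 0$, its square root $\diagptildesqrt$ is a well-defined real diagonal matrix and $\diagptilde = \diagptildesqrt \diagptildesqrt$. Writing $\mat{M} \eqdef \diagptildesqrt \mat{F}$, we obtain $\mat{Z} = \mat{F}^\intercal \diagptilde \mat{F} = \mat{M}^\intercal \mat{M}$. Symmetry is then immediate from $(\mat{M}^\intercal \mat{M})^\intercal = \mat{M}^\intercal \mat{M}$, and PSD-ness from $\vecb{x}^\intercal \mat{Z} \vecb{x} = \norm{\mat{M}\vecb{x}}_2^2 \ge 0$ for every $\vecb{x} \in \real{\dim(\vecb{w})}$.

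Next, because $\mat{Z}$ is real symmetric PSD, the spectral theorem yields $\mat{Z} = \mat{U}\mat{\Lambda}\mat{U}^\intercal$ with $\mat{U}$ orthogonal ($\mat{U}^\intercal \mat{U} = \mat{U}\mat{U}^\intercal = \mat{I}$) and $\mat{\Lambda} = \diag(\lambda_1, \dots, \lambda_n)$ collecting the eigenvalues $\lambda_i \ge 0$, where $n = \dim(\vecb{w})$. I would then define $\mat{\Lambda}^\dagger = \diag(\lambda_1^\dagger, \dots, \lambda_n^\dagger)$ with $\lambda_i^\dagger = 1/\lambda_i$ when $\lambda_i > 0$ and $\lambda_i^\dagger = 0$ otherwise, and claim $\mat{Z}^\dagger = \mat{U}\mat{\Lambda}^\dagger\mat{U}^\intercal$. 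This claim I would verify by checking the four defining identities of \defref{def:mp_pinv} directly: using $\mat{U}^\intercal \mat{U} = \mat{I}$, each reduces to the scalar facts $\lambda_i \lambda_i^\dagger \lambda_i = \lambda_i$ and $\lambda_i^\dagger \lambda_i \lambda_i^\dagger = \lambda_i^\dagger$, together with the obvious symmetry of the diagonal products $\mat{\Lambda}\mat{\Lambda}^\dagger$ and $\mat{\Lambda}^\dagger\mat{\Lambda}$. By the uniqueness asserted in \defref{def:mp_pinv}, this expression is the Moore-Penrose pseudoinverse. Symmetry of $\mat{Z}^\dagger$ is then read off the expression, and PSD-ness follows because its eigenvalues are exactly the $\lambda_i^\dagger \ge 0$.

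Finally, for $\mat{Z}^{\nicefrac{\dagger}{2}} = (\mat{Z}^\dagger)^{1/2}$ I would work in the same eigenbasis: set $\mat{Z}^{\nicefrac{\dagger}{2}} = \mat{U}\, \diag(\sqrt{\lambda_1^\dagger}, \dots, \sqrt{\lambda_n^\dagger})\, \mat{U}^\intercal$, which is well-defined since every $\lambda_i^\dagger \ge 0$, and confirm that it squares to $\mat{Z}^\dagger$; symmetry and PSD-ness again follow from its nonnegative eigenvalues $\sqrt{\lambda_i^\dagger}$. The one point that needs care, and the only place where something could go wrong, is consistency of convention: one must confirm that this eigenbasis construction is indeed the object the paper denotes $\sqrt{\mat{Z}^\dagger}$, \ie that ``$\sqrt{\cdot}$'' means the unique symmetric PSD square root of a symmetric PSD matrix rather than an arbitrary factor. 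Once that convention is fixed the PSD square root is unique and coincides with the eigenbasis construction, so I do not anticipate a genuine obstacle beyond keeping the pseudoinverse-versus-inverse distinction straight on the zero eigenvalues.
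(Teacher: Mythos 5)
Your proof is correct, and it differs from the paper's in a useful way in the middle step. Both proofs open identically: writing $\mat{Z} = (\diagptildesqrt\mat{F})^\intercal(\diagptildesqrt\mat{F})$ as a Gram matrix gives symmetry and PSD-ness at once. For $\mat{Z}^\dagger$, however, the paper takes a detour through the singular value decomposition: it argues that because $\mat{Z}$ is symmetric PSD its eigendecomposition $\mat{U}\diageig\mat{U}^\intercal$ is a valid SVD (via the observation $\mat{Z}\mat{Z}^\intercal = \mat{Z}^\intercal\mat{Z} = \mat{Z}^2$ and the matching of eigenvectors and eigenvalues), and then invokes the standard SVD recipe for the pseudoinverse to get $\mat{Z}^\dagger = \mat{U}\diageig^\dagger\mat{U}^\intercal$, finally rewriting this as a Gram matrix to conclude PSD-ness. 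You bypass the SVD machinery entirely: you posit $\mat{U}\mat{\Lambda}^\dagger\mat{U}^\intercal$ as a candidate, check the four Penrose identities of \defref{def:mp_pinv} directly (each reduces to scalar identities on the eigenvalues), and invoke the uniqueness clause of that definition. This is more self-contained -- it needs neither the SVD pseudoinverse formula nor the equivalence of eigendecomposition and SVD -- at the cost of a routine four-part verification. Your treatment of the square root is also slightly better than the paper's: you take $\zpinvsqrt = \mat{U}\,\diag\big(\sqrt{\lambda_1^\dagger},\dots,\sqrt{\lambda_n^\dagger}\big)\,\mat{U}^\intercal$, which is genuinely symmetric and squares to $\mat{Z}^\dagger$, whereas the paper's closing line identifies $\zpinvsqrt$ with $\diageig^{\nicefrac{\dagger}{2}}\mat{U}^\intercal$, which is a factor of the Gram product $\mat{A}^\intercal\mat{A} = \mat{Z}^\dagger$ but is not itself symmetric, so it cannot be the unique symmetric PSD square root the lemma statement refers to; your form is the corrected one ($\mat{U}$ must remain on the left). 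Your final caution about the convention for $\sqrt{\cdot}$ is exactly the right point, and it is settled by the paper's own proof text, which stipulates the unique symmetric PSD square root.
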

\begin{proof}
First, $\mat{Z}$ involves a PSD diagonal matrix
$\diagptilde = (\diagptildesqrt)^2$ with its unique matrix square root $\diagptildesqrt$.
Let $\mat{G} \eqdef \diagptildesqrt \mat{F}$.
Expressing $\mat{Z}$ as a Gram matrix gives
$\mat{Z}^{\intercal}
= (\mat{G}^\intercal \mat{G})^\intercal = \mat{G}^\intercal \mat{G} = \mat{Z}$,
which shows that $\mat{Z}$ is symmetric.
Moreover,
\begin{equation}
\vecb{u}^\intercal \mat{Z} \vecb{u}
= \vecb{u}^\intercal \mat{F}^\intercal \diagptildesqrt \diagptildesqrt \mat{F} \vecb{u}
= (\diagptildesqrt \mat{F} \vecb{u})^\intercal (\diagptildesqrt \mat{F} \vecb{u})
= \norm{\diagptildesqrt \mat{F} \vecb{u}}_2^2 \ge 0,
\quad \forall \vecb{u} \in \real{\dim(\vecb{f})}.
\label{equ:z_psd}
\end{equation}
Hence, $\mat{Z}$ is symmetric positive semidefinite (PSD).

Second, let the singular value decomposition (SVD) of $\mat{Z}$ is
given by $\mat{M} \diagsvd \mat{N}^{\intercal}$.
Then, we have
\begin{align*}
\mat{Z} \mat{Z}^{\intercal}
& = \mat{M} \diagsvd \mat{N}^{\intercal} \mat{N} \diagsvd \mat{M}^{\intercal}
  = \mat{M} \diagsvds \mat{M}^{\intercal},
  \tag{Since $\mat{N}$ is orthogonal} \\
\mat{Z}^{\intercal} \mat{Z}
& = \mat{N} \diagsvd \mat{M}^{\intercal} \mat{M} \diagsvd \mat{N}^{\intercal}
  = \mat{N} \diagsvds \mat{N}^{\intercal}
  \tag{Since $\mat{M}$ is orthogonal}.
\end{align*}
Because $\mat{Z}$ is symmetric (hence, normal), we have
$\mat{Z} \mat{Z}^{\intercal} = \mat{Z}^{\intercal} \mat{Z} = \mat{Z}^2$.
Thus, $\mat{M} = \mat{N}$ whose columns are the orthogonal eigenvectors of $\mat{Z}^2$,
which are then normalized to become unit vectors in order to have
an orthogonal matrix $\mat{M}$.
By SVD, the singular value diagonal matrix $\diagsvd$ contains
the squared roots of eigenvalues of $\mat{Z}^2$.

Let $\mu$ be the eigenvalue of $\mat{Z}$ with eigenvector $\vecb{u}$, then
\begin{equation*}
\mat{Z} \vecb{u} = \mu \vecb{u},\quad \text{and}\quad
\mat{Z}^2 \vecb{u} = \mat{Z} (\mat{Z} \vecb{u}) = \mat{Z} (\mu \vecb{u})
    = \mu (\mat{Z} \vecb{u}) = \mu (\mu \vecb{u}) = \mu^2 \vecb{u},
\end{equation*}
which shows that $\vecb{u}$ is an eigenvector of $\mat{Z}^2$
with the eigenvalue $\mu^2$.
This holds for all eigenvalues of~$\mat{Z}$, which become the diagonal entries
of $\diagsvd$ (such eigenvalues are non-negative since $\mat{Z}$ is PSD).
Moreover, because $\mat{Z}$ is symmetric,
both $\mat{Z}$ and $\mat{Z}^2$ have the same set of orthogonal eigenvectors,
which becomes the columns of $\mat{M}$.
Thus, the eigen (spectral) decomposition (EigD) of $\mat{Z}$, namely
$\mat{U} \diageig \mat{U}^\intercal$, is also a valid SVD.
Consequently,
\begin{equation} \label{equ:zpinv_breakdown}
\mat{Z}^\dagger
= (\underbrace{\mat{M} \diagsvd \mat{N}^{\intercal}}_\text{SVD of $\mat{Z}$})^\dagger
= (\underbrace{\mat{U} \diageig \mat{U}^\intercal}_\text{EigD of $\mat{Z}$})^\dagger
= \mat{U} \diageig^\dagger \mat{U}^\intercal
= \mat{U} \diageig^{\nicefrac{\dagger}{2}} \diageig^{\nicefrac{\dagger}{2}} \mat{U}^\intercal
= (\diageig^{\nicefrac{\dagger}{2}} \mat{U}^\intercal)^\intercal
    (\diageig^{\nicefrac{\dagger}{2}} \mat{U}^\intercal),
\end{equation}
where $\diageig^\dagger$ is obtained by taking the reciprocal of non-zeroes entries
of $\diageig$.
Thus, $\mat{Z}^\dagger$ can be expressed as a Gram matrix,
which is always PSD as shown before in \eqref{equ:z_psd}.
Since $\mat{Z}^\dagger$ is PSD, there exists exactly one (symmetric) PSD matrix
$\zpinvsqrt$ such that $\mat{Z}^\dagger = \zpinvsqrt \zpinvsqrt$.
From \eqref{equ:zpinv_breakdown} above, we have
$\zpinvsqrt = \diageig^{\nicefrac{\dagger}{2}} \mat{U}^\intercal$.
This concludes the proof, whose alternatives can be found in
\citetext{\citealp[\cor{3}]{lewis_1968_psd}; \citealp[\thm{20.5.3}]{harville_1997_mat}}.
\end{proof}

\subsection{The sampling-enabler expression for the minimizer}
\label{sec:samplingenabler_minimizer}

The core component of a seminorm LSTD is the minimizer $\tilde{\vecb{w}}^*$
of its error $\epb$, which involves a PSD matrix $\diagptilde$.
In particular for model-free RL, we need an expression of $\tilde{\vecb{w}}^*$
that enables sampling based approximation, akin to \eqref{equ:td_fixedpoint}.
By utilizing \lmmrefand{lmm:projector}{lmm:z_zpinv_psd} from
the previous \secref{sec:seminorm_part},
we are now ready to derive such a sampling-enabler expression.
It is stated in the following \thmref{thm:epb_minimizer}.
\begin{theorem} \label{thm:epb_minimizer}

One minimizer of the error $\epb(\vecb{w})$ in \eqref{equ:epb}, which involves
a state distribution $\tilde{p}$ with
$\tilde{p}(s) \ge 0, \forall s \in \setname{S}$
(hence, $\epb$ is a seminorm with a PSD diagonal matrix $\diagptilde$),
is given by
\begin{equation} \label{equ:wstar_seminorm}
\tilde{\vecb{w}}^*
= (\mat{X}^\intercal \mat{Z}^\dagger \mat{X})^\dagger
    \mat{X}^\intercal \mat{Z}^\dagger \vecb{y}
= (\mat{Z}^{\nicefrac{\dagger}{2}} \mat{X})^\dagger
    \mat{Z}^{\nicefrac{\dagger}{2}} \vecb{y},
\qquad \text{where}
\end{equation}
\begin{align*}
\mat{X}
& \eqdef \E{S \sim \tilde{p}, S' \sim p(\cdot| s)}{
    \vecb{f}(S) \big( \vecb{f}(S) - \vecb{f}(S') \big)^{\!\intercal}},
    \tag{with state feature $\vecb{f}(s)$ and one-step transition $p$} \\
\mat{Z}
& \eqdef \E{S \sim \tilde{p}}{\vecb{f}(S) \vecb{f}(S)^\intercal}, \quad \text{and}
    \tag{with state feature $\vecb{f}(s)$ as above} \\
\vecb{y}
& \eqdef \E{S \sim \tilde{p}}{\big( r(S) - g \big) \vecb{f}(S)}.
    \tag{with state reward $r(s)$, gain $g$ and $\vecb{f}(s)$ as above}
\end{align*}

Here, $p(\cdot| s)$ denotes the next state conditional distribution
(\ie the one-step state transition distribution given the current state $s$).
Note that we abuse the notations $\mat{X}$ and $\vecb{y}$, which are also used
in \eqref{equ:x_sampling} and \eqref{equ:y_sampling} but with
a different state distribution.

\end{theorem}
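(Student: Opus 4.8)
The plan is to reduce $\epb$ to a compact quadratic form in $\vecb{w}$ and then recognize it as an ordinary Euclidean least-squares objective whose minimizer is standard. First I would substitute $\hat{\vecb{v}}(\vecb{w}) = \mat{F}\vecb{w}$, the Bellman image $\bo\hat{\vecb{v}}(\vecb{w}) = \vecb{r} - \vecb{g} + \mat{P}\mat{F}\vecb{w}$ from \eqref{equ:poisson_avgrew}, and the projector $\tilde{\po} = \mat{F}\mat{Z}^\dagger\mat{F}^\intercal\diagptilde$ from \lmmref{lmm:projector} into the residual $\Delta_{\hat{\vecb{v}}}$. Factoring $\mat{F}$ out on the left and using the matrix identities $\mat{F}^\intercal\diagptilde\mat{F} = \mat{Z}$, $\mat{F}^\intercal\diagptilde(\vecb{r}-\vecb{g}) = \vecb{y}$, and $\mat{F}^\intercal\diagptilde\mat{P}\mat{F} = \mat{Z} - \mat{X}$ (the matrix forms of the stated expectation definitions of $\mat{Z}$, $\vecb{y}$, and $\mat{X}$, analogous to \eqref{equ:x_sampling} and \eqref{equ:y_sampling}), I would write $\Delta_{\hat{\vecb{v}}} = \mat{F}\vecb{\eta}$ with $\vecb{\eta} = (\mat{I} - \mat{Z}^\dagger\mat{Z})\vecb{w} + \mat{Z}^\dagger(\mat{X}\vecb{w} - \vecb{y})$, so that $\epb(\vecb{w}) = \vecb{\eta}^\intercal\mat{Z}\vecb{\eta}$.

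The crux is the next step: contracting $\vecb{\eta}$ against $\mat{Z}$ to annihilate the null-space term $(\mat{I} - \mat{Z}^\dagger\mat{Z})\vecb{w}$, which appears precisely because the seminorm projector need not fix all of the range of $\mat{F}$. Using only the Moore--Penrose properties of \defref{def:mp_pinv}, namely $\mat{Z}\mat{Z}^\dagger\mat{Z} = \mat{Z}$ and $\mat{Z}^\dagger\mat{Z}\mat{Z}^\dagger = \mat{Z}^\dagger$, together with the symmetry of $\mat{Z}$ and $\mat{Z}^\dagger$ from \lmmref{lmm:z_zpinv_psd} (whence $\mat{Z}\mat{Z}^\dagger = \mat{Z}^\dagger\mat{Z}$), I would establish $\mat{Z}(\mat{I} - \mat{Z}^\dagger\mat{Z}) = \mat{0}$ and $\mat{Z}\mat{Z}^\dagger\mat{Z}^\dagger = \mat{Z}^\dagger$. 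Together these collapse the objective to the compact form
\begin{equation*}
\epb(\vecb{w}) = (\mat{X}\vecb{w} - \vecb{y})^\intercal\mat{Z}^\dagger(\mat{X}\vecb{w} - \vecb{y}).
\end{equation*}
This is the step I expect to be the main obstacle, since the reverse product law for pseudoinverses fails and one must rely strictly on the valid identities and on symmetry, rather than on any order-reversing rule; the mildly surprising point is that the null-space component of $\vecb{\eta}$ contributes nothing to the error once contracted with $\mat{Z}$.

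Finally, I would invoke the factorization $\mat{Z}^\dagger = \zpinvsqrt\zpinvsqrt$ with $\zpinvsqrt$ symmetric PSD from \lmmref{lmm:z_zpinv_psd} to rewrite the objective as $\epb(\vecb{w}) = \norm{\zpinvsqrt\mat{X}\vecb{w} - \zpinvsqrt\vecb{y}}_2^2$, an ordinary Euclidean least-squares problem in $\vecb{w}$. Its minimal-Euclidean-norm minimizer is the standard pseudoinverse solution $\tilde{\vecb{w}}^* = (\zpinvsqrt\mat{X})^\dagger\zpinvsqrt\vecb{y}$, which is the second form in \eqref{equ:wstar_seminorm}. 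To recover the first form, I would set $\mat{A} \eqdef \zpinvsqrt\mat{X}$ and apply the elementary identity $\mat{A}^\dagger = (\mat{A}^\intercal\mat{A})^\dagger\mat{A}^\intercal$ (immediate from the SVD of $\mat{A}$); since $\mat{A}^\intercal\mat{A} = \mat{X}^\intercal\mat{Z}^\dagger\mat{X}$ and $\mat{A}^\intercal\zpinvsqrt = \mat{X}^\intercal\mat{Z}^\dagger$, both using the symmetry of $\zpinvsqrt$, this yields $\tilde{\vecb{w}}^* = (\mat{X}^\intercal\mat{Z}^\dagger\mat{X})^\dagger\mat{X}^\intercal\mat{Z}^\dagger\vecb{y}$ and so establishes the equivalence of the two expressions.
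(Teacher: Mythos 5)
Your proposal is correct, and it follows the same overall strategy as the paper: reduce $\epb$ to the compact quadratic form $\norm{\mat{X}\vecb{w} - \vecb{y}}_{\mat{Z}^\dagger}^2$ and then extract the minimizer with pseudoinverse identities, with both arguments hinging on the same key fact $(\mat{L}^\intercal\mat{L})^\dagger\mat{L}^\intercal = \mat{L}^\dagger$. The differences are in execution, and they are worth noting. For the reduction, the paper replaces $\hat{\vecb{v}}$ by $\tilde{\po}\hat{\vecb{v}}$ on the grounds that $\hat{\vecb{v}}$ is representable, and then collapses $\tilde{\po}^\intercal \diagptilde \tilde{\po}$ using property (ii) of \defref{def:mp_pinv}; you instead expand the residual directly and carry the null-space component $(\mat{I} - \mat{Z}^\dagger\mat{Z})\vecb{w}$ explicitly, killing it via $\mat{Z}(\mat{I} - \mat{Z}^\dagger\mat{Z}) = \mat{0}$. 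Your version is in fact the more careful one: when $\diagptilde$ is singular, $\tilde{\po}$ fixes representable vectors only up to a vector of zero seminorm (one has $\tilde{\po}\mat{F}\vecb{w} = \mat{F}\mat{Z}^\dagger\mat{Z}\vecb{w}$, which need not equal $\mat{F}\vecb{w}$), so the paper's one-line justification quietly relies on exactly the cancellation you make explicit. For the extraction step, the directions are reversed: the paper first writes down the normal-equation solution $(\mat{X}^\intercal\mat{Z}^\dagger\mat{X})^\dagger\mat{X}^\intercal\mat{Z}^\dagger\vecb{y}$, simplifies it to $(\zpinvsqrt\mat{X})^\dagger\zpinvsqrt\vecb{y}$, and then verifies by substitution; you obtain $(\zpinvsqrt\mat{X})^\dagger\zpinvsqrt\vecb{y}$ directly as the minimal-norm Euclidean least-squares solution and recover the other form from $\mat{A}^\dagger = (\mat{A}^\intercal\mat{A})^\dagger\mat{A}^\intercal$, which dispenses with the paper's final plug-back check since the minimizing property of $\mat{A}^\dagger\vecb{b}$ is invoked as a standard fact. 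Both routes deliver the theorem; yours is marginally tighter on the seminorm subtlety, the paper's makes the connection to the projected-Bellman structure (via $\tilde{\po}$) more visible.
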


Pertaining to \thmref{thm:epb_minimizer}, we remark that the formula simplication
in \eqref{equ:wstar_seminorm} is crucial because
the resulting expression enables unbiased sampling-based estimation
for the minimizer $\tilde{\vecb{w}}^*$ in model-free RL.
This is possible because the last expression in \eqref{equ:wstar_seminorm}
involves only one factor of $\mat{X}$.
In contrast, the expression before simplification has three factors of $\mat{X}$,
hence it does not enable such unbiased estimation for $\tilde{\vecb{w}}^*$.
The reason stems from the fact that $\mat{X}$ depends on the next-state random variable
through $\E{p(\cdot|s)}{\vecb{f}(S')}$, which
leads to a similar situation as described by \citet[\page{272}]{sutton_2018_irl}.
They explain that multiple independent samples of next states are required
to obtain an unbiased estimate of the product of multiple factors that
involve expectations of next states.
Such independent next-state samples are only available in deterministic transition
(where the next state is not random),
or in simulation where the agent can roll-back from any state to its previous state.
This sampling requirement cannot be accommodated in model-free RL settings since
the agent cannot roll-back to its previous state and
transitions are generally stochastic.

In addition, we also remark that the simplication in \eqref{equ:wstar_seminorm}
is carried out without introducing any error
(putting aside errors due to numerical computation).
In comparison, simplifying $\tilde{\vecb{w}}^*$ (to involve only one factor of $\mat{X}$)
through the reverse order law for the pseudoinverse is possible but with some errors
because the identity
$(\mat{X}^\intercal \mat{Z}^\dagger \mat{X})^\dagger
= \mat{X}^\dagger \mat{Z} \mat{X}^{\intercal \dagger}$
requires strict conditions \citep{hartwig_1986_rev, tian_2019_rev}.
Two example simplifications with errors are as follows,
\begin{itemize}
\item by orthogonal approximation $\mat{X}_{\!\!\perp} \approx \mat{X}$
    (where $\mat{X}_{\!\!\perp}^\dagger = \mat{X}_{\!\!\perp}^{-1} = \mat{X}_{\!\!\perp}^\intercal$)
    and the identity
    $(\mat{X}_{\!\!\perp}^\intercal \mat{Z}^\dagger \mat{X}_{\!\!\perp})^\dagger
    = \mat{X}_{\!\!\perp}^\intercal \mat{Z} \mat{X}_{\!\!\perp}$
    \citep[Theorem 1.2.1: 7]{campbell_2009_ginv}
    such that
\begin{equation*}
\tilde{\vecb{w}}^*
= (\mat{X}^\intercal \mat{Z}^\dagger \mat{X})^\dagger
    \mat{X}^\intercal \mat{Z}^\dagger \vecb{y}
\approx (\mat{X}_{\!\!\perp}^\intercal \mat{Z}^\dagger \mat{X}_{\!\!\perp})^\dagger
    \mat{X}_{\!\!\perp}^\intercal \mat{Z}^\dagger \vecb{y}
= \mat{X}_{\!\!\perp}^\intercal \mat{Z} \mat{Z}^\dagger \vecb{y},
\end{equation*}

\item by nullifying the constant matrices $\mat{C}_1$ and $\mat{C}_2$
    (in below expression) such that
\begin{align*}
\tilde{\vecb{w}}^*
& = (\mat{X}^\intercal \mat{Z}^\dagger \mat{X})^\dagger
    \mat{X}^\intercal \mat{Z}^\dagger \vecb{y}
= (\mat{X}^\dagger \mat{Z} \mat{X}^{\intercal \dagger} + \mat{C}_{\!1})
    \mat{X}^\intercal \mat{Z}^\dagger \vecb{y} \\
& \approx \mat{X}^\dagger \mat{Z} \mat{X}^{\intercal \dagger}
    \mat{X}^\intercal \mat{Z}^\dagger \vecb{y}
= \mat{X}^\dagger \mat{Z} (\mat{I} + \mat{C}_{\!2}) \mat{Z}^\dagger \vecb{y}
\approx \mat{X}^\dagger \mat{Z} \mat{Z}^\dagger \vecb{y}.
\end{align*}
\end{itemize}

Finally, we present the proof for \thmref{thm:epb_minimizer} about
the minimizer $\tilde{\vecb{w}}^*$ below.
\begin{proof} \label{thm:epb_minimizer_proof}
(of \thmref{thm:epb_minimizer})
The MSBPE error $\epb$ in \eqref{equ:epb} can be expressed as follows,
\begin{align*}
\epb(\vecb{w})
& = \norm{\hat{\vecb{v}} - \tilde{\po} \bo \hat{\vecb{v}}}_{\diagptilde}^2
    = \norm{\tilde{\po} \hat{\vecb{v}} - \tilde{\po} \bo \hat{\vecb{v}}}_{\diagptilde}^2
    = \norm{\tilde{\po} [ \hat{\vecb{v}} - \bo \hat{\vecb{v}} ] }_{\diagptilde}^2
    \tag{Recall $\hat{\vecb{v}}$ is representable} \\
& = \{\tilde{\po} [ \hat{\vecb{v}} - \bo \hat{\vecb{v}} ] \}^\intercal \diagptilde
    \{\tilde{\po} [ \hat{\vecb{v}} - \bo \hat{\vecb{v}} ] \}
    = [ \hat{\vecb{v}} - \bo \hat{\vecb{v}} ]^\intercal
    \{\tilde{\po}^\intercal \diagptilde \tilde{\po} \}
    [ \hat{\vecb{v}} - \bo \hat{\vecb{v}} ] \notag \\
& = [ \hat{\vecb{v}} - \bo \hat{\vecb{v}} ]^\intercal
    \{ \mat{F} \mat{Z}^\dagger \mat{F}^\intercal \diagptilde \}^\intercal
    \diagptilde \{ \mat{F} \mat{Z}^\dagger \mat{F}^\intercal \diagptilde \}
    [ \hat{\vecb{v}} - \bo \hat{\vecb{v}} ]
    \tag{Expand $\tilde{\po}$ from \lmmref{lmm:projector}} \\
& = [ \hat{\vecb{v}} - \bo \hat{\vecb{v}} ]^\intercal
    \{ \diagptilde \mat{F} \mat{Z}^\dagger \mat{F}^\intercal  \}
    \diagptilde
    \{ \mat{F} \mat{Z}^\dagger \mat{F}^\intercal \diagptilde \}
    [ \hat{\vecb{v}} - \bo \hat{\vecb{v}} ]
    \tag{$\mat{Z}^\dagger$ is symmetric (\lmmref{lmm:z_zpinv_psd})} \\
& = \{ \mat{F}^\intercal \diagptilde [ \hat{\vecb{v}} - \bo \hat{\vecb{v}} ] \}^\intercal
    \mat{Z}^\dagger (\mat{F}^\intercal \diagptilde \mat{F}) \mat{Z}^\dagger
    \{ \mat{F}^\intercal \diagptilde
    [ \hat{\vecb{v}} - \bo \hat{\vecb{v}} ] \} \notag \\
& = \{ \mat{F}^\intercal \diagptilde [ \hat{\vecb{v}} - \bo \hat{\vecb{v}} ] \}^\intercal
    \mat{Z}^\dagger \{ \mat{F}^\intercal \diagptilde
        [ \hat{\vecb{v}} - \bo \hat{\vecb{v}} ] \}
    \tag{Apply the condition (ii) in \defref{def:mp_pinv}} \\
& = \norm{\mat{F}^\intercal \diagptilde [ \hat{\vecb{v}} - \bo \hat{\vecb{v}} ]
    }_{\mat{Z}^\dagger}^2
    \tag{$\mat{Z}^\dagger$ is PSD (\lmmref{lmm:z_zpinv_psd}),
        hence $\mat{Z}^\dagger$-seminorm} \\
& = \norm{ \mat{F}^\intercal \diagptilde [ \mat{F} \vecb{w} - \mat{P} \mat{F} \vecb{w}
    - (\vecb{r} - \vecb{g}) ] }_{\mat{Z}^\dagger}^2
    \tag{Expand $\hat{\vecb{v}}$ from \eqref{equ:vhat_linear}
        and $\bo$ from \eqref{equ:poisson_avgrew}}\\
& = \norm{\mat{F}^\intercal \diagptilde \mat{F} \vecb{w}
        - \mat{F}^\intercal \diagptilde \mat{P} \mat{F} \vecb{w}
        - \mat{F}^\intercal \diagptilde (\vecb{r} - \vecb{g})
    }_{\mat{Z}^\dagger}^2 \\
& = \norm{
        \underbrace{
            \mat{F}^\intercal \diagptilde (\mat{I} - \mat{P}) \mat{F}
        }_{\mat{X}} \vecb{w}
        - \underbrace{\mat{F}^\intercal \diagptilde (\vecb{r} - \vecb{g})}_{\vecb{y}}
    }_{\mat{Z}^\dagger}^2
    = \norm{\mat{X} \vecb{w} - \vecb{y}}_{\mat{Z}^\dagger}^2.
\end{align*}
The above steps are inspired by \citet[\app{A}]{dann_2014_petd} who derived
the (norm) LSTD based on the stationary state distribution $p^\star$
for the discounted-reward value function for recurrent MDPs.

Therefore, minimizing $\epb$ (which is a seminorm as $\diagptilde$ is PSD)
amounts to solving for
\begin{align*}
\text{the $\mat{Z}^\dagger$-seminorm LS solutions of}\quad
    \mat{X} \vecb{w} &= \vecb{y}, \quad \text{or equivalently,} \\
\text{the Euclidean-norm LS solutions of}\quad
    \mat{Z}^{\nicefrac{\dagger}{2}} \mat{X} \vecb{w}
    & = \mat{Z}^{\nicefrac{\dagger}{2}} \vecb{y}
    \tag{Similar to \eqref{equ:euclid_ls}}.
\end{align*}
Taking the gradient of $\epb$ and setting it to zero for
a minimizer $\tilde{\vecb{w}}^*$ gives
\begin{equation}
\nabla \epb(\vecb{w})
= \nabla \norm{\mat{X} \vecb{w} - \vecb{y}}_{\mat{Z}^\dagger}^2
= 2\mat{X}^\intercal \mat{Z}^\dagger (\mat{X} \vecb{w} - \vecb{y})
\eqset \vecb{0}
\quad \Longleftrightarrow \quad
\mat{X}^\intercal \mat{Z}^\dagger \mat{X} \tilde{\vecb{w}}^*
= \mat{X}^\intercal \mat{Z}^\dagger \vecb{y}.
\label{equ:epb_grad}
\end{equation}
In a similar fashion as the derivation of $\tilde{\po}$ (\lmmref{lmm:projector}),
one solution for \eqref{equ:epb_grad} is given by
\begin{align*}
\tilde{\vecb{w}}^*
& = (\mat{X}^\intercal \mat{Z}^\dagger \mat{X})^\dagger
    \mat{X}^\intercal \mat{Z}^\dagger \vecb{y}
= (\mat{X}^\intercal \mat{Z}^{\nicefrac{\dagger}{2}} \mat{Z}^{\nicefrac{\dagger}{2}}
    \mat{X})^\dagger
    \mat{X}^\intercal \mat{Z}^{\nicefrac{\dagger}{2}} \mat{Z}^{\nicefrac{\dagger}{2}}
    \vecb{y}
    \tag{See \eqref{equ:zpinv_breakdown}} \\
& = (\mat{L}^{\!\intercal} \mat{L})^\dagger \mat{L}^{\!\intercal}
    \mat{Z}^{\nicefrac{\dagger}{2}} \vecb{y}
    \tag{Let $\mat{L} \eqdef \mat{Z}^{\nicefrac{\dagger}{2}} \mat{X}$, so
    $\mat{L}^{\!\intercal} = \mat{X}^\intercal \mat{Z}^{\nicefrac{\dagger}{2}}$
    as $\mat{Z}^{\nicefrac{\dagger}{2}}$ is symmetric (\lmmref{lmm:z_zpinv_psd})} \\
& = \mat{L}^\dagger \mat{Z}^{\nicefrac{\dagger}{2}} \vecb{y}
    \tag{Since $(\mat{L}^{\!\intercal} \mat{L})^\dagger \mat{L}^{\!\intercal} = \mat{L}^\dagger$
    \citep[\thm{1.2.1}: 6]{campbell_2009_ginv}} \\ %
& = (\mat{Z}^{\nicefrac{\dagger}{2}} \mat{X})^\dagger
    \mat{Z}^{\nicefrac{\dagger}{2}} \vecb{y},
    \tag{Expand $\mat{L}$}
\end{align*}
which can be plugged-in back to the LHS of \eqref{equ:epb_grad} to confirm that
\begin{equation*}
\mat{X}^\intercal \mat{Z}^\dagger \mat{X} (\tilde{\vecb{w}}^*)
= \mat{X}^\intercal \mat{Z}^{\nicefrac{\dagger}{2}} \mat{Z}^{\nicefrac{\dagger}{2}} \mat{X}
    (\mat{L}^\dagger \mat{Z}^{\nicefrac{\dagger}{2}} \vecb{y})
= \{ \mat{L}^{\!\intercal} \mat{L} \mat{L}^\dagger \}
    \mat{Z}^{\nicefrac{\dagger}{2}} \vecb{y}
= \{ \mat{L}^{\!\intercal} \} \mat{Z}^{\nicefrac{\dagger}{2}} \vecb{y}
= \mat{X}^\intercal \mat{Z}^{\nicefrac{\dagger}{2}} \mat{Z}^{\nicefrac{\dagger}{2}} \vecb{y}.
\end{equation*}
Here, we rely on the identity of
$\mat{L}^{\!\intercal} \mat{L} \mat{L}^\dagger = \mat{L}^{\!\intercal}$
\citep[\thm{1.2.1}: 4]{campbell_2009_ginv}.
This concludes the proof.
\end{proof}

\section{A general procedure for LSTD-based policy evaluation}
\label{sec:unify_lstd_based}

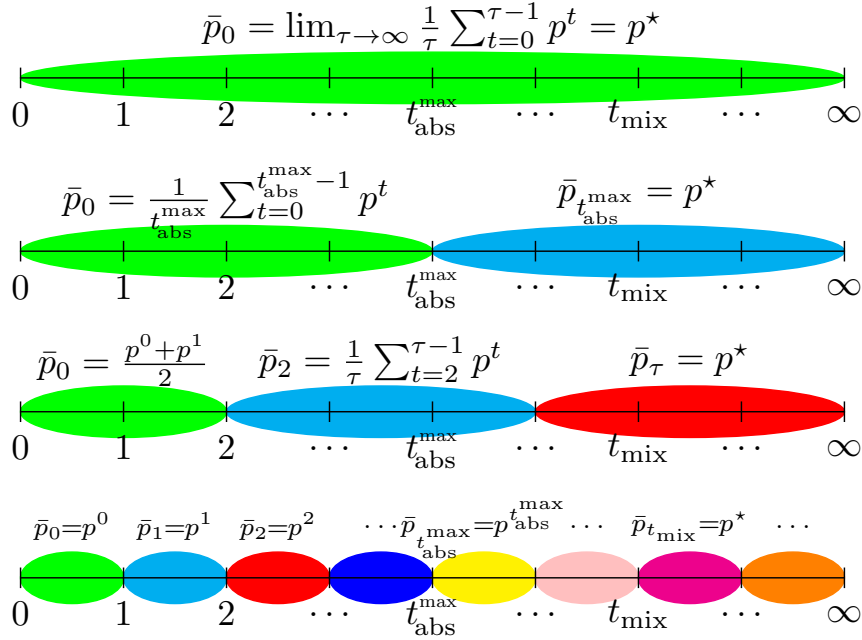
\begin{figure}[t]
\centering

\begin{subfigure}{0.85\textwidth} %
\resizebox{\textwidth}{!}{
\begin{tikzpicture}
\node[ellipse,
draw = green,
fill = green,
minimum width = 8cm,
minimum height = 0.5cm] (e)
at (4,0) {};

\draw (0,0) -- (8,0);
\foreach \i in {0, 1,...,8}
  \draw (\i,0.1) -- + (0,-0.2);

\node at (0,-0.35){$0$};
\node at (1,-0.35){$1$};
\node at (2,-0.35){$2$};
\node at (3,-0.35){$\ldots$};
\node at (4,-0.35){$\tabsmax$};
\node at (5,-0.35){$\ldots$};
\node at (6,-0.35){$\tmix$};
\node at (7,-0.35){$\ldots$};
\node at (8,-0.35){$\infty$};

\node at (4,0.5) [align=center]
    {$\bar{p}_0
        = \lim_{\tau \to \infty} \frac{1}{\tau}\sum_{t=0}^{\tau - 1} p^t
        = p^\star$}; %
\end{tikzpicture}
} %
\end{subfigure}
\begin{subfigure}{0.85\textwidth} %
\resizebox{\textwidth}{!}{
\begin{tikzpicture}
\node[ellipse,
draw = green,
fill = green,
minimum width = 4cm,
minimum height = 0.5cm] (e)
at (2,0) {};
\node at (2,0.5) [align=center]
    {$\bar{p}_0 = \frac{1}{\tabsmax}\sum_{t=0}^{\tabsmax - 1} p^t$};

\node[ellipse,
draw = cyan,
fill = cyan,
minimum width = 4cm,
minimum height = 0.5cm] (e)
at (6,0) {};
\node at (6,0.5) [align=center]
    {$\bar{p}_{\tabsmax} = p^\star$};

\draw (0,0) -- (8,0);
\foreach \i in {0, 1,...,8}
  \draw (\i,0.1) -- + (0,-0.2);
\node at (0,-0.35){$0$};
\node at (1,-0.35){$1$};
\node at (2,-0.35){$2$};
\node at (3,-0.35){$\ldots$};
\node at (4,-0.35){$\tabsmax$};
\node at (5,-0.35){$\ldots$};
\node at (6,-0.35){$\tmix$};
\node at (7,-0.35){$\ldots$};
\node at (8,-0.35){$\infty$};
\end{tikzpicture}
} %
\end{subfigure}
\begin{subfigure}{0.85\textwidth} %
\resizebox{\textwidth}{!}{
\begin{tikzpicture}
\node[ellipse,
draw = green,
fill = green,
minimum width = 2cm,
minimum height = 0.5cm] (e)
at (1,0) {};
\node at (1,0.5) [align=center]
    {$\bar{p}_0 = \frac{p^0 + p^1}{2}$};

\node[ellipse,
draw = cyan,
fill = cyan,
minimum width = 3cm,
minimum height = 0.5cm] (e)
at (3.5,0) {};
\node at (3.5,0.5) [align=center]
    {$\bar{p}_2 = \frac{1}{\tau} \sum_{t=2}^{\tau - 1} p^t$};

\node[ellipse,
draw = red,
fill = red,
minimum width = 3cm,
minimum height = 0.5cm] (e)
at (6.5,0) {};
\node at (6.5,0.5) [align=center]
    {$\bar{p}_\tau = p^\star$};

\draw (0,0) -- (8,0);
\foreach \i in {0, 1,...,8}
  \draw (\i,0.1) -- + (0,-0.2);
\node at (0,-0.35){$0$};
\node at (1,-0.35){$1$};
\node at (2,-0.35){$2$};
\node at (3,-0.35){$\ldots$};
\node at (4,-0.35){$\tabsmax$};
\node at (5,-0.35){$\ldots$};
\node at (6,-0.35){$\tmix$};
\node at (7,-0.35){$\ldots$};
\node at (8,-0.35){$\infty$};
\end{tikzpicture}
} %
\end{subfigure}
\begin{subfigure}{0.85\textwidth}%
\resizebox{\textwidth}{!}{
\begin{tikzpicture}
\node[ellipse,
draw = green,
fill = green,
minimum width = 1cm,
minimum height = 0.5cm] (e)
at (0.5,0) {};
\node at (0.5,0.5) [align=center]
    {\scriptsize $\bar{p}_0\!\!=\!\!p^0$};

\node[ellipse,
draw = cyan,
fill = cyan,
minimum width = 1cm,
minimum height = 0.5cm] (e)
at (1.5,0) {};
\node at (1.5,0.5) [align=center]
    {\scriptsize $\bar{p}_1\!\!=\!\!p^1$};

\node[ellipse,
draw = red,
fill = red,
minimum width = 1cm,
minimum height = 0.5cm] (e)
at (2.5,0) {};
\node at (2.5,0.5) [align=center]
    {\scriptsize $\bar{p}_2\!\!=\!\!p^2$};

\node[ellipse,
draw = blue,
fill = blue,
minimum width = 1cm,
minimum height = 0.5cm] (e)
at (3.5,0) {};
\node at (3.5,0.5) [align=center]
    {\scriptsize $\ldots$};

\node[ellipse,
draw = yellow,
fill = yellow,
minimum width = 1cm,
minimum height = 0.5cm] (e)
at (4.5,0) {};
\node at (4.5,0.5) [align=center]
    {\scriptsize $\bar{p}_{\tabsmax}\!\!=\!\!p^{\tabsmax}$};

\node[ellipse,
draw = pink,
fill = pink,
minimum width = 1cm,
minimum height = 0.5cm] (e)
at (5.5,0) {};
\node at (5.5,0.5) [align=center]
    {\scriptsize $\ldots$};

\node[ellipse,
draw = magenta,
fill = magenta,
minimum width = 1cm,
minimum height = 0.5cm] (e)
at (6.5,0) {};
\node at (6.5,0.5) [align=center]
    {\scriptsize $\bar{p}_{\tmix}\!\!=\!\!p^\star$};

\node[ellipse,
draw = orange,
fill = orange,
minimum width = 1cm,
minimum height = 0.5cm] (e)
at (7.5,0) {};
\node at (7.5,0.5) [align=center]
    {\scriptsize $\ldots$};

\draw (0,0) -- (8,0);
\foreach \i in {0, 1,...,8}
  \draw (\i,0.1) -- + (0,-0.2);
\node at (0,-0.35){$0$};
\node at (1,-0.35){$1$};
\node at (2,-0.35){$2$};
\node at (3,-0.35){$\ldots$};
\node at (4,-0.35){$\tabsmax$};
\node at (5,-0.35){$\ldots$};
\node at (6,-0.35){$\tmix$};
\node at (7,-0.35){$\ldots$};
\node at (8,-0.35){$\infty$};
\end{tikzpicture}
} %
\end{subfigure}

\caption{Illustrations of four systems (rows) of LSTD-$\bar{p}_t$ approximators
along the timestep line in infinite-horizon MDPs.
The first system (top row) consists of only one approximator for
one neighborhood anchored at $t=0$, hence it is based on
the stationary state distribution $\bar{p}_0 = p^\star$.
The second system consists of two approximators whose neighborhoods are
anchored at $t=0$ and $t = \tabsmaxb$, hence they are based on
$\bar{p}_0$ and $\bar{p}_{\tabsmaxb} = p^\star$.
The third system consists of three approximators whose neighborhoods are
anchored at $t=0$, $t=2$ and $t = \tau$ for some timestep $\tau$.
The fourth system consists of an infinity number of approximators
(\ie stepwise approximators), each is based on the stepwise state distribution
$\bar{p}_t = p^t$.
}
\label{fig:unify_diagram}

\end{figure}

Equipped with seminorm LSTD (\secref{sec:seminorm_lstd}), we are now ready to
devise a general unifying procedure for LSTD-based policy evaluation,
which leads to a system of LSTD approximators,
as illustrated in \figref{fig:unify_diagram}.
The proposed procedure is formally presented in \defref{def:general_lstd},
for which we need the definitions of its main components as follows.

\begin{definition} \label{def:neighborhood}
A timestep neighborhood, denoted as $\setname{N}$, is an ordered set of
consecutive timesteps from an anchor timestep $t$ to $(t + \setsize{N} - 1)$.
Every neighborhood $\setname{N}$ has a unique anchor $t$
(\ie the earliest timestep in $\setname{N}$).
Hence, the notation $\setname{N}_t$ denotes a neighboorhod anchored at $t$.
The non-anchor member of $\setname{N}_t$, if any, is called a neighbor.
Hence, every anchor has $(\setsizesubt{N} - 1)$ neighbors.
\end{definition}

\begin{definition} \label{def:pbar}
A state probability distribution of a neighborhood $\setname{N}_t$,
denoted as $\bar{p}_t$, is a lumpsum of stepwise state probabilities $p^\tau$
from $\tau=t$ to $(t + \setsize{N} - 1)$.
That is,
\begin{equation}
\bar{p}_t(s) \eqdef
\frac{1}{\setsizesubt{N}} \sum_{\tau = t}^{t + \setsizesubt{N} - 1} p^\tau(s),
\quad \text{with}\ p^\tau(s) = \E{S_0 \sim \isd}{p^\tau(s| s_0)},
\qquad \forall s \in \setname{S},
\label{equ:p_lumpsum}
\end{equation}
where $p^\tau(s| s_0) \eqdef \prob{S_\tau = s| S_0 = s_0}$, which indicates
the probability of visiting the state $s$ in $\tau$~timesteps from
an initial state $s_0$.
This $p^\tau(s| s_0)$ is equivalent to the $[s_0, s]$-entry of $\mat{P}^\tau$,
which is the one-step transition matrix $\mat{P}$ raised to the power of~$\tau$.
That is, $p^\tau(s| s_0) = \vecb{e}_{s_0}^\intercal \mat{P}^\tau \vecb{e}_s$,
where $\vecb{e}_i \in \real{\setsize{S}}$ denotes the $i$-th standard basis vector.
The $s_0$-th row of $\mat{P}^\tau$ therefore contains the probability values
of the stepwise conditional state distribution $p^\tau(\cdot| s_0)$.
\end{definition}

\begin{definition} \label{def:pbar_support}
The support of a neighborhood $\setname{N}_t$ is defined as
the support of its state distribution $\bar{p}_t$,
denoted as $\setname{S}(\bar{p}_t)$.
That is,
$\setname{S}(\bar{p}_t)
\eqdef \{ s : \bar{p}_t(s) > 0, \forall s \in \setname{S} \}
= \bigcup_{\tau=t}^{t + \setsizesubt{N} - 1} \setname{S}(p^\tau)
\subseteq \setname{S}$, where
$\setname{S}(p^\tau)$ is the support of a stepwise state distribution $p^\tau$.
Note that $\setname{S}(\bar{p}_t)$ may be a proper subset of $\setname{S}$.
\end{definition}

\begin{definition} \label{def:general_lstd}
A general procedure for LSTD-based policy evaluation has three steps as follows.
\begin{enumerate}
\item Specify a number of timestep neighborhoods (\defref{def:neighborhood})
    over the whole time-horizon.

\item Train a seminorm LSTD approximator (\secref{sec:samplingenabler_minimizer})
    for every neighborhood $\setname{N}_t$.
    This approximator minimizes $\epb$ that is based on
    the neighborhood state distribution $\tilde{p} \gets \bar{p}_t$ (\defref{def:pbar})
    and a desired type of policy value functions.
    It is termed as a seminorm LSTD-$\bar{p}_t$.

\item Predict the state values at timestep $\tau$ using
    the approximator of a neighborhood $\setname{N}_t$ where $\tau$ belongs
    (either as an anchor or a neighbor member of $\setname{N}_t$).
\end{enumerate}
The number of timestep neighborhoods (equivalently, the number of anchors or approximators)
is denoted as $\nanchor$.
This procedure forms a system of $\nanchor$ seminorm LSTD-$\bar{p}_t$
(as linear approximators).
\end{definition}

This general procedure unifies two existing approaches to
approximate policy evaluation (\secref{sec:unify_existing}),
as summarized in \tblref{tab:unify_existing}.
It also gives a spectrum of benefits by controlling the number of neighborhoods
(\secref{sec:unify_morethantwo}).
More importantly, it enables value approximation for both
transient and recurrent states in unichain MDPs, which is
the main motivation for this work and is presented in \secref{sec:approxbias}.

\begin{table}[]
\centering
\caption{Unification of LSTD-based approximation methods by the general procedure
    (\defref{def:general_lstd}, the right-most column) for
    three types (rows) of environments (Envs).
    Here, $p^{\#}_{\mathrm{tr}}$ denotes the transient part of
    $p^{\#}$ \eqref{equ:ptr_derive}.
    This summary contains the representatives of existing average- and discounted-reward
    LSTD-based methods for the first and second types of environments.
}
\label{tab:unify_existing}
\begin{tabular}{p{0.25\linewidth} p{0.22\linewidth} p{0.21\linewidth} p{0.21\linewidth}}
\toprule
\multicolumn{1}{c}{\textbf{Envs \textbackslash\ Methods}}
& \multicolumn{1}{c}{Norm LSTD-$p^\star$}
& \multicolumn{1}{c}{Norm LSTD-$p^{\#}_{\mathrm{tr}}$}
& \multicolumn{1}{c}{Seminorm LSTD-$\bar{p}_t$} \\ \toprule

Recurrent states only
& {\small \citet{yu_2009_lspe, ueno_2008_lstd}}
& Not applicable since recurrent state information is removed
& One neighborhood $\setname{N}_{t=0}$ with $\bar{p}_0 = p^\star$
    (\figref{fig:unify_diagram}: top row)
\\ \midrule

Multiple transient states and one 0-reward recurrent state
& Not applicable since $p^\star(\strans) = 0$ for every transient state $\strans$ in $\settr$
& {\small \citet{bradtke_1996_lstd, boyan_2002_lstd}}
& {\small Two neighborhoods: $\setname{N}_{t=0}$ and $\setname{N}_{\tabsmax}$
   with $\bar{p}_0 = p^{\#}$, $\bar{p}_{\tabsmax} = p^\star$}
   (\figref{fig:unify_diagram}: second row)
\\ \midrule

Multiple transient states and multiple recurrent states
& Not applicable \newline (same as middle row)
& Not applicable \newline (same as top row)
& At least two neighborhoods (\secref{sec:approxbias})
\\ \bottomrule
\end{tabular}
\end{table}

\subsection{Existing approaches are special cases with one or two neighborhoods}
\label{sec:unify_existing}

In this section, we show that at least two existing LSTD-based approximators emerge
as special cases of the general procedure (\defref{def:general_lstd}).
These two are of interest because they have the essential components
common to other LSTD-based approximators (see \tblref{tab:unify_existing}).

\textbf{First} is the average-reward LSTD \citep[\secc{II.A}]{yu_2009_lspe},
which was designed for recurrent MCs with rewards.
This is a special case of the general procedure (\defref{def:general_lstd})
when $\nanchor \gets 1$,
yielding a single neighborhood anchored at the initial timestep $t=0$
and with an infinite number of neighbors (due to an infinite time-horizon),
as illustrated in \figref{fig:unify_diagram}:~top-row.
The lumpsum state distribution of $\setname{N}_0$ is obtained by taking
the limit of $p^\tau$ as $\tau$ approaches infinity in \eqref{equ:p_lumpsum}.
This limiting distribution is by definition \eqref{equ:pstar_lim}, equal to
the stationary state distribution, that is $\bar{p}_0 = p^\star$.

Thus, the system of seminorm LSTD-$\bar{p}_t$ reduces to
a single seminorm LSTD-$p^\star$ approximator, then
to a (norm) LSTD-$p^\star$ in a recurrent MC
(where $p^\star(s) > 0, \forall s \in \setname{S}$) whenever
\assref{assume:indep_fea} and a non-singularity condition about
$(\mat{I} - \mat{P})$ are satisfied (see \secref{nbwpval:prelim}).
In such cases, the minimizer \eqref{equ:wstar_seminorm} becomes
$\tilde{\vecb{w}}^*
= (\mat{Z}^{\nicefrac{-1}{2}} \mat{X})^{-1} \mat{Z}^{\nicefrac{-1}{2}} \vecb{y}
= \mat{X}^{-1} \mat{Z}^{\nicefrac{1}{2}} \mat{Z}^{\nicefrac{-1}{2}} \vecb{y}
= \mat{X}^{-1} \vecb{y} = \vecb{w}^*$, which is \eqref{equ:td_fixedpoint}.

\textbf{Second} is the transient-state-only discounted-reward LSTD
\citep[\thm{1}]{bradtke_1996_lstd}, which was designed for
an MC with multiple transient states, plus
a single known 0-reward absorbing terminal state (denoted as $\szrat$).
This is a special case of the general procedure (\defref{def:general_lstd})
when $\nanchor \gets 2$, as illustrated in \figref{fig:unify_diagram}:~second-row.
The first anchor is at $t=0$ as always, whereas the second anchor is
at the maximum absorption time $t = \tabsmax$, which is defined below.
\begin{definition} \label{def:tabsmax}
Let $\ptrmat$ be a non-stochastic $\setsize{S}$-by-$\setsize{S}$ matrix that is
obtained by nullifying (setting to zero) the rows and columns corresponding to
the recurrent states of the one-step transition matrix~$\mat{P}$.
Then, the maximum absorption time $\tabsmax$ is
the time required by a Markov chain such that the $\tabsmax$-th power of
$\ptrmat$ is close to a zero matrix.
That is,
\begin{equation*}
\tabsmax(\varepsilon)
\eqdef \min \{ t : \norm{\ptrmat^t}_{\mathrm{F}} \le \varepsilon \},
\quad \text{and for an infinitesimally small $\varepsilon$}, \quad
\tabsmax \eqdef \tabsmax(\varepsilon = 10^{-8}).
\end{equation*}
This $\tabsmax$ can be interpreted as the timestep at which there is
(almost) no probability mass over all transient states for the first time.
Such probability mass has moved to one or more recurrent states.
Here, $\norm{\ptrmat^t}_{\mathrm{F}}$ denotes the Frobenius matrix norm of
$\ptrmat^t$.
\end{definition}

Setting the second anchor to $\tabsmax$ induces the following two desirable properties.
\begin{enumerate} [label=\roman{*}.]
\item The support of the first neighborhood $\setname{N}_0$ contains
all transient states (as long as the initial state distribution allows),
\ie $\settrans \subseteq \setname{S}(\bar{p}_0)$,
where $\settrans$ denotes the transient state subset.
This cannot be achieved by setting the second anchor to
the minimum or the \emph{expected} absorption time,
by which some transient states may not be contained in $\setname{S}(\bar{p}_0)$.
Note that since the absorption time is a random variable,
it cannot be set as an anchor (\defref{def:neighborhood}).

\item The first neighborhood's state distribution $\bar{p}_0$
yields reasonable weighting for transient states in $\epb$ \eqref{equ:epb}.
It is reasonable in that $\bar{p}_0$ reflects the frequencies of
visiting transient states before absorption.
This is in contrast to, for example, setting the second anchor to $t=1$
whenever the initial state distribution $\isd$ is uniform over $\settrans$.
It induces Property~i. above as $\settrans = \setname{S}(\bar{p}_0 \gets \isd)$,
but does not reflect transient state visitation since
transient states may be visited beyond the first timestep (till absorption).
\end{enumerate}

For an MC with multiple transient states and a known $\szrat$ (which is recurrent),
the first neighborhood's state distribution $\bar{p}_0$ can be modified
such that the probability mass is completely concentrated over $\settrans$.
Let $p^{\#}$ be the modified $\bar{p}_0$ and
$\tabsmax$ be the maximum absorption time (\defref{def:tabsmax}).
Then,
\begin{align}
(\vecb{p}^{\#})^{\!\intercal}
= \isdvecrow \left[
    \frac{1}{\tabsmax} \sum_{t=0}^{\tabsmax - 1} \ptrmat^t \right]
= \isdvecrow \left[
    \frac{1}{\tabsmax} \Bigg\{
        \underbrace{(\mat{I} - \ptrmat)^{-1} }_{\sum_{t=0}^{\infty} \ptrmat^t}
        - \sum_{t=\tabsmax}^{\infty} \ptrmat^t
    \Bigg\} \right],
    \label{equ:ptr_derive}
\end{align}
which is then normalized to
$\vecb{p}^{\#} \gets \vecb{p}^{\#} / \norm{\vecb{p}^{\#}}_1$
to be a vector of probability values of $p^{\#}$.
Here, $\sum_{t=\tabsmax}^{\infty} \ptrmat^t$ is infinitesimally small by
\defref{def:tabsmax}, whereas $(\mat{I} - \ptrmat)^{-1}$ is
a non-stochastic $\setsize{S}$-by-$\setsize{S}$ matrix whose $[s_0, s]$-entry
indicates the expected number of times the agent visits
the state $s$, when it begins in the initial state $s_0$
\citep[\thm{11.4}]{grinstead_2012_prob}.\footnote{
    \citet[\thm{11.3}]{grinstead_2012_prob} proved that
    $\lim_{t \to \infty} \ptrmat^t = \mat{0}$, suggesting that $\ptrmat$ is
    generally not nilpotent.
    However, for some MCs (\eg those with deterministic transition),
    $\ptrmat$ is a nilpotent matrix whose degree is equal to $\tabsmax \le \settrsize$.
}
Therefore, the state visitation from $t=0$ (in a non-absorbing transient state)
until absorption is mainly distributed according to $p^{\#}$.

Thus, the first approximator (of a system of two approximators anchored
at $t=0$ and $\tabsmax$) is devoted to estimating the value of transient states.
It is originally a seminorm LSTD-$(\bar{p}_0 \gets p^{\#})$, but
``forced'' to become a norm variant by the following two ways
(in addition to satisfying \assref{assume:indep_fea}).
\begin{itemize}
\item The $\szrat$ entries in $\vecb{p}^{\#}$, $\vecb{r}$,
    $\mat{F}$, and $\mat{P}$ are removed.
    Such removal is possible because the state classification is known, namely
    the transient states are states that are visited before termination
    (\ie before visiting the only recurrent $\szrat$).
    It is also justifiable because there is no need to estimate the
    value of $\szrat$, which is known to be zero (due to a zero reward).

\item All transient states in $\settr$ have positive probabilities in $\vecb{p}^{\#}$.
    This is guaranteed for example, whenever the support of the initial state distribution
    $\isd$ contains the whole $\settr$.
    Otherwise, a certain transition structure is needed such that
    $p^{\#}(s) > 0, \forall s \in \settr$.
\end{itemize}

The second and the last approximator,
\ie a seminorm LSTD-$(\bar{p}_{\tabsmax} \gets p^\star)$,
concerns with estimating the value of the only recurrent state $\szrat$.
As a result, it is never needed because the value of $\szrat$ is known to be zero
to the agent operating in an MC with a single 0-reward absorbing terminal state.
Note that the last neigborhood's state distribution is
the stationary distribution $p^\star$ (as always),
which is the limit of the lumpsum state distribution (\defref{def:pbar})
as $t$ goes to infinity (from $t = \tabsmax$).

\subsection{Potential benefits with more than two neighborhoods}
\label{sec:unify_morethantwo}

One extreme of the general procedure (\defref{def:general_lstd}) is
to specify as many neighborhoods as timesteps,
as illustrated in \figref{fig:unify_diagram}:~bottom-row.
This implies one seminorm LSTD-$(\bar{p}_t \gets p^t)$ approximator for each timestep,
where $\bar{p}_t$ takes its specific form of $p^t$ since there is
merely an anchor (without any neighbor) in every neighborhood $\setname{N}_t$.
At first, such stepwise treatment may seem as an overkill for
a time-homogenous MDP with two state types.
It is however, beneficial in three folds as follows.

\textbf{First}, each stepwise LSTD-$p^t$ approximator is fed with
independent and identically distributed (i.i.d) samples
drawn from the corresponding stepwise state distribution $p^t$ across
multiple independent trials (see \algref{alg:training_nbwpval}).
This is in constrast to samples from a lumpsum state distribution $\bar{p}_t$
in a neighborhood with one or multiple neighbors.
Those drawn from such $\bar{p}_t$ in the same trial are Markovian samples,
which yield biased sample means for $\mat{X}$, $\mat{Z}$, and $\vecb{y}$
for the LSTD-$p^t$ minimizer in \thmref{thm:epb_minimizer}.

\textbf{Second}, the stepwise distribution may have a support smaller
than the whole state set, that is $|\setname{S}(p^t)| < \setsize{S}$.
The stepwise approximator's generalization therefore can be focussed on
fewer states, rather than all states in $\setname{S}$.
By product, a system of stepwise LSTD-$p^t$ also enables stepwise trade-off
between approximation accuracy and capacity
(which is limited due to \eg the number of parameters).

\textbf{Third}, putting computation cost aside, stepwise treatment is
a way to deal with unknown state classification in model-free RL by
exploiting what the agent knows, that is the timestep $t$ along with
the corresponding state and reward samples at~$t$.
This is crucial for unichain MDPs with multiple transient states
and multiple recurrent states.
In addition, the unichain category includes recurrent MDPs and
those with transient states and one recurrent state
(as in \tblref{tab:unify_existing}).
Because of this generality  therefore,
the unichain category should be used to model an environment
for which we are not sure about its MDP classification
(and are not willing to make some restrictive assumption).

We propose a resolution to the scalability issue of stepwise LSTD-$p^t$
in the next \secref{sec:neighborhood}.
It accommodates the use-case of more than two but much less than many neighborhoods
(required by the stepwise extreme).
This is at the cost of getting a reduced amount of the above-mentioned benefits.

\subsection{Parsimoniously specifying the number of neighborhoods}
\label{sec:neighborhood}

The very first step of the general procedure (\defref{def:general_lstd}) is
neighborhood specification.
It amounts to locating $\nanchor$ anchor timesteps along the whole time-horizon
(see \figref{fig:unify_diagram}).
As discussed in \secrefand{sec:unify_existing}{sec:unify_morethantwo},
the initial timestep $t=0$ always serves as the first anchor.
For $\nanchor = 2$, the second anchor is at $t = \tabsmax$
(whenever $\tabsmax$ is known to the agent).
Afterwards, the anchor locations are obvious only for $\nanchor = \infty$,
by which every timestep is an anchor.

It is desirable to be able to locate a finite number of anchors,
\ie $2 \le \nanchor < \infty$, in the context of model-free RL for unichain MDPs
where the state classification (hence, the $\tabsmax$) is unknown.
Therefore, we aim to balance the advantage of having more anchors (\secref{sec:unify_morethantwo})
and the computation of the required seminorm LSTD approximators.
The latter can be indicated by the number of learnable parameters per approximator,
that is $\dim(\vecb{w}) < \setsize{S}$.
Moreover, a stricter computation budget limits the overall number of parameters
in the system of $\nanchor$ approximators.
That is,
\begin{equation}
\{ \dim(\vecb{w}) \cdot \nanchor \} < \setsize{S}
\Longleftrightarrow
\bigg\{ \rho \eqdef \frac{\dim(\vecb{w})}{\setsize{S}} \bigg\} < \frac{1}{\nanchor}
\Longleftrightarrow
\nanchor <  \bigg\{ \frac{\setsize{S}}{\dim(\vecb{w})} = \frac{1}{\rho}\bigg\},
\label{equ:nparam_constraint}
\end{equation}
where $\rho$ indicates the numbers of parameters (per approximator) to
states proportion, and $\dim(\vecb{w})$ is equal to the number of
feature dimensions in linear approximators.
The above inequalities come from the motivation of
using weighted error functions: trading-off approximation accuracy across states
whenever the number of learnable parameters, $\dim(\vecb{w})$,
is (much) less than the number of states, $\setsize{S}$.

For parsimoniously specifying timestep neighborhoods, we propose restricting
neighbors' state distribution to be \emph{in average},
within a tolerance distance $\Delta$ from their anchors'.
This is an attempt to resemble one state distribution per neighborhood
(so that states sampled from such a neighboorhood's distribution are
identically distributed) as much as $\Delta$ allows.
Setting $\Delta \gets 0$ yields one anchor per timestep (till the mixing time),
whereas $\Delta \gets \infty$ trivially yields one anchor at $t=0$.
Consequently, the effective range is at $0 < \Delta < \infty$, where
$\Delta$ is also anticipated to be greater than the threshold used for
determining whether the MC process has been mixing.
For a desired $\nanchor$ anchors, the tolerance $\Delta$ ideally puts
the last anchor close and after the unknown $\tabsmax$ so that
the preceding $(\nanchor - 1)$ approximators are mostly for estimating
transient state values (whereas the last is for recurrent state values, as always).
It is reasonable to have more anchors at the outset, during which
the stepwise state distributions $p^t$ are likely to be non-stationary (time-variant).

Our proposal above relies on the distance between
two unknown state distributions, namely
the anchor's $p^t$ and the candidate neighbor's $p^{t + \tau}$.\footnote{
    One alternative is the distribution ratio of $p^{t + \tau}/p^t$.
    However, density-ratio estimation typically requires another set of
    learnable parameters \citep{sugiyama_2012_dre}.
    Moreover, our use-case involves many such ratios.
}
We identify the following properties for determining the proper distance metric.
\textbf{First}, the distance should be approximated based on two sets of i.i.d samples,
without estimating the distribution directly (\eg via empirical probabilities).
This precludes the use of the total-variation and the earth mover distance
since generally they require distribution estimations as an intermediate step.
\textbf{Second}, the supports of those two distributions are
likely to be different, even disjoint.
This renders the Kullback–Leibler divergence inapplicable.

Based on the aforementioned properties, we choose the maximum mean discrepancy (MMD)
\citep{gretton_2012_mmd} as the distribution distance metric.
It relies on mapping the state distribution $p^t$ into
their so-called mean embedding $\mu_{p^t}$.
That is,
\begin{align}
\mu : \setname{P}_{\!\!\setname{S}} & \mapsto \setname{H}_k,
    \label{equ:mean_embedding} \\
    p^t & \mapsto \mu_{p^t}
                \eqdef \sum_{s \in \setname{S}} p^t(s) k(s, \cdot)
                = \E{S \sim p^t}{k(S, \cdot)}, \label{equ:mean_embedding_2}
\end{align}
where $\setname{P}_{\!\!\setname{S}}$ denotes the space of probability distributions
over $\setname{S}$, and
$\setname{H}_k$ the reproducing kernel Hilbert space, which is induced by
a positive definite kernel $k: \setname{S} \times \setname{S} \mapsto \real{}$.
Here, $\setname{H}_k$ is a space of functions mapping $\setname{S}$ into $\real{}$.
That is,
\begin{align*}
\phi : \setname{S} & \mapsto \setname{H}_k,
    \tag{$\setname{H}_k = \spanspace\{k(s, \cdot) | s \in \setname{S}\}$} \\
s & \mapsto \phi(s) \eqdef \phi(s)(\cdot) = k(s, \cdot),
    \tag{Compare with \eqref{equ:mean_embedding_2}, where the mean of $k(S, \cdot)$ is taken}
\end{align*}
where $\phi(s) \eqdef \phi(s)(\cdot)$ denotes a function that assigns
the value $k(s, s') \in \real{}$ to any $s' \in \setname{S}$.

The MMD is an instance of an integral probability metric,
whose supremum is over functions $\psi$ in the unit ball of $\setname{H}_k$.
Such MMD, denoted by $M_{\setname{H}_k}$, is formulated by
\citet{gretton_2012_mmd} as
\begin{align}
M_{\setname{H}_k}^2(p^t, p^{t+\tau})
& \eqdef \left[ \sup_{\norm{\psi}{} \le 1} \{
        \sum_{} p^t(s) \psi(s) - \sum_{} p^{t + \tau}(s) \psi(s) \} \right]^2
    = \left[ \sup_{\norm{\psi}{} \le 1} \{
        \langle \psi, \mu_{p^t} - \mu_{p^{t + \tau }}\rangle \} \right]^2
    \notag \\
& = \norm{\mu_{p^t} - \mu_{p^{t+\tau}}}_{\setname{H}_k}^2
= \langle \mu_{p^t}, \mu_{p^t} \rangle_{\setname{H}_k}
    + \langle \mu_{p^{t + \tau}}, \mu_{p^{t + \tau}} \rangle_{\setname{H}_k}
    - 2 \langle \mu_{p^t}, \mu_{p^{t + \tau}} \rangle_{\setname{H}_k}
    \notag \\
& = \E{}{\langle \phi(S_t), \phi(\acute{S}_t) \rangle_{\setname{H}_k}}
    + \E{}{\langle \phi(S_{t+\tau}), \phi(\acute{S}_{t+\tau}) \rangle_{\setname{H}_k}}
    - 2 \mathbb{E} \Big[
        \langle \phi(S_t), \phi(S_{t+\tau}) \rangle_{\setname{H}_k} \Big]
    \notag \\
& = \E{}{k(S_t, \acute{S}_t)} + \E{}{k(S_{t+\tau}, \acute{S}_{t+\tau})}
    - 2 \mathbb{E} \Big[k(S_t, S_{t+\tau})\Big]
    \notag \\
& \approx \frac{2}{\nxep} \sum_{i=1}^{\nxep/2}
    k(s_{t}^{2i-1}, s_{t}^{2i}) + k(s_{t+\tau}^{2i-1}, s_{t+\tau}^{2i})
    - k(s_{t}^{2i-1}, s_{t+\tau}^{2i}) - k(s_{t}^{2i}, s_{t+\tau}^{2i-1}),
    \label{equ:mmd_squared_linear}
\end{align}
where $\nxep$ denotes the number of experiment-episodes (trials), and
$s_t^i$ the state sample at timestep $t$ in the $i$-th trial.
The last expression is an unbiased estimator (of the \emph{squared} MMD) that
can be computed in linear time, and may be negative
\citep[\lmm{14}]{gretton_2012_mmd}.
Note that state samples from the same $i$-th trial are not used in such estimation
in \eqref{equ:mmd_squared_linear}.

To become a metric (instead of a pseudo-metric), MMD requires
characteristic kernels, which subsumes universal kernels.
This ensures that each distribution maps to a unique mean embedding in
$\setname{H}_k$ (\ie $\mu_{p^t}$ in \eqref{equ:mean_embedding} is injective,
thus characterizes the distribution $p^t$).
Ideally, we have such a kernel that operates in the original state representation,
which may not be in a Euclidean space.
For discrete states, one example is the identity (Dirac) kernel,
namely $k(s, s') \eqdef \mathbb{I}[s = s']$,
whenever the identity operator $\mathbb{I}$ is available to the agent.
It induces a positive definite kernel Gram matrix, hence
a \emph{strictly} positive definite kernel that is always universal on
discrete domains \citep[\page{42}]{muandet_2017_kme}.
We note that converting a state distance
(\eg based on the bisimulation metric \citep{ferns_2006_met})
into a kernel Gram matrix is likely to yield a kernel that is not even positive definite,
unless it satisfies certain conditions \citep{haasdonk_2004_dsk}.

In some cases, discrete states are represented as numerical feature vectors
in a Euclidean space.
They are obtained via a state feature function
$\vecb{f}(s) \in \integer{\dim(\vecb{w})}$ (\eg one-hot encoding),
or $\vecb{f}(s) \in \real{\dim(\vecb{w})}$.
For these, one popular choice is the Gaussian radial-basis-function (RBF) kernel,
namely
\begin{equation}
k(\vecb{f}(s), \vecb{f}(s'))
\eqdef \exp \Big( - \frac{\norm{\vecb{f}(s) - \vecb{f}(s')}_2^2}{2 \sigma^2} \Big),
\quad \text{with a width (length-scale) hyperparameter $\sigma$},
\label{equ:gauss_kernel}
\end{equation}
which is a universal kernel on compact domains \citep[\tbl{3.1}]{muandet_2017_kme}.
This kernel is relatively interpretable in that it involves
a squared Euclidean distance between $\vecb{f}(s)$ and $\vecb{f}(s')$
scaled by the width hyperparameter $\sigma$.
A very small $\sigma$ yields a kernel matrix that is close to an identity matrix,
implying every state is different.
On the other hand, a very large $\sigma$ yields a kernel matrix whose entries
are all close to~1, implying all states are the same.
Some RL works use this kernel for discrete state environments,
\eg \citet{song_2016_klstd, grunewalder_2012_rkhs, xu_2005_klstd}.

\section{Approximating the bias of transient and recurrent states}
\label{sec:approxbias}

In this section, we describe our proposed approach to approximating
the bias values of unichain MDPs in model-free RL settings.
It is devised from the general procedure (\secref{sec:unify_lstd_based}) with
two additional components specific to bias computation.
They are about reference states and offsets, presented in \secref{sec:sref_offset}.
Subsequently, we explain our proposed pseudocode in \secref{sec:approxbias_pcode}.
Its entry point is \algref{alg:training_nbwpval}, which is about
training (learning) the estimator in model-free RL settings.

\subsection{Reference states and offset calibration}
\label{sec:sref_offset}

As explained in \secref{nbwpval:prelim}, the projected Bellman error $\epb$
\eqref{equ:epb} is derived from the average-reward evaluation equation
\eqref{equ:poisson_avgrew} for unichain MDPs.
The equation is re-written below,
where the bias state value is denoted as $b$, instead of $v$
(from now on, $v$ denotes the \emph{relative} bias state value).
\begin{equation}
\tilde{\vecb{b}} = \vecb{r} - \tilde{g} \vecb{1} + \mat{P} \tilde{\vecb{b}}
\quad \Longleftrightarrow \quad
(\mat{I} - \mat{P}) \tilde{\vecb{b}} = \vecb{r} - \tilde{g} \vecb{1},
\quad \text{(An underdetermined linear system)}
\label{equ:bias_eval}
\end{equation}
whose solutions are $\tilde{g} = g$, and $\tilde{\vecb{b}} = \vecb{b} + o \vecb{1}$,
where $g$ is the scalar gain (which is constant across states in unichain MDPs),
$\vecb{b} \in \real{\setsize{S}}$ is the bias vector, and
$o \in \real{}$ is an arbitrary offset
\citep[\cor{8.2.7}]{puterman_1994_mdp}.\footnote{
    Another equation, \ie $\mat{P}^\star \tilde{\vecb{b}} = \vecb{0}$,
    is required to be able to determine $\tilde{g} = g$,
    and $\tilde{\vecb{b}} = \vecb{b}$ uniquely without any offset.
    Note that plugging the true gain (\eg from $g = (\vecb{p}^\star)^\intercal\ \vecb{r}$)
    to \eqref{equ:bias_eval} does not change the situation in that
    \eqref{equ:bias_eval} still admits multiple solutions
    (even though the issue of underdetermination has been remedied).
    This is because \eqref{equ:bias_eval} involves a singular matrix
    $(\mat{I} - \mat{P})$.
}
To obtain a solution for $\tilde{\vecb{b}}$ in \eqref{equ:bias_eval} that is
unique (but not necessarily equal to $\vecb{b}$),
we set the arbitrary offset to a certain value, \eg $o \gets -\tilde{b}(\sref)$
for an arbitrary reference state $\sref$.
This yields $\tilde{\vecb{b}} \gets \tilde{\vecb{b}} -\tilde{b}(\sref) \vecb{1}$,
whose resulting value is called the \emph{relative} bias value at $\sref$.

Thus, the bias approximation (by minimizing $\epb$) actually estimates
the relative bias value $\vecb{v}$, which is equal to the bias $\vecb{b}$
up to some offset~$o$.
That is, $\hat{\vecb{v}} \approx \{ \vecb{v} = \vecb{b} + o \vecb{1} \}$.
Since any arbitrary offset satisfies \eqref{equ:bias_eval}, we can
adjust the offset $o$ to be $o \gets - b(\sref)$ in a similar fashion
as determining a unique $\tilde{\vecb{b}}$ (in the previous passage).
This is somewhat advantageous since at least,
one true relative-value at $\sref$ is known to be zero, namely
$v(\sref) = b(\sref) + \{ o = -b(\sref) \} = 0$.
Therefore, we introduce a \emph{prediction} offset, denoted as $\tilde{o}$, and
set it to $\tilde{o} \gets - \hat{v}(\sref)$ in order to ensure that
the predicted relative value at $\sref$ matches with its true value.
That is, $\hat{\vecb{v}} \gets \hat{\vecb{v}} + \tilde{o} \vecb{1}$ such that
$\hat{v}(\sref) = \hat{v}(\sref) + \{ \tilde{o} = - \hat{v}(\sref) \} = v(\sref) = 0$.

Adjusting the prediction of multiple approximators requires a bit of work,
which we explain in the rest of this section.
They are about identifying reference states and calibrating prediction offsets.

\subsubsection{Identifying reference states in a system of multiple approximators}
\label{sec:approxbias_offset_sref}

For a system of approximators of the general procedure (\defref{def:general_lstd}),
one strategic choice for $\sref$ is the most common state across
all neighborhood supports (\defref{def:pbar}) with
a tie-breaking rule as in \defref{def:tie_breaking}.
This $\sref$ is deemed as the \emph{main} reference state of the system.
Note that $\sref$ is not necessarily a recurrent state due to
neighborhood specification and in practice, because
there is a finite number of timesteps and
the neighborhood supports are estimated based on empirical state samples.

\begin{definition} \label{def:tie_breaking}
When determining the most common states across multiple neighborhood supports,
any tie (including when the state frequencies are all ones) is resolved
by selecting any state from the earliest neighborhood for prioritizing
the estimation accuracy of transient state values.
\end{definition}

For the remaining neighborhoods whose supports do not contain the main $\sref$,
we search for potentially multiple auxilary reference states $\sref'$
via the following procedure.
\begin{enumerate}
\item Initialize the reference state set $\srefset \gets \{ \sref \}$.
\item Search for an auxilary reference state $\sref'$ that
    simultaneously satisfies two rules below,
    \begin{enumerate} [label=\roman{*}.]
    \item the most common among neighborhood supports that are disjoint with $\srefset$
        (using the same tie-breaking as for identifying the main $\sref$
        (\defref{def:tie_breaking})), \emph{and}
    \item contained in any neighborhood support that is not disjoint with $\srefset$.
    \end{enumerate}
\item If a new auxilary $\sref'$ is found \emph{and} there is at least
    one neighborhood that still does not have any reference state,
    then $\srefset \gets \srefset \cup \{ \sref' \}$ and go to Step 2.
    Otherwise, stop.
\end{enumerate}

After applying the above procedure, there may exist neighborhoods whose supports
still do not contain either the main $\sref$ or any auxiliary $\sref'$.\footnote{
    We conjecture that in theory, an auxilary $\sref'$ exists for every neighborhood
    whenever at least one of the following conditions is fulfilled, namely
    i) each state has non-zero probabilities for transitioning to itself,
    as well as for transitioning to another state, and
    ii) the initial state distribution $\isd$ has the whole state set as its support.
}
Such neigboorhoods are eventually merged to their nearest (in terms of timesteps)
neighborhood with any type of reference states
(the precedence is given to the preceding neighborhood whenever tie occurs).
The earliest anchor (among those of the merged neigborhoods) becomes
the anchor of the newly-formed neighborhood, whereas
the other (now defunct) anchors become the neighbors.

\subsubsection{Calibrating prediction offsets in a system of multiple approximators}
\label{sec:offset_calib}

Once every neighborhood $\setname{N}_t$ (whose approximator is denoted by $\hat{v}_t$)
is assigned a reference state, its prediction offset $\tilde{o}_t$ is set as follows.
\begin{align}
\tilde{o}_t
& \gets - \hat{v}_t(\sref),
\quad \text{for each approximator $\hat{v}_t$ with the main $\sref$, and}
\label{equ:offset_main_sref} \\
\tilde{o}_{t'}
& \gets [\hat{v}_t(\sref') + \tilde{o}_t] - \hat{v}_{t'}(\sref'),
\quad \text{for each approximator $\hat{v}_{t'}$ with an auxilary $\sref'$}
\label{equ:offset_aux_sref}
\end{align}
where $\hat{v}_t$ in \eqref{equ:offset_aux_sref} is of any neighborhood
whose support contains $\sref'$.
These $\tilde{o}_t$ are then applied to the corresponding prediction as
$\hat{\vecb{v}}_t \gets \hat{\vecb{v}}_t + \tilde{o}_t \vecb{1}$.
\algref{alg:calibrate_offset} implements this prediction calibration.

Applying the prediction offset $\tilde{o}$ to a system of approximators forces
at least one state to have the same approximated value in two neighborhoods.\footnote{
    Recall that if \eqref{equ:bias_eval}, from which $\epb$ is derived, did not
    admit multiple solutions, neighborhood-wise approximators would allow
    different value estimates for all states in different neighborhoods
    (\secref{sec:unify_morethantwo}).
}
This is the cost we pay for two purposes.
\textbf{First} is to propagate the unique and true relative bias value at
the main reference state $\sref$, namely $\hat{v}(\sref) = v(\sref) = 0$,
throughout all neighborhoods' approximators.
This propagation is carried out exactly for neighborhoods with
the main reference state $\sref$ via \eqref{equ:offset_main_sref}.
For those with an auxilary reference state $\sref'$, it is carried out approximately
via \eqref{equ:offset_aux_sref}.
\textbf{Second} is to accomodate the joint-use of multiple relative-value approximators,
which originally have different offsets with respect to the true bias.
Such a use-case arises for example, when computing a quantity that involves
relative values of multiple transient and recurrent states
whose estimates come from multiple approximators.

\subsection{Pseudocode} \label{sec:approxbias_pcode}

In this section, we present the pseudocode for the proposed relative-value
approximator from multiple transient and recurrent states.
The central pseudocode is \algref{alg:training_nbwpval}, which contains
the training protocol in model-free RL settings.
After obtaining state and reward samples,
it specifies a list of neighborhoods (\algref{alg:identify_anchor}),
computes the minimizer of the seminorm LSTD for each neighborhood
(\secref{sec:samplingenabler_minimizer}),
and finally calibrates the prediction offset (\algref{alg:calibrate_offset}).

Specifically, \algref{alg:identify_anchor} approximates
the timestep locations of anchors (via \algref{alg:approx_anchor}), then
identifies the anchors' reference states (via \algref{alg:identify_sref}).
\algref{alg:approx_anchor} relies on MMD to measure
the distributional distance between each anchor and its neighbor candidates
(\secref{sec:neighborhood}).
For an anchor at timestep $t$, one may select (or sample)
a reasonable number of timesteps from a set $\{ t+1, t+2, \ldots, \txepmax \}$
as neighbor candidates, for which their state distribution distances from
the anchor's are approximated.

\begin{algorithm}[]
\caption{Training a system of relative-value approximators for
    transient and recurrent states in model-free settings.
    This implements the general procedure (\defref{def:general_lstd}).}
\label{alg:training_nbwpval}
\DontPrintSemicolon

\KwInput{
A stationary policy $\pi$,
a state feature function $\vecb{f}$,
a kernel function~$k$,
a number of anchors~$\nanchor$,
a number of experiment-episodes (trials) $\nxep$
(each is with $\txepmax + 1$ timesteps).
}
\KwOutput{
    A list of neighborhoods $\tilde{\setname{N}}$ along with
    their parameters $\tilde{\setname{W}}$ and
    prediction offsets $\tilde{\setname{O}}$.
}

Initialize stepwise state and reward lists of sample lists:
    $\tilde{\setname{S}} \gets \varnothing$ and
    $\tilde{\setname{R}} \gets \varnothing$, respectively. \\
\For{Each experiment-episode (trial) $i = 0, 1, \ldots, \nxep - 1$}{
    Reset the environment, and obtain an initial state $s_0$,
        then set the state variable $s \gets s_0$. \\
    \For{Each timestep $t = 0, 1, \ldots, \txepmax$}{
        Choose to then execute an action $a$ based on $\pi(\cdot|s)$. \\
        Observe the next state $s'$ and the reward $r$. \\
        Append samples $s$ and $r$ to the corresponding
            $\tilde{\setname{S}}$ and $\tilde{\setname{R}}$ at index $t$. \\
        Update the current state variable $s \gets s'$.
    }
    \If{Desired (at least once)}{
        Get a list of neighborhoods,
            $\tilde{\setname{N}} \gets
                \mathrm{SpecifyNeighborhoods}(\tilde{\setname{S}}, \nanchor, k)$.
                \Comment*[r]{\algref{alg:identify_anchor}}

        Set an empty list of learned weights
            $\tilde{\setname{W}} \gets \varnothing$. \\
        \For{Each neighborhood $\setname{N}_t \in \tilde{\setname{N}}$ }{
            Learn the minimizing parameter
                $\tilde{\vecb{w}}_t$ using $\setname{N}_t$, $\tilde{\setname{S}}$,
                $\tilde{\setname{R}}$, and $\vecb{f}$.
                \Comment*[r]{\thmref{thm:epb_minimizer}}
            Put this learned $\tilde{\vecb{w}}_t$ to $\tilde{\setname{W}}$
                at index $t$.
        }
        Calibrate the prediction offsets,
            $\tilde{\setname{O}} \gets \mathrm{CalibrateOffset}(
            \tilde{\setname{W}}, \tilde{\setname{N}}, \vecb{f})$.
            \Comment*[r]{\algref{alg:calibrate_offset}}

    }
}

\Return $\tilde{\setname{N}}$, $\tilde{\setname{W}}$,  and $\tilde{\setname{O}}$.

\end{algorithm}

\begin{algorithm}[]
\caption{$\mathrm{SpecifyNeighborhoods}(\tilde{\setname{S}}, \nanchor, k)$
    specifies the timestep neighborhoods (\defref{def:neighborhood}).}
\label{alg:identify_anchor}
\DontPrintSemicolon

\KwInput{
    A state sample list $\tilde{\setname{S}}$,
    a desired number of anchors $\nanchor \ge 2$, and
    a kernel function $k$.
}
\KwOutput{
    A list of neighborhoods $\tilde{\setname{N}}$,
    each is augmented with reference state information.
}

Initialize a list of anchors, $\tilde{\setname{T}} \gets \varnothing$. \\
\If{$2 \le \nanchor < \{ \txepmax = \mathrm{length}(\tilde{\setname{S}}) - 1 \} $} {
    $\tilde{\setname{T}}
    \gets \mathrm{ApproximateAnchors}(\tilde{\setname{S}}, \nanchor, k)$.
    \Comment*[r]{\algref{alg:approx_anchor}}
}
\Else {
    $\tilde{\setname{T}} \gets [0, 1, \ldots, \txepmax]$.
    \Comment*[r]{Stepwise anchors}
}
Construct a neighborhood list $\tilde{\setname{N}}$
    based on $\tilde{\setname{T}}$.
    \Comment*[r]{\defref{def:neighborhood}}
Identify a reference state list,
    $\srefset \gets \mathrm{IdentifySref}(\tilde{\setname{N}}, \tilde{\setname{S}})$.
    \Comment*[r]{\algref{alg:identify_sref}}
Retain or merge neighborhoods based on reference state existence in $\srefset$.
    \Comment*[r]{\secref{sec:approxbias_offset_sref}}

\Return $\tilde{\setname{N}}$.
\end{algorithm}

\begin{algorithm}[]
\caption{$\mathrm{ApproximateAnchors}(\tilde{\setname{S}}, \nanchor, k)$
    approximates the anchor locations (\secref{sec:neighborhood}).}
\label{alg:approx_anchor}
\DontPrintSemicolon

\KwInput{
    A state sample list $\tilde{\setname{S}}$,
    a desired number of anchors $\nanchor \ge 2$, and
    a kernel function $k$.
}
\KwOutput{
    A list of anchors $\tilde{\setname{T}}$ (recall: an anchor is a timestep index).
}
Initialize the distance tolerance $\Delta$ to a small positive number, and \newline
    the current number of approximate anchors $\bar{\nanchor} \gets \txepmax$
    with $\txepmax = \mathrm{length}(\tilde{\setname{S}}) - 1$. \\
\While{$\bar{\nanchor} > \nanchor$}{
    Reset a list of anchors $\tilde{\setname{T}} \gets [0]$, and
        the average neighbor-candidate distance $\bar{\Delta}^2 \gets 0$. \\
    \For{Each timestep $t = 1, 2, \ldots, \txepmax$}{
        Set the last anchor, $\tilde{t} \gets \tilde{\setname{T}}[-1]$.
            \Comment*[r]{The last item is at index $-1$}
        Get the squared MMD, $\hat{d}_t^2$ by plugging-in
            state samples $\tilde{\setname{S}}[\ t, \tilde{t}\ ]$ and kernel $k$
            to \eqref{equ:mmd_squared_linear}. \\
        Update $\bar{\Delta}^2
            \gets \bar{\Delta}^2 + (\hat{d}_t^2 - \bar{\Delta}^2)/(t - \tilde{t})$.
            \Comment*[r]{Update the running average}
        \If{The average neighbor-candidate distance $\bar{\Delta}^2 > \Delta^2$}{
            Append $t$ to the anchor list,
                $\tilde{\setname{T}} \gets \tilde{\setname{T}} + [t]$. \\
            Reset $\bar{\Delta}^2 \gets 0$.
        }
    }
    Update $\bar{\nanchor} \gets \mathrm{length}(\tilde{\setname{T}})$. \\
    Increase the distance tolerance $\Delta \gets 2 \Delta$.
        \Comment*[r]{For some multiplier, \eg $2$}
}

\Return $\tilde{\setname{T}}$.
\end{algorithm}

\begin{algorithm}[]
\caption{$\mathrm{IdentifySref}(\tilde{\setname{N}}, \tilde{\setname{S}})$
    identifies the neigborhoods' reference states (\secref{sec:approxbias_offset_sref}).}
\label{alg:identify_sref}
\DontPrintSemicolon

\KwInput{
    A list of neighborhoods $\tilde{\setname{N}}$ and
    a list of state samples $\tilde{\setname{S}}$.
}
\KwOutput{
    A list of reference states $\srefset$.
}

Initialize lists of neighborhoods
    without reference states $\mathring{\setname{N}} \gets \tilde{\setname{N}}$ and
    with reference states $\check{\setname{N}} \gets \varnothing$,
    and a list of reference states $\srefset \gets \varnothing$
    (which will contain $\sref$ of neighborhoods in $\check{\setname{N}}$),
    and two variables:
    \texttt{ForAuxSref} to False and \texttt{CandidateSrefIsValid} to True.

\While{\emph{\texttt{CandidateSrefIsValid}} and $\mathring{\setname{N}}$ is not empty}{
    Set $\mathring{\setname{S}}$ to the list of the supports of neighborhoods
    in $\mathring{\setname{N}}$. \\
    Set $\check{\setname{S}}$ to the list of the supports of
    neighborhoods in $\check{\setname{N}}$. \\
    Clear the list of banned $\sref$, if any.
        \Comment*[r]{See the banning of $\sref$ in \lineref{line:ban_sref}}

    \For{Each $i = 1, 2, \ldots$ until
            the number of unique states in $\mathring{\setname{S}}$}{
        Set $\sref$ to the most-common unbanned support in
        $\mathring{\setname{S}}$ (with tie-breaking as in \defref{def:tie_breaking}). \\
        \If{\emph{\texttt{ForAuxSref}} and $(\sref$ is not in $\check{\setname{S}})$ }{
            Set \texttt{CandidateSrefIsValid} to False, ban this $\sref$, and \Continue.
                \label{line:ban_sref}
        }
        \For{Each neighborhood in $\mathring{\setname{N}}$}{
            \If{$\sref$ is contained in the support of this neighborhood}{
                Set $\mathring{t}$ to this neigborhood's anchor. \\
                Set $\srefset[\ \mathring{t}\ ] \gets \sref$.
                    \Comment*[r]{Either as a main or aux reference state}
                Remove this neighborhood from $\mathring{\setname{N}}$, then
                add it to $\check{\setname{N}}$.
            }
        }
        Set \texttt{CandidateSrefIsValid} to True, then \Break.
    }
    Set \texttt{ForAuxSref} to True.
        \Comment*[r]{Looking for the main $\sref$ is only in the 1st pass}
}

\Return $\srefset$.
\end{algorithm}

\begin{algorithm}[]
\caption{$\mathrm{CalibrateOffset}(
    \tilde{\setname{W}}, \tilde{\setname{N}}, \vecb{f})$
    calibrates the offsets of all approximators (\secref{sec:offset_calib}).}
\label{alg:calibrate_offset}
\DontPrintSemicolon

\KwInput{
    Lists of learned parameters $\tilde{\setname{W}}$
    and of neighborhoods $\tilde{\setname{N}}$,
    and a state feature function $\vecb{f}$.
}
\KwOutput{
    A list of calibrated prediction offsets $\tilde{\setname{O}}$, which
    corresponds to $\tilde{\setname{W}}$.
}

Initialize a list of learned parameters (representing approximators)
    without calibrated offsets $\mathring{\setname{W}} \gets \tilde{\setname{W}}$,
    as well as a list of calibrated prediction offsets $\tilde{\setname{O}} \gets \varnothing$.

\For{Each neighborhood's anchor $\tilde{t}$
        with the main $\sref$ in $\tilde{\setname{N}}$}{
    Set $\tilde{\vecb{w}}$ to the entry of $\tilde{\setname{W}}$ at $\tilde{t}$,
        and remove the entry $\tilde{\vecb{w}}$ from $\mathring{\setname{W}}$. \\
    $\tilde{\setname{O}}[\tilde{\vecb{w}}] \gets
        - \vecb{f}^\intercal(\sref)\ \tilde{\vecb{w}}$.
    \Comment*[r]{This implements \eqref{equ:offset_main_sref}}
}

\While{$\mathring{\setname{W}}$ is not empty}{
    Choose any entry $\mathring{\vecb{w}}$ in $\mathring{\setname{W}}$. \\
    Look up the auxilary $\sref'$ of a neighborhood whose approximator is
        represented by $\mathring{\vecb{w}}$. \\
    \For{Each calibrated offset $\tilde{o}$ in $\tilde{\setname{O}}$}{
        Set $\tilde{\vecb{w}}$ to the learned parameter corresponding to $\tilde{o}$. \\
        \If{$\sref'$ is in the support of a neighborhood whose approximator
                is represented by $\tilde{\vecb{w}}$}{
            $\tilde{\setname{O}}[\mathring{\vecb{w}}] \gets
            (\vecb{f}^\intercal(\sref')\ \tilde{\vecb{w}} + \tilde{o})
            - \vecb{f}^\intercal(\sref')\ \mathring{\vecb{w}}$.
            \Comment*[r]{This implements \eqref{equ:offset_aux_sref}}
            Remove the entry $\mathring{\vecb{w}}$ from $\mathring{\setname{W}}$,
                then \Break.
        }
    }
}
\Return $\tilde{\setname{O}}$.
\end{algorithm}

\section{Experimental setup} \label{nbwpval:xprmtsetup}

In this section, we describe the setup of our experiments, whose
results are presented in \secref{nbwpval:xprmtresult}.
We begin with the environment specifications in \secref{sec:env_nbwpval},
followed by state features and state kernels (\secref{sec:state_fea_ker}).
Then, various experiment schemes are described in \secref{sec:xprmt_scheme}.
Lastly, we explain the evaluation metrics and protocols
in \secref{nbwpval:xprmt_metric}.

\subsection{Environments} \label{sec:env_nbwpval}

We evaluate our proposed method on environments whose
all stationary deterministic policies induce unichain MCs.
Those environments are formed by connecting a recurrent MDP to a transient structure.
Each environment is identified by a mnemonic, \eg x123c, where
the first letter (\ie `x') denotes a particular recurrent MDP,
followed by a total number of states (\ie `123'), and an identifier for
the transient structure (\ie `c'), which will be explained shortly.

We use the following recurrent MDPs from the literature.
They are listed by their single-letter identifiers
(which become the first letter in their environment mnemonics) as follows:
`h' is with 3 recurrent states \citep{hordijk_1985_disc}, and
`c' is with 5 recurrent states \citep{strens_2000_bfrl}.
In these recurrent MDPs, every state has two available actions.

For transient structures, we use a generic structure as depicted
in \figref{fig:unichain_generic}.
A specific instance of it is mainly characterized by
the number of streams of transient states, for which multiple streams of
$2, 3, \ldots$ are denoted by a single letter of `b', `c', $\ldots$, respectively
(which becomes the last letter in the environment mnenomic).
For simplicity, all streams have an identical transition and reward structure.
They are also all connected to an arbitrary recurrent state.
In all streams, every transient state has two available actions, where
every action leads to two possible outcomes, namely
its own state (self-loop, self-transition) and another state.

In these environments, the initial state is always transient.
Specifically, the initial state distribution $\isd$ assigns a probability of
$1/\settrsize$ for every transient state and $0$ for every recurrent state.

\begin{figure}[t]
\centering

\resizebox{0.75\textwidth}{!}{
\begin{tikzpicture}[
node distance = 2.5cm and 2.5cm, on grid,
-latex, %
semithick, %
strans/.style={circle, fill=green,
draw=black, text=black, minimum width = 1cm},
srecur/.style={circle, fill=lightgray,
draw=black, dashed, text=black, minimum width = 1cm},
]
\node[strans](A)[] {$s^0$};
\node[strans](B)[right=of A] {$s^1$};
\node[srecur](I)[below right=of B] {$s^8$};
\node[strans](D)[below left=of I] {$s^3$};
\node[strans](C)[left=of D] {$s^2$};
\node[strans](E)[above right=of I] {$s^4$};
\node[strans](F)[right=of E] {$s^5$};
\node[strans](G)[below right=of I] {$s^6$};
\node[strans](H)[right=of G] {$s^7$};

\path (A) edge [out=30, in=120, red] node[] {} (B);
\path (A) edge [out=30, in=90, red, loop] node[] {} (A);
\path (A) edge [out=-30, in=210, blue] node[] {} (B);
\path (A) edge [out=-30, in=-90, loop, blue] node[] {} (A);

\path (B) edge [out=-30, in=120, red] node[] {} (I);
\path (B) edge [out=-30, in=90, loop, red] node[] {} (B);
\path (B) edge [out=-60, in=180, blue] node[] {} (I);
\path (B) edge [out=-60, in=-120, blue, loop] node[] {} (B);

\path (I) edge [out=90, in=30, dashed, loop, red] node[] {} (I);
\path (I) edge [out=270, in=150, dashed, loop, blue] node[] {} (I);

\path (C) edge [out=30, in=135, red] node[] {} (D);
\path (C) edge [out=30, in=90, red, loop] node[] {} (C);
\path (C) edge [out=-30, in=225, blue] node[] {} (D);
\path (C) edge [out=-30, in=-90, loop, blue] node[] {} (C);

\path (D) edge [out=60, in=210, red] node[] {} (I);
\path (D) edge [out=60, in=120, loop, red] node[] {} (D);
\path (D) edge [out=0, in=240, blue] node[] {} (I);
\path (D) edge [out=0, in=-90, blue, loop] node[] {} (D);

\path (E) edge [out=210, in=60, red] node[] {} (I);
\path (E) edge [out=210, in=120, loop, red] node[] {} (E);
\path (E) edge [out=240, in=0, blue] node[] {} (I);
\path (E) edge [out=240, in=300, blue, loop] node[] {} (E);

\path (F) edge [out=150, in=60, red] node[] {} (E);
\path (F) edge [out=150, in=90, red, loop] node[] {} (F);
\path (F) edge [out=210, in=-30, blue] node[] {} (E);
\path (F) edge [out=210, in=-90, loop, blue] node[] {} (F);

\path (G) edge [out=120, in=-30, red] node[] {} (I);
\path (G) edge [out=120, in=60, loop, red] node[] {} (G);
\path (G) edge [out=180, in=-60, blue] node[] {} (I);
\path (G) edge [out=180, in=270, blue, loop] node[] {} (G);

\path (H) edge [out=150, in=30, red] node[] {} (G);
\path (H) edge [out=150, in=90, red, loop] node[] {} (H);
\path (H) edge [out=210, in=-30, blue] node[] {} (G);
\path (H) edge [out=210, in=-90, loop, blue] node[] {} (H);

\end{tikzpicture}
} %

\caption{The diagram of an instance `x9d' of generic unichain MDPs.
In this instance, there are eight transient states (green solid circles) that
belongs to four (labelled `d') streams of transient states
(here, each stream consists of two transient states).
In every transient state, there are two available actions (red and blue solid edges),
each has two possible outcomes: either the current state (self-loop) or another state.
All four stream of transient states are connected to a recurrent MDP (labelled `x'),
which has a single recurrent state (a gray dashed circle) with
two self-loop actions (red and blue dashed edges).
If there were multiple recurrent states, they would form a single recurrent class,
which could be lumped together and represented by a single absorbing state in the diagram.
}
\label{fig:unichain_generic}

\end{figure}
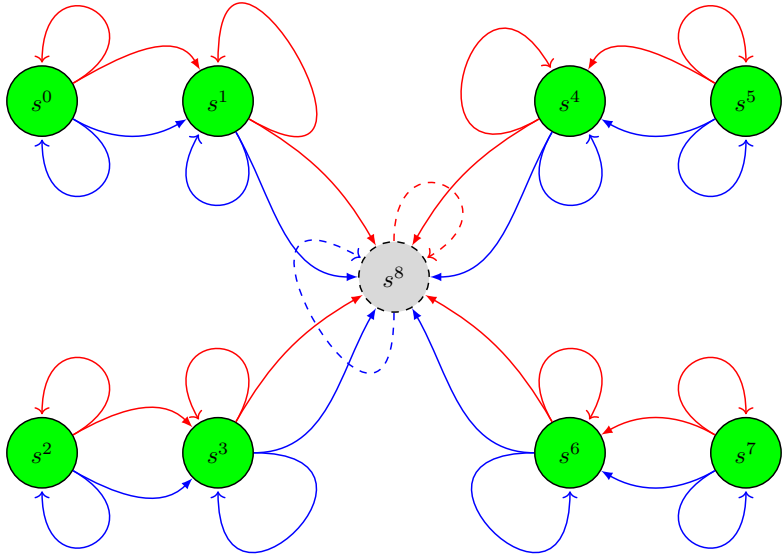

\subsection{State features and state kernels}
\label{sec:state_fea_ker}

Since feature extraction and selection are not the focus of this work,
we use a random feature vector to represent a state.
That is, the state feature
$\vecb{f}(s) \in \real{\dim(\vecb{w})}, \forall s \in \setname{S}$,
is constructed by randomly sampling each $i$-th dimension feature value
as $f_i(s) \sim \setname{G}(\mu= \mathrm{index}(s), \sigma^2=1)$,
where $\setname{G}$ is a Gaussian distribution with
a mean $\mu$ (which is set to the non-negative integer index of a state $s$)
and a unit variance $\sigma^2$.

The aforementioned state features accomodates the use of Gaussian RBF kernels
for computing the state-distribution distances using MMD (\secref{sec:neighborhood}).
For experiments, we set the kernel width in \eqref{equ:gauss_kernel} to~$1$ after
the variance of the standard Gaussian distribution for $f_i(s)$.

\subsection{Experiment schemes} \label{sec:xprmt_scheme}

Our experiment schemes are products of three sources of variations.
\textbf{First} is the number of feature dimensions, indicated by
the feature-to-state dimensional proportion $\rho$.
Since we need at least two approximators (anchors) representing
two states classes in unichain MDPs,
the strict computation limit in \eqref{equ:nparam_constraint} yields
an upper bound of $\mathbbm{\rho} < 1/(\nanchor = 2)$.
For experiments therefore, we select the following six feature dimension regimes,
which in turn constraint the maximum number of anchors $\nanchormax$
according to \eqref{equ:nparam_constraint}.
They are collected as a set of tuples $(\rho, \nanchormax)$ as follows,
\begin{equation}
\Big\{ (0.49, \floor*{2.04}), (0.33, \floor*{3.03}), (0.19, \floor*{5.3}),
(0.09, \floor*{11.1}), (0.06, \floor*{16.7}),
(0.03, \floor*{33.3}) \Big\},
\label{equ:nfea_nanchor_tuples}
\end{equation}
where $\floor*{x}$ indicates the flooring operation,
\ie the greatest integer less than or equal to $x$.

\textbf{Second} is based on algorithmic variations of the proposed method
(\secref{sec:approxbias}), as well as the baseline.
Such variations come from varying the number of anchors\footnote{
    Note that if some neighborhoods do not have joint supports,
    there will be fewer anchors than what is specified to
    \algrefand{alg:training_nbwpval}{alg:identify_anchor},
    as explained in \secref{sec:approxbias_offset_sref}.
} and the state-distribution distance metrics.
There are ten approximation schemes in four groups as follows.

\begin{enumerate} [label=\roman{*}.]
\item \textbf{`buw' and `p01'}:
These mnemonics refer to the unweighted baseline and
the proposed one-approximator scheme, respectively.
The latter `p01' has a single anchor at $t=0$, and uses the seminorm LSTD
because $\bar{p}_0 = p^\star$ has only recurrent states as its support
(hence, $\diag(\bar{p}_0)$ is PSD).
On the other hand, `buw' uses a (norm) LSTD approximator because
the state distribution is set to be uniform over all states.
This uniformity also implies that the state-wise value errors are
unweighted in $\epb$ \eqref{equ:epb}.
We are not aware of any other baseline besides `buw' for estimating the bias values
from transient and recurrent states with parametric function approximators.
Note that the existing methods are with a single norm LSTD,
but they are applicable solely for recurrent MDPs \citep{yu_2009_lspe},
or unichain MDPs with one zero-reward recurrent state \citep{bradtke_1996_lstd}.

\item \textbf{`p02am', `p02tv', `p02ot', and `p02md'}:
These mnemonics refer to the proposed two-approximator schemes using various ways
to determine the second anchor location, namely
at a given maximum absorption time $\tabsmax$ (`am'), or
based on three different state-distribution distance metrics:
total variation (TV, `tv'), optimal transport (OT, `ot'), and MMD ('md').
The variant with $\tabsmax$ (`p02am') is instantiated so that
the first neighborhood's state distribution has all transient states in its support.
It also matches the existing method for unichain MDPs with
one 0-reward recurrent state (as explained in \secref{sec:unify_existing}).
The variant with TV (`p02tv') is motivated by the fact that
TV is a typical metric for determining the mixing time.
We also experiment with an OT-based variant (`p02ot') because OT considers
the state distance (as the underlying non-probabilistic metric)\footnote{
    We use the OT implementation of \citet{flamary_2021_pot}.},
making it on par with the MMD variant (`p02md').
Moreover, the neighborhood specification based on OT can serve as ground-truth
whenever OT uses a state distance metric that does not depend on state representation
(\cf MMD involves a state kernel whose hyperparameters are heuristically determined).
For OT computation therefore, we use a behavioral state similarity derived from
environment properties, such as transition and reward functions.
It is the $\pi$-bisimulation pseudo-metric \citep[\thm{2}]{castro_2020_sim},
specifically its state-action counterpart \citep[\lmm{7}]{lan_2021_mcrl}.
That is, the distance between two states $s$ and $s'$ under a policy $\pi$ is
given by
$d_\pi(s, s') \eqdef \max_{a \in \setname{A}} |q_\gamma^\pi(s, a) - q_\gamma^\pi(s', a)|$,
where $q_\gamma^\pi$ is the discounted state-action value of $\pi$
(here, the discount factor $\gamma$ is set to $0.999$).\footnote{
    To our knowledge, there is no behavioral state similarity metric for
    non-discounted rewards thus far.
}
The variant `p02md' relies on \algref{alg:approx_anchor} to determine
the anchor locations based on the MMD metric.
\label{item:p02_variant}

\item \textbf{`paxtv', `paxot', and `paxmd'}:
These mnemonics refer to the proposed schemes with the maximum number of anchors
$\nanchormax$ (`ax') as allowed by the computation constraint
\eqref{equ:nparam_constraint} given the feature-to-state dimensional proportion $\rho$.
Such $\nanchormax$ values are specified in \eqref{equ:nfea_nanchor_tuples}.
The three variants here are due to different distribution distance metrics
with the same justification as for `p02$\cdot\ \cdot$' in
\itemref{item:p02_variant} above.

\item \textbf{`pinf'}:
This mnemonic refers to the proposed stepwise-approximator variant, where
there are as many anchors as timesteps.
In theory, there is an infinite number (`inf') of anchors since the horizon is infinite.
\end{enumerate}

\textbf{Third} is whether the experiments involve approximation due to sampling
the initial state $S_0 \sim \isd$ and the next state $S_{t+1} \sim p(\cdot|s_t, a_t)$,
which affects the next reward $R_{t+1} \eqdef r(s_t, a_t, s_{t+1})$ for
a deterministic reward function $r(\cdot)$ given $s_t$, $a_t$, and $s_{t+1}$.
This leads to two kinds of experiments, namely sampling and non-sampling.
Both share the following common properties (which are feasible to obtain
for environments described in \secref{sec:env_nbwpval}).
\begin{itemize}
\item The exact gain of a policy is used so that
    the effect of our proposed method can be isolated.

\item Each experiment-episode (trial) is run long enough in order to
    well approximate the infinite-horizon MDP model.
    The maximum timestep in each experiment-episode is set to
    a multiple of the mixing time, \ie $\txepmax \gets 10 \tmix$.
    Here, $\tmix$ is exactly computed with high precision.

\end{itemize}
In sampling experiments, the scheme `buw' is not feasible because
a model-free RL agent generally cannot sample the states uniformly
during the whole interaction with its environment.
The same goes to the scheme `p02am' in that $\tabsmax$ is unknown to the agent.
The schemes involving TV and OT (\ie `p$\cdot\ \cdot$tv', `p$\cdot\ \cdot$ot')
also cannot be conducted in sampling experiments since they require constructing
intermediate empirical probabilities based on samples (\secref{sec:neighborhood}).

\subsection{Experimental evaluation metrics and protocols}
\label{nbwpval:xprmt_metric}

The training of a system of approximators is as follows.
We select one policy uniformly at random from the set of
all stationary deterministic policies, and
sample the random feature values as specified in \secref{sec:state_fea_ker}.
Then, we run multiple $\nxep$ independent experiment-episodes (trials),
each is with $\txepmax+1$ timesteps, as prescribed in \algref{alg:training_nbwpval}.
This training procedure is carried out for each environment and
each approximation scheme (\secref{sec:xprmt_scheme}).

The quality of a system of approximators is indicated by
the accumulative total error of the square roots of stepwise errors
along one evaluation experiment-episode.
That is,
\begin{equation}
\varepsilon_{\!x} \eqdef \sum_{t=0}^{\txepmax} \sqrt{e_{\!x}^t (\vecb{w}_{\!t})},
\ \text{
    where $x$ is either $\mathbb{P} \mathbb{B}$ or $\mathrm{MS}$, and
    $\vecb{w}_{\!t}$ is of the neighborhood to which $t$ belongs}.
\label{equ:tot_rooterr}
\end{equation}
We perform evaluations using both stepwise $\epbt$ and $\emst$, which utilize
the stepwise state distribution $p^t$ to weight state-wise errors as in
\eqref{equ:epb} and \eqref{equ:ems}, respectively.
In particular, the exact $p^t$ is used so that there
is no sampling-error in evaluation
(hence, one experiment-episode is sufficient for evaluation).

The use of $\epbt$ and $\emst$ in \eqref{equ:tot_rooterr} yields
two evaluation metrics, \ie $\epbtot$ and $\emstot$.
The former $\epbtot$ serves as the gold standard since $\epbt$ is
what the stepwise approximator (`pinf') minimizes.
The $\epbtot$ value is computed by plugging-in
the learned (trained) parameter $\vecb{w}_{\!t}$ into
the $\epb$ formula \eqref{equ:epb} with $\tilde{p} \gets p^t$.
On the other hand, the latter $\emstot$ is natural whenever
the true value is known (but is never told to the RL agent)
as for the environments described in \secref{sec:env_nbwpval}.
For this, we predict the value of every state at every timestep $t$ using
$\vecb{w}_{\!t}$, apply the prediction offset, \ie
$\hat{v}(s) \gets \vecb{w}_{\!t}^\intercal \vecb{f}(s) + o_t$,
then plug-in the predicted value $\hat{v}(s)$ to $\ems$ formula \eqref{equ:ems}
weighted by $\tilde{p} \gets p^t$.

\section{Experimental results} \label{nbwpval:xprmtresult}

In this section, we present the experimental results, whose setup is
described in the previous \secref{nbwpval:xprmtsetup}.
There are two groups of results, namely non-sampling and sampling experiments,
as explained in \secref{sec:xprmt_scheme}.
Each is evaluated with two error metrics, namely $\epbtot$ and $\emstot$
(\secref{nbwpval:xprmt_metric}).

\subsection{Non-sampling experimental results}

\tblrefto{tbl:xprmt_exact_049}{tbl:xprmt_exact_003} present the non-sampling results
of ten schemes in six feature-to-state dimensional ratios (\secref{sec:xprmt_scheme})
and six environments, modelled as unichain MDPs with transient states
(\secref{sec:env_nbwpval}).

From the $\epbtot$ standpoint, the results are as anticipated in that
the lowest error is from the stepwise approximator (pinf), whereas
the second and third lowests are from the maximum number of approximators
(pax$\cdot\ \cdot$) allowed by the feature-to-state dimensional ratios.
More specifically, those with MMD (paxmd) are on par with OT (paxot)
in most cases, where occasionally those with TV (paxtv) become
either the second or third lowest errors (in lieu of paxmd or paxot).

The advantage of having multiple approximators is also obvious based on $\epbtot$,
especially as the dimensional ratio $\rho$ decreases.
Those with a single approximator (\ie buw and p01) have up to
100-fold larger errors than those with two approximators (p02$\cdot\ \cdot$).
The similar behaviour is also observed between `p02$\cdot\ \cdot$' and
those with even more approximators, \ie `pax$\cdot\ \cdot$'.
Among two-approximator schemes, those with a given $\tabsmax$ (p02am) do not
necessarily yield the lowest $\epbtot$.
This is because the first approximator of p02am may not estimate the values of
the least number of recurrent states, compared to p02tv, p02ot, and p02md.
Recall that for `p02$\cdot\ \cdot$', the first approximator should be devoted,
as much as possible, to estimating transient states.
Some recurrent states however, may already have non-zero probabilities
before $\tabsmax$.

From the $\emstot$ standpoint, the stepwise approximator (pinf) achieves
the lowest value or at least, the second lowest in some environments.
This is a direct result of obtaining small $\epbtot$.
In contrast, the other approximator schemes do not achieve small $\epbtot$.
As a consequence, their $\epbtot$ do not correlate with their $\emstot$ counterparts.
That is, lower $\epbtot$ do not necessarily mean lower $\emstot$.
This phenomenon is also observed by \citet[\secc{3.2.2}]{dann_2014_petd}.
Recall that directly minimizing $\emstot$ is not possible in RL since it requires
the knowledge of true (ground-truth) state values as in \eqref{equ:ems}.

Interestingly, the second lowest $\emstot$ is achieved by
the two-approximator scheme with a given $\tabsmax$ (\ie p02am) in most cases;
otherwise, p02am achieves even better results as the lowest.
The third lowest $\emstot$ is attained by various approximator schemes,
including those with one approximators (namely buw and p01).
Such second and third lowest $\emstot$ are up to 100-fold larger than the lowest.

\subsection{Sampling-based experimental results}

\figref{fig:xprmtresult_sampling} depicts the experimental results of
sampling-based approximation settings.
They are from environments \textbf{h6} and \textbf{c10} with
six and ten states, respectively.
We evaluated two numbers of feature dimensions on every environment.
This yields 4 subfigures: each is with two evalution error metrics, namely
$\epbtot$ (total repb) and $\emstot$ (total rems),
on the left and right vertical axes, respectively.

As can be observed, the magnitude ordering of the final $\epbtot$ match with
the exact results (hence, the theory).
This is obvious from results on environment \textbf{h6}, where
pinf attains the lowest final $\epbtot$, followed by paxmd, p02md,
then p01 (the highest).
We speculate that a similar pattern will emerge on the plot of
environment \textbf{c10} as the number of experiment-episodes increases.
That is, the error of pinf will keep decreasing until it crosses those of
p02md then of paxmd (just as it crosses p01).

Such crossing occurs because the number of available samples per experiment-episode
is inversely proportional to the number of anchors (approximators).
In one extreme, each stepwise approximator of the pinf scheme receives only
one sample per experiment-episode.
On the other extreme, a single approximator of the p01 scheme receives
as many samples as timesteps in an experiment-episode.
This therefore results in p01 has lower $\epbtot$ than pinf in the beginning,
but then it plateaus at relatively high errors after some number of samples
(the initial error decrease is not captured in the plot due to
the coarse experiment-episode resolution in the horizontal axis).
The $\epbtot$ behaviours of p02md and paxmd are anticipated to be in
between these two extremes.

On environments \textbf{h6} and \textbf{c10}, the progression and final values
of $\emstot$ roughly follow those of $\epbtot$.
Generally though, the rate of change of $\emstot$ is not as significant as $\epbtot$.
A substantial drop in $\epbtot$ may correspond to merely small drop in $\emstot$,
likewise with the increase.
We can also observe less number of crossing in that $\emstot$ values of
most schemes stay above or below the others: moving up, down or plateau
together simultaneously.

\begin{figure*}
\centering

\begin{subfigure}{0.495\textwidth}
\includegraphics[width=\textwidth]{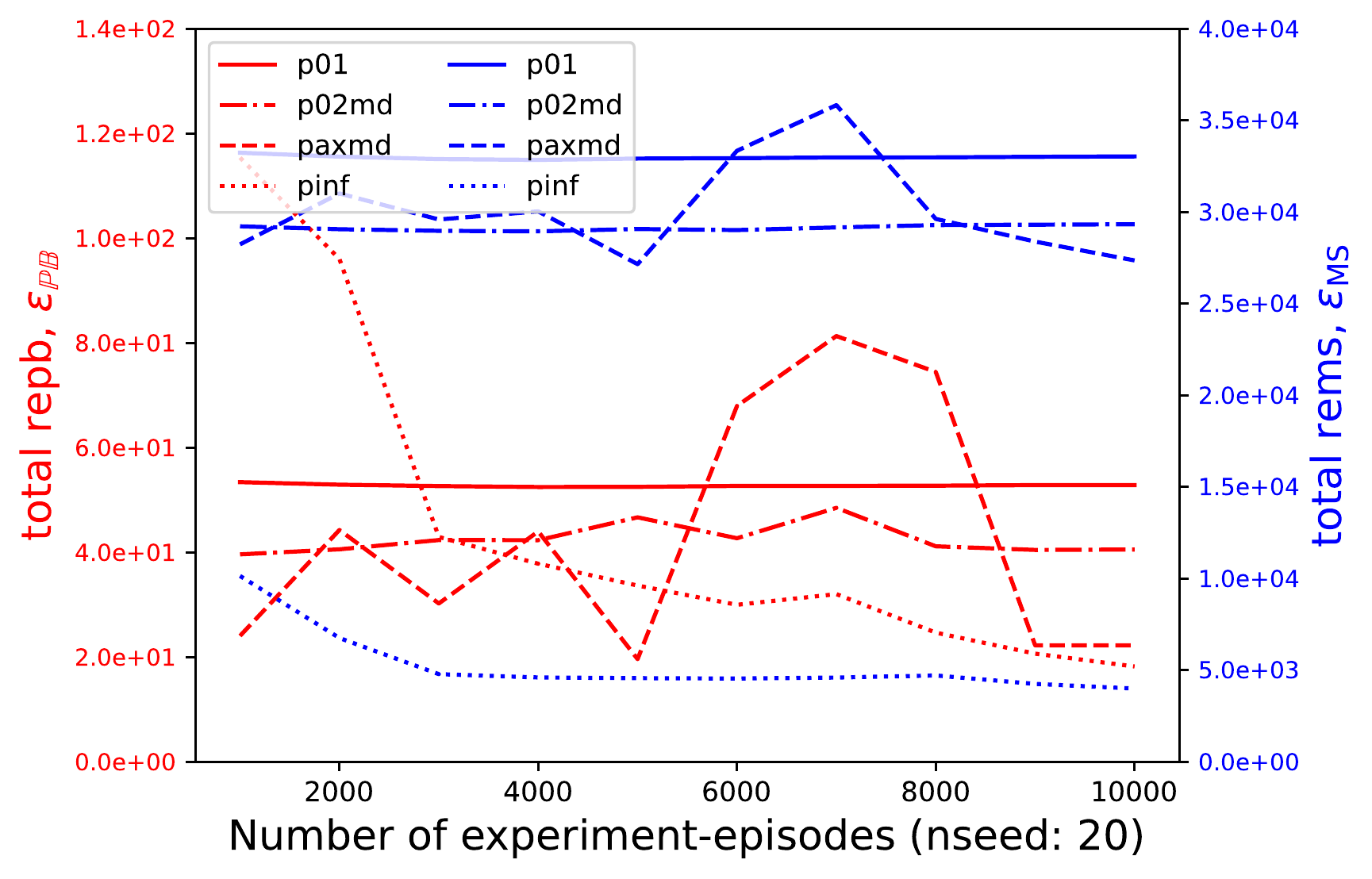}
\subcaption{\textbf{h6} with 1 feature ($\rho=0.19, \nanchormax=5$)}
\end{subfigure}
\begin{subfigure}{0.495\textwidth}
\includegraphics[width=\textwidth]{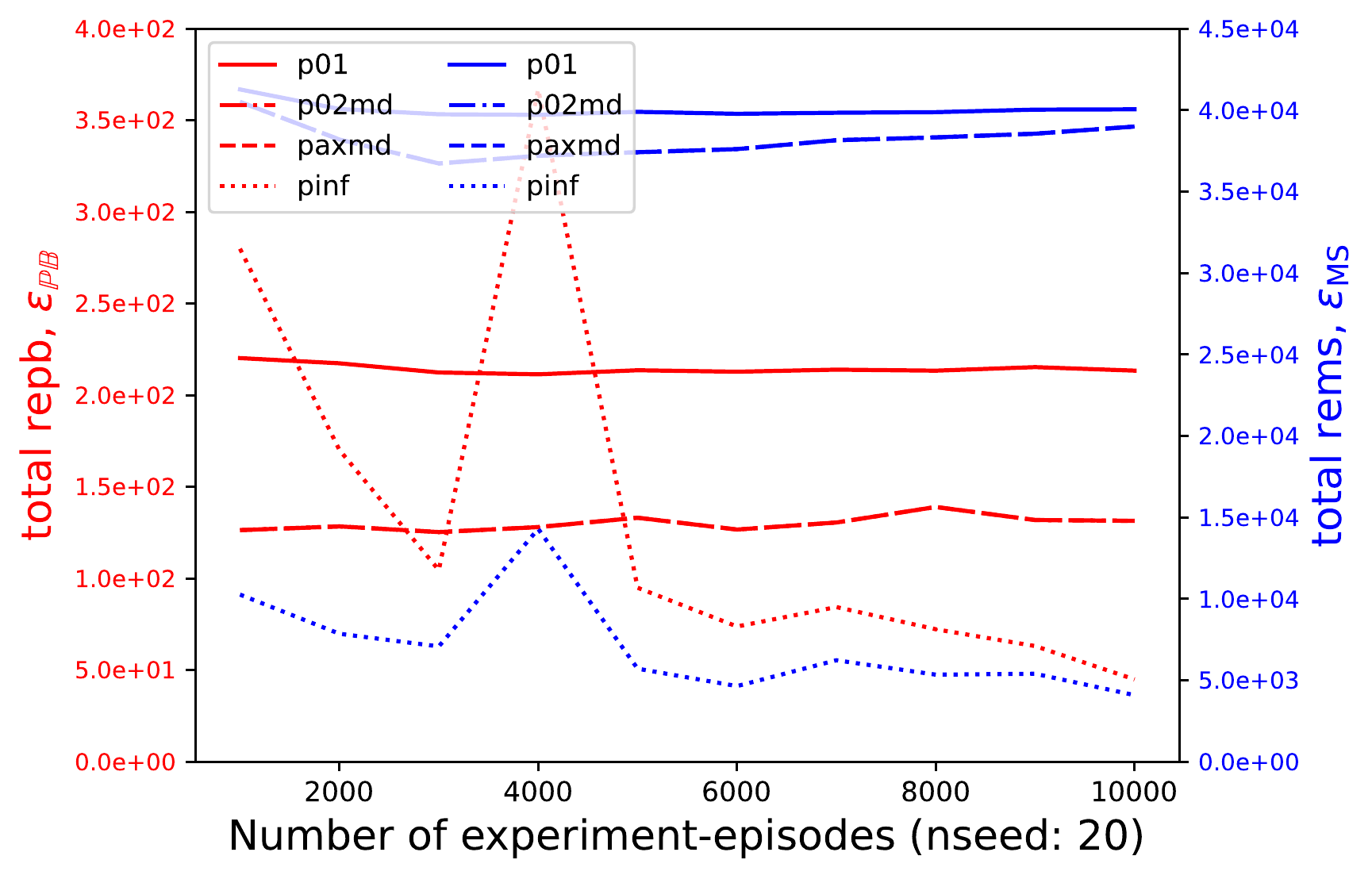}
\subcaption{\textbf{h6} with 2 features ($\rho=0.49, \nanchormax=2$)}
\end{subfigure}
\begin{subfigure}{0.495\textwidth}
\includegraphics[width=\textwidth]{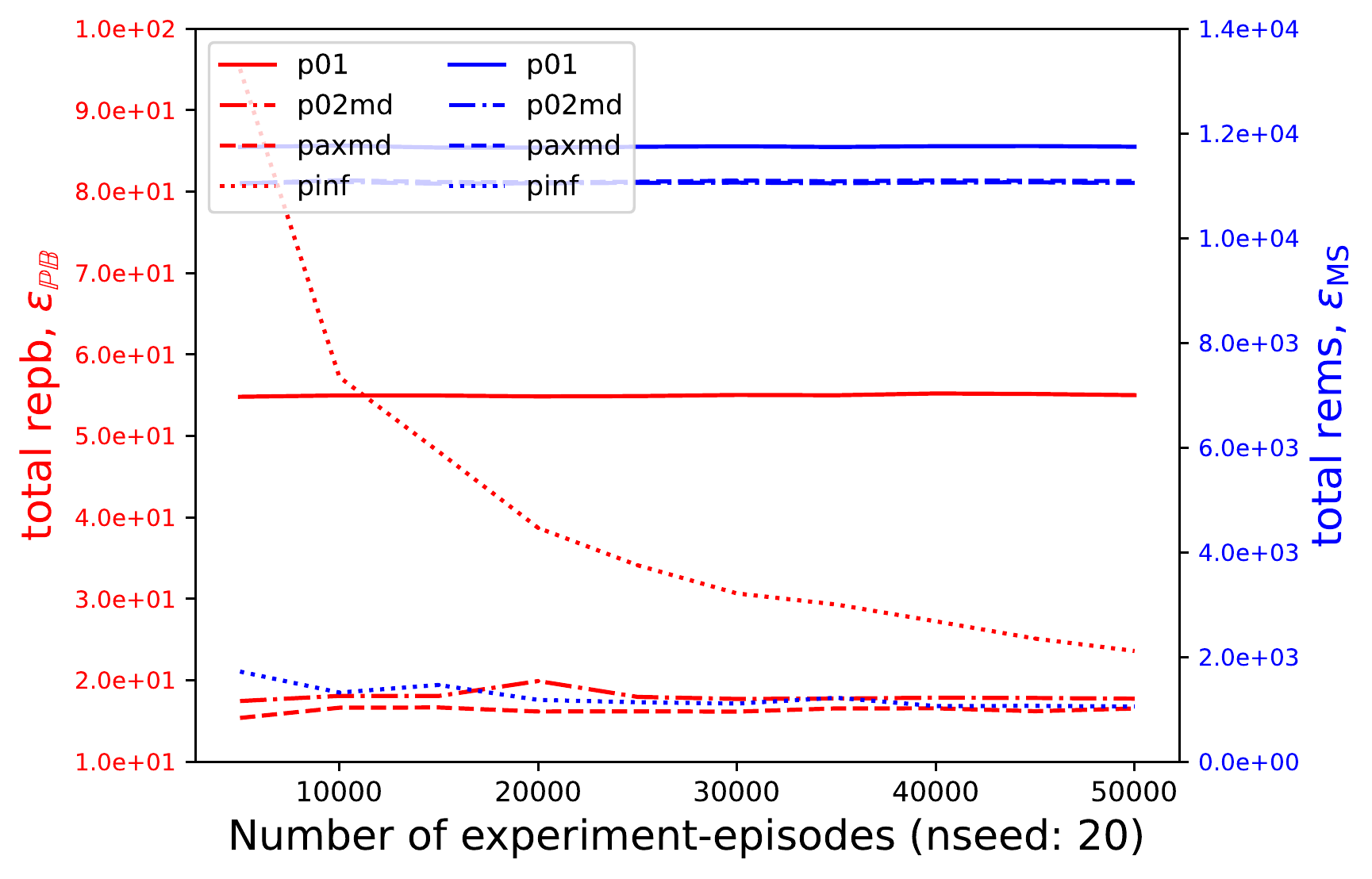}
\subcaption{\textbf{c10} with 3 features ($\rho=0.33, \nanchormax=3$)}
\end{subfigure}
\begin{subfigure}{0.495\textwidth}
\includegraphics[width=\textwidth]{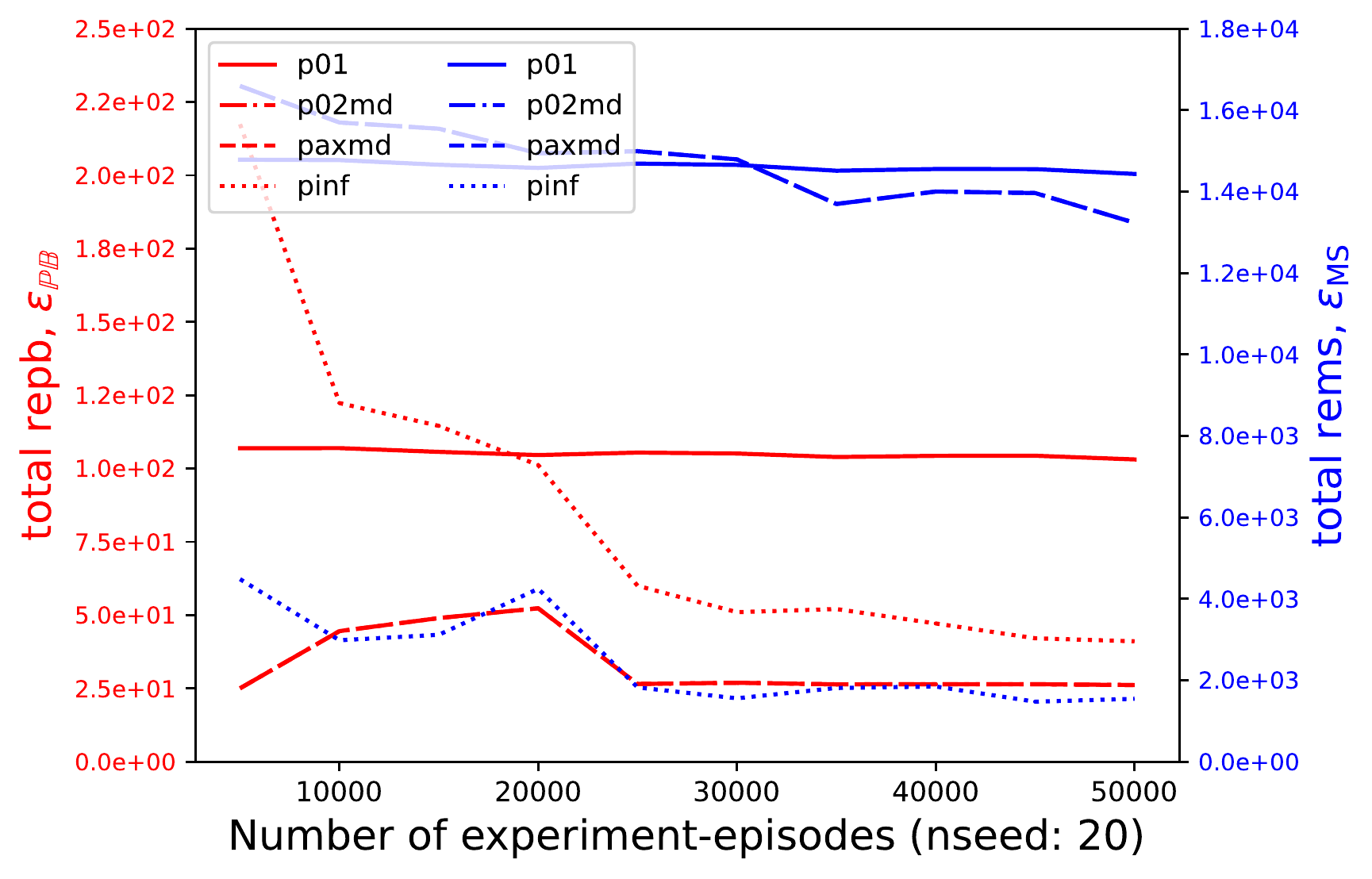}
\subcaption{\textbf{c10} with 4 features ($\rho=0.49, \nanchormax=2$)}
\end{subfigure}

\caption{Sampling-based approximation results on environments
\textbf{h6} (top row) and \textbf{c10} (bottom row).
In each subplot, the {\color{red} left red axis is $\epbtot$ (total repb)},
whereas the {\color{blue} right blue is $\emstot$ (total rems)},
as in \eqref{equ:tot_rooterr}.
Each line is interpolated from 10 data points, and
averaged across 20 repetitions (20 different random-number-generator seeds).
Note that the horizontal axes are different for \textbf{h6} and \textbf{c10},
where there are 10,000 and 50,000 total experiment-episodes (trials), respectively.
The ticks in the vertical axes also vary across subplots.
}
\label{fig:xprmtresult_sampling}
\end{figure*}

\begin{landscape}

\begin{table}[t]
\caption{Non-sampling experimental results with feature-to-state dimensional ratio $\rho = 0.49$,
allowing a maximum of $\nanchormax = 2$ anchors.
The left most column contains environment identifiers (\secref{sec:env_nbwpval}),
then the next ten columns contain $\epbtot$ (total repb), whereas
the last ten columns (gray highlighted) contain $\emstot$ (total rems),
as in \eqref{equ:tot_rooterr}.
For each evaluation metric, its ten columns represent ten schemes (\secref{sec:xprmt_scheme})
with the following mnemonics:
`buw' is for the baseline unweighted scheme,
`p' is for the proposed general schemes whose variations are indicated by
`02', `ax', or `inf' for two anchors, the maximum number of anchors,
or stepwise anchors, respectively, followed by
`am', `tv', `ot', or `md' for the maximum absorption time, total variation,
optimal transport, and MMD state distribution distance metrics, respectively.
The {\color{red} red}, {\color{darkgreen} green}, and {\color{blue} blue}
highlighted numbers indicate {\color{red} the lowest},
{\color{darkgreen} the second lowest}, and {\color{blue} the third lowest}
error values in each 10-column group per row, respectively.
We desire the lowest error (red).
The NaN (Not a Number) indicates that $\floor*{\rho \setsize{S}}$
yields either 0 feature dimension (hence, it cannot be implemented) or
the same feature dimension as that of the lower $\rho$.
In the latter case, the experiment is carried out only for the lower $\rho$
(which induces a higher $\nanchormax$).
}
\label{tbl:xprmt_exact_049}

{\footnotesize %
\setlength{\tabcolsep}{2.25pt} %
\newcolumntype{g}{>{\columncolor[gray]{0.8}}r}
\begin{tabular}{lrr|rrrr|rrr|rgg|gggg|ggg|g}
\toprule
{} & \multicolumn{10}{c}{total repb, $\epbtot$} & \multicolumn{10}{c}{total rems, $\emstot$} \\
{} &     buw &     p01 &   p02am &   p02tv &   p02ot &   p02md &   paxtv &   paxot &   paxmd &    pinf &     buw &     p01 &   p02am &   p02tv &   p02ot &   p02md &   paxtv &   paxot &   paxmd &    pinf \\
\midrule
\textbf{h6  } & 5.1e+03 & 2.1e+02 & 3.2e+02 & 4.8e+02 & {\color{darkgreen}9.9e+01} & {\color{blue}1.3e+02} & 4.8e+02 & {\color{darkgreen}9.9e+01} & {\color{blue}1.3e+02} & {\color{red}2.8e-03} & 5.5e+04 & {\color{blue}4.0e+04} & {\color{darkgreen}2.0e+03} & 4.2e+04 & {\color{blue}4.0e+04} & {\color{blue}4.0e+04} & 4.2e+04 & {\color{blue}4.0e+04} & {\color{blue}4.0e+04} & {\color{red}1.1e+03}\\
\textbf{h36 } & 8.2e+03 & 2.3e+02 & 2.0e+02 & {\color{darkgreen}1.2e+02} & {\color{blue}1.3e+02} & 1.4e+02 & {\color{darkgreen}1.2e+02} & {\color{blue}1.3e+02} & 1.4e+02 & {\color{red}6.0e-02} & 1.2e+05 & {\color{blue}1.1e+05} & {\color{darkgreen}1.4e+03} & 1.2e+05 & 4.1e+05 & 1.4e+05 & 1.2e+05 & 4.1e+05 & 1.4e+05 & {\color{red}1.3e+03}\\
\textbf{h36c} & 1.3e+04 & 2.8e+02 & 2.4e+02 & {\color{darkgreen}1.5e+02} & 2.1e+02 & {\color{blue}1.9e+02} & {\color{darkgreen}1.5e+02} & 2.1e+02 & {\color{blue}1.9e+02} & {\color{red}4.0e-02} & 1.9e+05 & {\color{blue}1.0e+05} & {\color{red}2.1e+03} & 4.3e+05 & 6.4e+05 & 4.4e+05 & 4.3e+05 & 6.4e+05 & 4.4e+05 & {\color{darkgreen}2.7e+03}\\
\textbf{h70 } & 1.3e+04 & 3.8e+02 & 3.4e+02 & {\color{darkgreen}2.4e+02} & {\color{blue}2.9e+02} & 3.2e+02 & {\color{darkgreen}2.4e+02} & {\color{blue}2.9e+02} & 3.2e+02 & {\color{red}1.1e+00} & 3.9e+05 & 2.0e+05 & {\color{darkgreen}2.7e+03} & 2.3e+05 & {\color{blue}1.9e+05} & 5.5e+05 & 2.3e+05 & {\color{blue}1.9e+05} & 5.5e+05 & {\color{red}2.6e+03}\\
\textbf{h100} & 1.9e+04 & 5.3e+02 & 4.9e+02 & {\color{blue}3.6e+02} & {\color{darkgreen}3.4e+02} & 4.2e+02 & {\color{blue}3.6e+02} & {\color{darkgreen}3.4e+02} & 4.2e+02 & {\color{red}5.0e+00} & 7.4e+05 & 3.2e+05 & {\color{red}4.7e+03} & {\color{blue}2.3e+05} & 3.7e+05 & 2.5e+05 & {\color{blue}2.3e+05} & 3.7e+05 & 2.5e+05 & {\color{darkgreen}5.7e+03}\\
\textbf{c10 } & 1.6e+04 & 2.9e+02 & 2.5e+02 & 1.2e+02 & {\color{blue}7.3e+01} & {\color{darkgreen}5.7e+01} & 1.2e+02 & {\color{blue}7.3e+01} & {\color{darkgreen}5.7e+01} & {\color{red}7.4e-06} & 2.4e+05 & {\color{blue}1.6e+04} & {\color{darkgreen}1.1e+03} & {\color{blue}1.6e+04} & 1.7e+04 & 1.7e+04 & {\color{blue}1.6e+04} & 1.7e+04 & 1.7e+04 & {\color{red}3.2e+02}\\
\textbf{c35 } & 1.8e+05 & 1.2e+02 & 7.0e+01 & {\color{darkgreen}3.4e+01} & {\color{blue}3.9e+01} & 4.2e+01 & {\color{darkgreen}3.4e+01} & {\color{blue}3.9e+01} & 4.2e+01 & {\color{red}1.2e-02} & 4.1e+06 & {\color{blue}2.4e+04} & {\color{darkgreen}4.0e+02} & 4.2e+04 & 4.6e+04 & 3.9e+04 & 4.2e+04 & 4.6e+04 & 3.9e+04 & {\color{red}3.1e+02}\\
\textbf{c35c} & 1.3e+04 & 6.4e+01 & 4.6e+01 & {\color{darkgreen}2.4e+01} & {\color{blue}2.8e+01} & 2.9e+01 & {\color{darkgreen}2.4e+01} & {\color{blue}2.8e+01} & 2.9e+01 & {\color{red}1.9e-02} & 2.6e+05 & {\color{blue}1.7e+04} & {\color{darkgreen}3.5e+02} & 2.6e+04 & 2.9e+04 & 1.8e+05 & 2.6e+04 & 2.9e+04 & 1.8e+05 & {\color{red}3.3e+02}\\
\textbf{c75 } & 4.8e+03 & 1.7e+02 & 1.2e+02 & {\color{darkgreen}7.2e+01} & {\color{darkgreen}7.2e+01} & {\color{blue}8.5e+01} & {\color{darkgreen}7.2e+01} & {\color{darkgreen}7.2e+01} & {\color{blue}8.5e+01} & {\color{red}1.8e-01} & 3.0e+05 & {\color{blue}5.5e+04} & {\color{darkgreen}9.8e+02} & 1.2e+05 & 7.5e+04 & 9.8e+04 & 1.2e+05 & 7.5e+04 & 9.8e+04 & {\color{red}8.2e+02}\\
\textbf{c100} & 4.9e+03 & 1.4e+02 & 1.2e+02 & {\color{blue}8.3e+01} & {\color{darkgreen}7.7e+01} & 9.6e+01 & {\color{blue}8.3e+01} & {\color{darkgreen}7.7e+01} & 9.6e+01 & {\color{red}7.2e-01} & 4.0e+05 & 9.4e+04 & {\color{red}1.3e+03} & 3.1e+05 & 1.3e+05 & {\color{blue}7.7e+04} & 3.1e+05 & 1.3e+05 & {\color{blue}7.7e+04} & {\color{darkgreen}1.4e+03}\\
\bottomrule
\end{tabular}

} %
\end{table}

\begin{table}[t]
\caption{Non-sampling experimental results with $\rho = 0.33$, and $\nanchormax = 3$ anchors.
For more descriptions, refer to the caption of \tblref{tbl:xprmt_exact_049}.
}
\label{tbl:xprmt_exact_033}

{\footnotesize
\setlength{\tabcolsep}{2.25pt} %
\newcolumntype{g}{>{\columncolor[gray]{0.8}}r}
\begin{tabular}{lrr|rrrr|rrr|rgg|gggg|ggg|g}
\toprule
{} & \multicolumn{10}{c}{total repb, $\epbtot$} & \multicolumn{10}{c}{total rems, $\emstot$} \\
{} &     buw &     p01 &   p02am &   p02tv &   p02ot &   p02md &   paxtv &   paxot &   paxmd &    pinf &     buw &     p01 &   p02am &   p02tv &   p02ot &   p02md &   paxtv &   paxot &   paxmd &    pinf \\
\midrule
\textbf{h6  } &     NaN &     NaN &     NaN &     NaN &     NaN &     NaN &     NaN &     NaN &     NaN &     NaN &     NaN &     NaN &     NaN &     NaN &     NaN &     NaN &     NaN &     NaN &     NaN &     NaN\\
\textbf{h36 } & 1.3e+04 & 4.5e+02 & 3.1e+02 & 1.5e+02 & 1.5e+02 & 1.8e+02 & 1.5e+02 & {\color{darkgreen}1.2e+02} & {\color{blue}1.3e+02} & {\color{red}2.5e-02} & 1.5e+05 & {\color{blue}9.0e+04} & {\color{darkgreen}1.5e+03} & 1.5e+05 & 1.4e+05 & 2.7e+05 & 1.5e+05 & 1.4e+05 & 4.3e+05 & {\color{red}1.4e+03}\\
\textbf{h36c} & 1.5e+04 & 3.1e+02 & 2.5e+02 & 1.6e+02 & 1.6e+02 & 1.6e+02 & 1.6e+02 & {\color{blue}1.4e+02} & {\color{darkgreen}1.2e+02} & {\color{red}9.1e-02} & 1.3e+05 & {\color{blue}8.0e+04} & {\color{red}2.3e+03} & 1.7e+05 & 2.1e+05 & 1.7e+05 & 1.7e+05 & 1.6e+05 & 1.7e+05 & {\color{darkgreen}2.5e+03}\\
\textbf{h70 } & 1.7e+04 & 4.2e+02 & 3.5e+02 & {\color{blue}2.3e+02} & 2.5e+02 & 2.8e+02 & {\color{blue}2.3e+02} & {\color{darkgreen}2.0e+02} & {\color{darkgreen}2.0e+02} & {\color{red}1.3e-01} & 4.8e+05 & {\color{blue}1.9e+05} & {\color{darkgreen}2.8e+03} & 3.0e+05 & 2.5e+05 & 2.6e+05 & 3.0e+05 & 2.5e+05 & 2.4e+05 & {\color{red}2.6e+03}\\
\textbf{h100} & 2.1e+04 & 5.6e+02 & 4.7e+02 & {\color{blue}3.5e+02} & {\color{blue}3.5e+02} & 4.0e+02 & {\color{blue}3.5e+02} & {\color{darkgreen}3.2e+02} & {\color{darkgreen}3.2e+02} & {\color{red}1.5e+00} & 8.9e+05 & 3.1e+05 & {\color{darkgreen}5.1e+03} & 9.5e+05 & {\color{blue}2.8e+05} & 3.4e+05 & 9.5e+05 & 3.2e+05 & 3.4e+05 & {\color{red}4.6e+03}\\
\textbf{c10 } & 9.1e+02 & 9.4e+01 & 6.0e+01 & 3.0e+01 & {\color{darkgreen}2.0e+01} & 2.7e+01 & 3.0e+01 & {\color{blue}2.3e+01} & 2.5e+01 & {\color{red}6.5e-09} & 1.4e+04 & 1.2e+04 & {\color{darkgreen}2.0e+03} & {\color{blue}1.1e+04} & {\color{blue}1.1e+04} & {\color{blue}1.1e+04} & {\color{blue}1.1e+04} & {\color{blue}1.1e+04} & {\color{blue}1.1e+04} & {\color{red}1.9e+03}\\
\textbf{c35 } & 4.0e+03 & 2.4e+02 & 9.3e+01 & 5.1e+01 & 4.9e+01 & 6.0e+01 & 5.1e+01 & {\color{darkgreen}4.0e+01} & {\color{blue}4.4e+01} & {\color{red}1.5e-01} & 1.1e+05 & {\color{blue}3.1e+04} & {\color{darkgreen}4.9e+02} & 3.3e+04 & 4.5e+04 & 1.1e+05 & 3.3e+04 & 5.2e+04 & 4.3e+04 & {\color{red}3.4e+02}\\
\textbf{c35c} & 2.1e+03 & 1.1e+02 & 5.8e+01 & 3.9e+01 & 4.0e+01 & {\color{blue}3.3e+01} & 3.9e+01 & 4.2e+01 & {\color{darkgreen}2.5e+01} & {\color{red}3.3e-01} & 2.9e+04 & {\color{blue}2.0e+04} & {\color{red}4.1e+02} & 3.9e+04 & 5.4e+04 & 3.7e+04 & 3.9e+04 & 6.0e+04 & 2.9e+04 & {\color{darkgreen}7.4e+02}\\
\textbf{c75 } & 4.3e+03 & 2.6e+02 & 1.4e+02 & 6.6e+01 & 7.6e+01 & 8.7e+01 & 6.6e+01 & {\color{darkgreen}6.0e+01} & {\color{blue}6.4e+01} & {\color{red}3.7e-02} & 3.5e+05 & 6.0e+04 & {\color{darkgreen}1.1e+03} & 6.5e+04 & 7.2e+04 & 1.5e+05 & 6.5e+04 & {\color{blue}5.9e+04} & 7.1e+04 & {\color{red}8.2e+02}\\
\textbf{c100} & 4.4e+03 & 1.7e+02 & 1.2e+02 & {\color{blue}7.3e+01} & 7.9e+01 & 1.1e+02 & {\color{blue}7.3e+01} & 7.4e+01 & {\color{darkgreen}6.9e+01} & {\color{red}5.7e-02} & 4.7e+05 & 8.8e+04 & {\color{darkgreen}1.4e+03} & 9.1e+04 & {\color{blue}7.2e+04} & 3.6e+05 & 9.1e+04 & 7.8e+04 & 8.5e+04 & {\color{red}1.2e+03}\\
\bottomrule
\end{tabular}

} %
\end{table}

\begin{table}[t]
\caption{Non-sampling experimental results with $\rho = 0.19$, and $\nanchormax = 5$ anchors.
For more descriptions, refer to the caption of \tblref{tbl:xprmt_exact_049}.
}
\label{tbl:xprmt_exact_019}

{\footnotesize
\setlength{\tabcolsep}{2.25pt} %
\newcolumntype{g}{>{\columncolor[gray]{0.8}}r}
\begin{tabular}{lrr|rrrr|rrr|rgg|gggg|ggg|g}
\toprule
{} & \multicolumn{10}{c}{total repb, $\epbtot$} & \multicolumn{10}{c}{total rems, $\emstot$} \\
{} &     buw &     p01 &   p02am &   p02tv &   p02ot &   p02md &   paxtv &   paxot &   paxmd &    pinf &     buw &     p01 &   p02am &   p02tv &   p02ot &   p02md &   paxtv &   paxot &   paxmd &    pinf \\
\midrule
\textbf{h6  } & 1.5e+03 & 5.3e+01 & 1.2e+02 & 1.5e+02 & 1.3e+02 & 4.2e+01 & 9.7e+01 & {\color{darkgreen}2.6e+01} & {\color{blue}2.7e+01} & {\color{red}2.3e-14} & 1.0e+05 & 3.3e+04 & {\color{red}4.6e+03} & 3.7e+04 & 3.6e+04 & {\color{blue}3.0e+04} & 3.6e+04 & 3.1e+04 & 3.1e+04 & {\color{darkgreen}4.8e+03}\\
\textbf{h36 } & 1.7e+04 & 1.1e+03 & 4.7e+02 & 2.0e+02 & 2.6e+02 & 2.5e+02 & 1.1e+02 & {\color{blue}1.0e+02} & {\color{darkgreen}8.6e+01} & {\color{red}9.8e-02} & 1.7e+05 & {\color{blue}1.4e+05} & {\color{darkgreen}2.0e+03} & 1.8e+05 & 1.9e+05 & 2.7e+05 & 6.8e+05 & 3.8e+05 & 2.2e+05 & {\color{red}1.6e+03}\\
\textbf{h36c} & 1.2e+04 & 4.4e+02 & 2.3e+02 & 1.1e+02 & 1.2e+02 & 1.2e+02 & 7.0e+01 & {\color{blue}5.3e+01} & {\color{darkgreen}5.2e+01} & {\color{red}7.0e-01} & 9.7e+04 & {\color{blue}5.7e+04} & {\color{darkgreen}2.4e+03} & 8.0e+04 & 8.6e+04 & 8.1e+04 & 4.0e+05 & 8.5e+04 & 8.2e+04 & {\color{red}2.2e+03}\\
\textbf{h70 } & 2.1e+04 & 5.9e+02 & 3.9e+02 & 2.4e+02 & 2.5e+02 & 2.9e+02 & 1.8e+02 & {\color{blue}1.6e+02} & {\color{darkgreen}1.3e+02} & {\color{red}4.3e-01} & 5.5e+05 & {\color{blue}1.8e+05} & {\color{darkgreen}3.0e+03} & 2.1e+05 & 1.9e+05 & 2.3e+05 & 9.0e+05 & 1.7e+06 & 5.9e+05 & {\color{red}2.6e+03}\\
\textbf{h100} & 2.6e+04 & 7.0e+02 & 5.0e+02 & 3.2e+02 & 3.5e+02 & 3.8e+02 & 2.3e+02 & {\color{blue}2.2e+02} & {\color{darkgreen}1.9e+02} & {\color{red}1.1e-01} & 1.1e+06 & {\color{blue}2.9e+05} & {\color{darkgreen}5.2e+03} & 4.0e+05 & {\color{blue}2.9e+05} & 3.0e+05 & 4.3e+05 & 6.1e+05 & 5.5e+05 & {\color{red}4.8e+03}\\
\textbf{c10 } & 2.4e+02 & 8.3e+00 & 1.1e+01 & 8.7e+00 & 8.8e+00 & 7.2e+00 & 3.4e+00 & {\color{darkgreen}2.6e+00} & {\color{blue}2.8e+00} & {\color{red}6.1e-14} & 2.2e+04 & {\color{blue}8.8e+03} & {\color{red}3.0e+03} & 8.9e+03 & 9.0e+03 & {\color{blue}8.8e+03} & 9.4e+03 & {\color{blue}8.8e+03} & 9.0e+03 & {\color{darkgreen}3.2e+03}\\
\textbf{c35 } & 4.4e+03 & 8.5e+03 & {\color{darkgreen}1.1e+02} & 5.2e+02 & 4.7e+02 & 4.5e+02 & 2.6e+02 & {\color{blue}1.7e+02} & 2.0e+02 & {\color{red}6.0e-02} & 1.3e+05 & 1.5e+06 & {\color{darkgreen}6.7e+02} & 1.8e+05 & 1.8e+05 & 1.8e+05 & 1.5e+05 & {\color{blue}4.3e+04} & 2.1e+05 & {\color{red}4.8e+02}\\
\textbf{c35c} & 2.2e+03 & 7.3e+02 & {\color{blue}7.8e+01} & 3.0e+02 & {\color{darkgreen}7.3e+01} & 2.1e+02 & 1.4e+02 & 1.3e+02 & 1.6e+02 & {\color{red}5.7e-02} & {\color{blue}3.6e+04} & 6.7e+04 & {\color{darkgreen}5.1e+02} & 7.4e+04 & 7.2e+04 & 7.6e+04 & 8.4e+04 & 9.7e+04 & 8.2e+04 & {\color{red}4.0e+02}\\
\textbf{c75 } & 4.3e+03 & 4.2e+02 & 1.5e+02 & 9.6e+01 & 1.0e+02 & 7.9e+01 & {\color{blue}5.1e+01} & {\color{darkgreen}4.6e+01} & 6.8e+01 & {\color{red}6.7e-02} & 4.1e+05 & {\color{blue}6.7e+04} & {\color{darkgreen}1.2e+03} & 7.2e+04 & 9.8e+04 & 7.8e+04 & 8.4e+04 & 9.3e+04 & 2.6e+05 & {\color{red}9.3e+02}\\
\textbf{c100} & 4.3e+03 & 2.2e+02 & 1.1e+02 & 7.3e+01 & 7.8e+01 & 7.3e+01 & {\color{blue}4.3e+01} & 4.7e+01 & {\color{darkgreen}3.8e+01} & {\color{red}9.8e-02} & 5.8e+05 & 8.6e+04 & {\color{darkgreen}1.5e+03} & 8.0e+04 & {\color{blue}7.2e+04} & 8.4e+04 & 1.2e+05 & 2.0e+05 & 1.1e+05 & {\color{red}1.3e+03}\\
\bottomrule
\end{tabular}

} %
\end{table}

\begin{table}[t]
\caption{Non-sampling experimental results with $\rho = 0.09$, and $\nanchormax = 11$ anchors.
For more descriptions, refer to the caption of \tblref{tbl:xprmt_exact_049}.
}
\label{tbl:xprmt_exact_009}

{\footnotesize
\setlength{\tabcolsep}{2.25pt} %
\newcolumntype{g}{>{\columncolor[gray]{0.8}}r}
\begin{tabular}{lrr|rrrr|rrr|rgg|gggg|ggg|g}
\toprule
{} & \multicolumn{10}{c}{total repb, $\epbtot$} & \multicolumn{10}{c}{total rems, $\emstot$} \\
{} &     buw &     p01 &   p02am &   p02tv &   p02ot &   p02md &   paxtv &   paxot &   paxmd &    pinf &     buw &     p01 &   p02am &   p02tv &   p02ot &   p02md &   paxtv &   paxot &   paxmd &    pinf \\
\midrule
\textbf{h6  } &     NaN &     NaN &     NaN &     NaN &     NaN &     NaN &     NaN &     NaN &     NaN &     NaN &     NaN &     NaN &     NaN &     NaN &     NaN &     NaN &     NaN &     NaN &     NaN &     NaN\\
\textbf{h36 } & 1.8e+04 & 5.6e+04 & 7.1e+02 & 5.7e+03 & 3.8e+03 & 7.4e+02 & 4.4e+02 & {\color{blue}3.6e+02} & {\color{darkgreen}2.2e+02} & {\color{red}1.9e-02} & {\color{blue}2.2e+05} & 4.9e+06 & {\color{darkgreen}3.1e+03} & 2.0e+06 & 1.1e+06 & 5.2e+05 & 4.2e+05 & 5.1e+05 & 4.4e+05 & {\color{red}2.4e+03}\\
\textbf{h36c} & 7.8e+03 & 3.3e+03 & {\color{blue}2.4e+02} & 1.3e+03 & 1.5e+03 & 1.8e+03 & 5.2e+02 & {\color{darkgreen}2.1e+02} & 4.1e+02 & {\color{red}2.7e-02} & {\color{blue}8.2e+04} & 2.2e+05 & {\color{darkgreen}2.6e+03} & 2.9e+05 & 2.2e+05 & 9.5e+05 & 8.2e+05 & 5.2e+05 & 6.0e+05 & {\color{red}2.5e+03}\\
\textbf{h70 } & 2.7e+04 & 1.8e+03 & 7.0e+02 & 3.5e+02 & 3.6e+02 & 4.2e+02 & {\color{blue}1.0e+02} & {\color{darkgreen}8.5e+01} & 1.9e+02 & {\color{red}5.7e-02} & 6.2e+05 & {\color{blue}1.4e+05} & {\color{darkgreen}3.8e+03} & 2.4e+05 & 2.2e+05 & 3.0e+05 & 3.5e+05 & 3.2e+05 & 5.4e+06 & {\color{red}3.1e+03}\\
\textbf{h100} & 3.1e+04 & 1.4e+03 & 6.8e+02 & 4.0e+02 & 5.0e+02 & 8.2e+02 & 1.3e+02 & {\color{blue}1.2e+02} & {\color{darkgreen}9.6e+01} & {\color{red}7.9e-02} & 1.1e+06 & {\color{blue}3.2e+05} & {\color{darkgreen}5.9e+03} & 3.6e+05 & {\color{blue}3.2e+05} & 9.7e+06 & 5.2e+05 & 4.8e+05 & 4.5e+05 & {\color{red}5.1e+03}\\
\textbf{c10 } &     NaN &     NaN &     NaN &     NaN &     NaN &     NaN &     NaN &     NaN &     NaN &     NaN &     NaN &     NaN &     NaN &     NaN &     NaN &     NaN &     NaN &     NaN &     NaN &     NaN\\
\textbf{c35 } & 2.7e+03 & 7.4e+02 & 7.7e+01 & 1.8e+02 & 1.4e+02 & 8.9e+01 & 2.7e+02 & {\color{blue}3.5e+01} & {\color{darkgreen}1.4e+01} & {\color{red}1.0e-06} & 1.3e+05 & 3.2e+05 & {\color{red}4.3e+03} & 4.4e+05 & 2.6e+05 & 3.2e+05 & 1.5e+05 & 1.1e+05 & {\color{blue}1.0e+05} & {\color{darkgreen}4.5e+03}\\
\textbf{c35c} & 1.2e+03 & 5.7e+02 & 4.1e+01 & 1.2e+02 & 6.6e+01 & 2.9e+01 & 4.5e+01 & {\color{blue}1.7e+01} & {\color{darkgreen}1.5e+01} & {\color{red}3.8e-07} & {\color{blue}3.4e+04} & 4.8e+05 & {\color{red}2.6e+03} & 2.2e+05 & 1.6e+05 & 1.5e+05 & 1.8e+05 & 1.4e+05 & 1.4e+05 & {\color{darkgreen}2.9e+03}\\
\textbf{c75 } & 4.6e+03 & 2.1e+03 & 1.7e+02 & 8.5e+02 & 7.7e+02 & 5.2e+02 & {\color{blue}1.6e+02} & 9.3e+03 & {\color{darkgreen}7.7e+01} & {\color{red}2.3e-01} & 4.4e+05 & {\color{darkgreen}1.8e+05} & {\color{red}1.4e+03} & 2.9e+05 & 2.6e+05 & {\color{blue}2.1e+05} & 3.5e+05 & 3.9e+06 & 2.8e+05 & {\color{red}1.4e+03}\\
\textbf{c100} & 4.9e+03 & 4.9e+02 & 1.2e+02 & 1.4e+02 & 1.7e+02 & {\color{blue}7.9e+01} & {\color{darkgreen}2.3e+01} & 2.9e+02 & 2.8e+02 & {\color{red}8.4e-02} & 6.1e+05 & 8.7e+04 & {\color{darkgreen}1.5e+03} & 7.8e+04 & {\color{blue}7.3e+04} & 9.5e+04 & 3.0e+05 & 2.3e+07 & 4.2e+06 & {\color{red}1.4e+03}\\
\bottomrule
\end{tabular}

} %
\end{table}

\begin{table}[t]
\caption{Non-sampling experimental results with $\rho = 0.06$, and $\nanchormax = 16$ anchors.
For more descriptions, refer to the caption of \tblref{tbl:xprmt_exact_049}.
}
\label{tbl:xprmt_exact_006}

{\footnotesize
\setlength{\tabcolsep}{2.25pt} %
\newcolumntype{g}{>{\columncolor[gray]{0.8}}r}
\begin{tabular}{lrr|rrrr|rrr|rgg|gggg|ggg|g}
\toprule
{} & \multicolumn{10}{c}{total repb, $\epbtot$} & \multicolumn{10}{c}{total rems, $\emstot$} \\
{} &     buw &     p01 &   p02am &   p02tv &   p02ot &   p02md &   paxtv &   paxot &   paxmd &    pinf &     buw &     p01 &   p02am &   p02tv &   p02ot &   p02md &   paxtv &   paxot &   paxmd &    pinf \\
\midrule
\textbf{h6  } &     NaN &     NaN &     NaN &     NaN &     NaN &     NaN &     NaN &     NaN &     NaN &     NaN &     NaN &     NaN &     NaN &     NaN &     NaN &     NaN &     NaN &     NaN &     NaN &     NaN\\
\textbf{h36 } & 1.1e+04 & 1.8e+03 & 3.9e+02 & 2.4e+02 & 5.1e+02 & 2.2e+02 & {\color{blue}5.0e+01} & 7.3e+01 & {\color{darkgreen}3.4e+01} & {\color{red}4.6e-01} & {\color{blue}2.6e+05} & 1.6e+06 & {\color{red}4.9e+03} & 1.5e+06 & 4.4e+06 & 7.8e+05 & 6.5e+05 & 7.2e+05 & 2.2e+06 & {\color{darkgreen}5.1e+03}\\
\textbf{h36c} & 5.1e+03 & 1.2e+03 & 1.4e+02 & 1.3e+02 & 1.2e+03 & 1.4e+02 & {\color{darkgreen}1.4e+01} & 4.2e+01 & {\color{blue}1.5e+01} & {\color{red}9.6e-02} & {\color{blue}7.2e+04} & 8.4e+05 & {\color{red}2.9e+03} & 5.2e+05 & 3.0e+06 & 1.2e+06 & 2.1e+06 & 3.8e+05 & 3.1e+05 & {\color{darkgreen}8.3e+03}\\
\textbf{h70 } & 2.9e+04 & 7.1e+03 & 1.1e+03 & 1.2e+03 & 1.1e+03 & 5.9e+02 & {\color{darkgreen}9.9e+01} & {\color{blue}1.3e+02} & 1.4e+02 & {\color{red}4.8e-01} & 6.3e+05 & {\color{blue}3.8e+05} & {\color{darkgreen}5.5e+03} & 4.4e+05 & 4.7e+05 & 4.3e+05 & 5.2e+05 & 7.5e+05 & 9.8e+05 & {\color{red}4.0e+03}\\
\textbf{h100} & 3.2e+04 & 2.3e+03 & 8.0e+02 & 4.7e+02 & 5.8e+02 & 5.4e+02 & {\color{darkgreen}7.2e+01} & 7.9e+01 & {\color{blue}7.6e+01} & {\color{red}3.1e-01} & 1.2e+06 & 3.1e+05 & {\color{darkgreen}6.2e+03} & 3.4e+05 & {\color{blue}2.7e+05} & 6.9e+05 & 6.3e+05 & 4.0e+05 & 5.5e+05 & {\color{red}5.3e+03}\\
\textbf{c10 } &     NaN &     NaN &     NaN &     NaN &     NaN &     NaN &     NaN &     NaN &     NaN &     NaN &     NaN &     NaN &     NaN &     NaN &     NaN &     NaN &     NaN &     NaN &     NaN &     NaN\\
\textbf{c35 } & 4.2e+03 & 3.3e+02 & 6.0e+01 & 3.5e+01 & 3.1e+01 & 2.4e+01 & {\color{darkgreen}4.7e+00} & {\color{blue}8.0e+00} & 9.1e+00 & {\color{red}3.2e-07} & 2.3e+05 & 5.1e+05 & {\color{red}6.3e+03} & 1.3e+05 & 1.1e+05 & 8.4e+04 & 8.0e+04 & 8.9e+04 & {\color{blue}7.3e+04} & {\color{darkgreen}8.0e+03}\\
\textbf{c35c} & 8.2e+02 & 2.5e+02 & 2.2e+01 & 3.4e+01 & 1.3e+01 & 1.4e+01 & {\color{darkgreen}3.9e+00} & 7.6e+00 & {\color{blue}4.9e+00} & {\color{red}4.6e-10} & {\color{blue}3.7e+04} & 3.7e+05 & {\color{red}3.8e+03} & 1.1e+05 & 9.1e+04 & 9.1e+04 & 7.3e+04 & 8.7e+04 & 8.9e+04 & {\color{darkgreen}4.0e+03}\\
\textbf{c75 } & 4.1e+03 & 1.9e+03 & 1.3e+02 & 5.4e+03 & 5.8e+02 & 2.4e+02 & {\color{darkgreen}2.5e+01} & 1.2e+02 & {\color{blue}9.3e+01} & {\color{red}1.8e-03} & {\color{blue}4.4e+05} & 5.4e+05 & {\color{red}1.4e+03} & 3.6e+07 & 1.9e+06 & 1.1e+06 & 6.0e+05 & 7.4e+05 & 1.5e+06 & {\color{darkgreen}1.6e+03}\\
\textbf{c100} & 5.3e+03 & 1.3e+03 & 1.4e+02 & 5.3e+02 & 5.0e+02 & 1.4e+02 & {\color{blue}3.6e+01} & 4.2e+01 & {\color{darkgreen}2.6e+01} & {\color{red}1.3e-01} & 6.3e+05 & {\color{blue}1.1e+05} & {\color{darkgreen}1.6e+03} & {\color{blue}1.1e+05} & {\color{blue}1.1e+05} & 1.4e+05 & 1.7e+05 & 1.3e+05 & 1.4e+05 & {\color{red}1.4e+03}\\
\bottomrule
\end{tabular}

} %
\end{table}

\begin{table}[t]
\caption{Non-sampling experimental results with $\rho = 0.03$, and $\nanchormax = 33$ anchors.
For more descriptions, refer to the caption of \tblref{tbl:xprmt_exact_049}.
}
\label{tbl:xprmt_exact_003}

{\footnotesize
\setlength{\tabcolsep}{2.25pt} %
\newcolumntype{g}{>{\columncolor[gray]{0.8}}r}
\begin{tabular}{lrr|rrrr|rrr|rgg|gggg|ggg|g}
\toprule
{} & \multicolumn{10}{c}{total repb, $\epbtot$} & \multicolumn{10}{c}{total rems, $\emstot$} \\
{} &     buw &     p01 &   p02am &   p02tv &   p02ot &   p02md &   paxtv &   paxot &   paxmd &    pinf &     buw &     p01 &   p02am &   p02tv &   p02ot &   p02md &   paxtv &   paxot &   paxmd &    pinf \\
\midrule
\textbf{h6  } &     NaN &     NaN &     NaN &     NaN &     NaN &     NaN &     NaN &     NaN &     NaN &     NaN &     NaN &     NaN &     NaN &     NaN &     NaN &     NaN &     NaN &     NaN &     NaN &     NaN\\
\textbf{h36 } & 8.4e+02 & 1.2e+03 & 2.0e+02 & 8.3e+01 & 9.6e+01 & 7.4e+01 & {\color{blue}1.7e+01} & {\color{darkgreen}1.6e+01} & {\color{darkgreen}1.6e+01} & {\color{red}1.3e-12} & {\color{blue}3.3e+05} & 2.5e+06 & {\color{darkgreen}2.1e+04} & 1.1e+06 & 1.0e+06 & 9.4e+05 & 5.9e+05 & 6.0e+05 & 6.0e+05 & {\color{red}1.6e+04}\\
\textbf{h36c} & 2.2e+02 & 3.0e+02 & 4.8e+01 & 4.9e+01 & 9.8e+02 & 2.9e+01 & {\color{darkgreen}1.7e+00} & {\color{blue}1.8e+00} & 2.2e+00 & {\color{red}2.3e-13} & {\color{blue}7.9e+04} & 4.8e+05 & {\color{darkgreen}8.6e+03} & 4.2e+05 & 8.0e+06 & 3.2e+05 & 2.3e+05 & 2.4e+05 & 2.4e+05 & {\color{red}8.2e+03}\\
\textbf{h70 } & 1.4e+04 & 4.4e+03 & 4.6e+02 & 4.2e+02 & 3.6e+02 & 3.3e+02 & {\color{darkgreen}2.0e+01} & {\color{blue}3.3e+01} & 4.1e+01 & {\color{red}1.1e-04} & {\color{blue}5.8e+05} & 5.5e+06 & {\color{red}6.4e+03} & 3.5e+06 & 3.5e+06 & 4.3e+06 & 1.8e+06 & 2.1e+06 & 1.7e+06 & {\color{darkgreen}1.1e+04}\\
\textbf{h100} & 4.2e+04 & 2.6e+04 & 4.1e+03 & 7.0e+03 & 7.3e+03 & 1.0e+04 & 8.3e+02 & {\color{darkgreen}3.9e+02} & {\color{blue}7.2e+02} & {\color{red}3.7e+00} & 1.1e+06 & 1.6e+06 & {\color{darkgreen}1.4e+04} & 1.8e+06 & 1.5e+06 & 2.4e+06 & {\color{blue}1.0e+06} & 1.3e+06 & 1.4e+06 & {\color{red}7.2e+03}\\
\textbf{c10 } &     NaN &     NaN &     NaN &     NaN &     NaN &     NaN &     NaN &     NaN &     NaN &     NaN &     NaN &     NaN &     NaN &     NaN &     NaN &     NaN &     NaN &     NaN &     NaN &     NaN\\
\textbf{c35 } & 6.5e+02 & 9.1e+01 & 2.4e+01 & 9.4e+00 & 8.5e+00 & 6.1e+00 & {\color{darkgreen}1.7e+00} & {\color{blue}2.0e+00} & 2.7e+00 & {\color{red}4.0e-13} & 4.2e+05 & 2.6e+05 & {\color{darkgreen}8.4e+03} & 1.0e+05 & 9.3e+04 & 7.6e+04 & 6.9e+04 & {\color{blue}6.2e+04} & {\color{blue}6.2e+04} & {\color{red}8.0e+03}\\
\textbf{c35c} & 6.5e+01 & 6.0e+01 & 2.1e+01 & 8.7e+00 & 4.0e+00 & 5.0e+00 & 1.3e+00 & {\color{darkgreen}1.1e+00} & {\color{blue}1.2e+00} & {\color{red}6.2e-13} & {\color{blue}3.6e+04} & 1.3e+05 & {\color{darkgreen}5.2e+03} & 7.1e+04 & 6.3e+04 & 6.6e+04 & 6.3e+04 & 4.5e+04 & 6.3e+04 & {\color{red}5.1e+03}\\
\textbf{c75 } & 2.1e+03 & 4.0e+03 & 5.6e+01 & 4.0e+02 & 9.2e+01 & 3.6e+01 & {\color{darkgreen}3.5e+00} & {\color{blue}8.6e+00} & {\color{darkgreen}3.5e+00} & {\color{red}7.0e-09} & 4.3e+05 & 1.5e+07 & {\color{red}1.1e+04} & 7.1e+06 & 7.7e+05 & 2.2e+05 & {\color{blue}1.6e+05} & 2.4e+05 & {\color{blue}1.6e+05} & {\color{darkgreen}1.2e+04}\\
\textbf{c100} & 3.8e+03 & 6.2e+04 & 1.2e+02 & 3.6e+02 & 2.1e+02 & 1.2e+02 & {\color{darkgreen}9.5e+00} & 2.5e+01 & {\color{blue}1.4e+01} & {\color{red}2.3e-05} & 6.7e+05 & 6.7e+07 & {\color{red}4.8e+03} & 8.1e+05 & 5.9e+05 & 5.0e+05 & {\color{blue}3.0e+05} & 1.2e+06 & 3.3e+05 & {\color{darkgreen}5.0e+03}\\
\bottomrule
\end{tabular}

} %
\end{table}

\end{landscape}

\section{Conclusions, limitations, and future works}
\label{nbwpval:conclude}

We propose a system of seminorm LSTD approximators for estimating
the relative bias value from multiple transient and multiple recurrent states
in unichain MDPs.
To this end, we derive an expression for the minimizer of the seminorm LSTD
that enables approximation through sampling (as for model-free RL).
We also devise a general procedure for LSTD-based policy evaluation,
from which the relative value approximator emerges as a special case;
so do the other existing LSTD-based approximators for recurrent MDPs and
unichain MDPs with one 0-reward recurrent state.
Experimental results validate that a system with more approximators yields
the lower projected Bellman errors.
It is also empirically shown that timestep-neighborhoods can reasonably
be specified based on estimating the squared MMD among stepwise state distributions
(using their state samples).

The proposed method addresses the problem of minimizing MSBPE that is based on
the one-step Bellman operator $\bo$ in \eqref{equ:poisson_avgrew}.
It is known that $\bo$ can be extended to its multi-step variant.
That is,
\begin{equation*}
\vecb{v} = \bo^m [\vecb{v}]
\eqdef \left\{ \sum_{\tau=0}^{m - 1} \mat{P}^\tau (\vecb{r} - \vecb{g}) \right\}
    + \mat{P}^m \vecb{v},
\quad \text{for $m \ge 1$ (where $m=1$ gives the one-step variant)}.
\end{equation*}
Another known extension is to utilize a weighted average over all $\bo^m$
for $m=1, 2, \ldots$, which gives
\begin{equation*}
\vecb{v} = \bo_{\!\lambda} [\vecb{v}],
\quad \text{where}\
\bo_{\!\lambda} \eqdef (1 - \lambda) \sum_{m=1}^{\infty} \lambda^{m-1} \bo^m,\
\text{for a trace-decay factor $\lambda \in [0, 1)$}.
\end{equation*}
These two extensions essentially provide a way to control the bias-variance trade-off
in the value approximation.\footnote{
    One way to see this bias-variance trade-off is from
    the semi-gradient TD viewpoint as follows.
    Setting $m$ to $1$ leads to the one-step TD algorithm.
    As in \eqref{equ:semigrad_update}, it approximates the true value
    $v(s_t) \approx r_{t+1} - g + \hat{v}(s_{t+1})$, which has low variance
    (as it involves only one-step next state and reward samples)
    but is biased towards the estimator $\hat{v}$.
    On the other hand, $m$-step TD with $m$ approaches infinity approximates
    $v(s_t) \approx (r_{t+1} - g) + (r_{t+2} - g) + \ldots$, which is
    unbiased (due to no involvement of $\hat{v}$)
    but has high variance (due to an infinitely long sequence of reward samples).
    Note that in practice, the gain $g$ should also be approximated.
}
They lead to $m$-step TD, TD($\lambda$) and LSTD($\lambda$) algorithms
\citep{sutton_1988_td, boyan_2002_lstd}.
It is interesting therefore to extend our proposed method to a system of
multiple $m$-step seminorm LSTD (or seminorm LSTD($\lambda$)) approximators.
This includes examination about how the size of a neighborhood affects
the suitable values for $m$ and $\lambda$.

This work has not taken the advantage of iterative techniques for calculating
the Moore-Penrose pseudoinverse (for computing the minimizer of the seminorm LSTD).
It also has not exploited the fact that $(1, 3)$-pseudoinverse is sufficient
for an LS solution that is not necessarily a minimum-norm solution
\citep[\tbl{6.1}]{campbell_2009_ginv}.
Note that the pseudoinverse used throughout this work,
\ie $\mat{M}^\dagger$ for a matrix $\mat{M}$, is the full $(1, 2, 3, 4)$-pseudoinverse.

Additionally, the (finite) sample complexity of the proposed method deserves
a careful study.
This includes the relationship between the number of neighborhoods and
the number of samples to achieve a certain level of errors.
We anticipate an intricate interplay because a fewer number of neighborhoods
(equivalently, more neighbors per neighborhood) yields more violation to
the i.i.d sample condition in computing the sample means for
the minimizing parameter of the seminorm LSTD (\thmref{thm:epb_minimizer}).

Lastly, our proposed method can be modified to become
a system of semi-gradient seminorm TD approximators.
This is worth investigating because semi-gradient algorithms involve
an initial parameter value, which is paradoxically beneficial for
policy iteration RL methods in that it can be set to the parameter of
the last policy's value approximator.
The modification should leverage the fact that
both semi-gradient TD and LSTD algorithms converge to the same TD fixed point
(\secref{nbwpval:prelim}).

\afterpage{
\clearpage%
\bibliography{main}

\begin{thebibliography}{}

\bibitem[Ben-Israel and Greville, 2003]{ben_2003_ginv}
Ben-Israel, A. and Greville, T. (2003).
\newblock {\em Generalized Inverses: Theory and Applications}.
\newblock CMS Books in Mathematics. Springer.

\bibitem[Boyan, 2002]{boyan_2002_lstd}
Boyan, J.~A. (2002).
\newblock Technical update: Least-squares temporal difference learning.
\newblock {\em Machine Learning}, 49.

\bibitem[Bradtke and Barto, 1996]{bradtke_1996_lstd}
Bradtke, S.~J. and Barto, A.~G. (1996).
\newblock Linear least-squares algorithms for temporal difference learning.
\newblock {\em Machine Learning}, 22(1–3).

\bibitem[Campbell and Meyer, 2009]{campbell_2009_ginv}
Campbell, S.~L. and Meyer, C.~D. (2009).
\newblock {\em Generalized Inverses of Linear Transformations}.
\newblock Society for Industrial and Applied Mathematics.

\bibitem[Castro, 2020]{castro_2020_sim}
Castro, P.~S. (2020).
\newblock Scalable methods for computing state similarity in deterministic
  {M}arkov decision processes.
\newblock {\em Proceedings of the AAAI Conference on Artificial Intelligence}.

\bibitem[Dann et~al., 2014]{dann_2014_petd}
Dann, C., Neumann, G., and Peters, J. (2014).
\newblock Policy evaluation with temporal differences: A survey and comparison.
\newblock {\em Journal of Machine Learning Research}, 15(24).

\bibitem[Ferns et~al., 2006]{ferns_2006_met}
Ferns, N., Castro, P.~S., Precup, D., and Panangaden, P. (2006).
\newblock Methods for computing state similarity in {M}arkov decision
  processes.
\newblock In {\em Conference in Uncertainty in Artificial Intelligence}.

\bibitem[Flamary et~al., 2021]{flamary_2021_pot}
Flamary, R., Courty, N., Gramfort, A., Alaya, M.~Z., Boisbunon, A., Chambon,
  S., Chapel, L., Corenflos, A., Fatras, K., Fournier, N., Gautheron, L.,
  Gayraud, N.~T., Janati, H., Rakotomamonjy, A., Redko, I., Rolet, A., Schutz,
  A., Seguy, V., Sutherland, D.~J., Tavenard, R., Tong, A., and Vayer, T.
  (2021).
\newblock {POT}: Python optimal transport.
\newblock {\em Journal of Machine Learning Research}, 22(78).

\bibitem[Gretton et~al., 2012]{gretton_2012_mmd}
Gretton, A., Borgwardt, K.~M., Rasch, M.~J., Sch{{\"o}}lkopf, B., and Smola, A.
  (2012).
\newblock A kernel two-sample test.
\newblock {\em Journal of Machine Learning Research}, 13(25).

\bibitem[Grinstead and Snell, 2012]{grinstead_2012_prob}
Grinstead, C. and Snell, J. (2012).
\newblock {\em Introduction to Probability}.
\newblock American Mathematical Society.

\bibitem[Gr{\"u}new{\"a}lder et~al., 2012]{grunewalder_2012_rkhs}
Gr{\"u}new{\"a}lder, S., Lever, G., Baldassarre, L., Pontil, M., and Gretton,
  A. (2012).
\newblock Modelling transition dynamics in {MDP}s with {RKHS} embeddings.
\newblock In {\em Proceedings of the 29th International Conference on Machine
  Learning}.

\bibitem[Haasdonk and Bahlmann, 2004]{haasdonk_2004_dsk}
Haasdonk, B. and Bahlmann, C. (2004).
\newblock Learning with distance substitution kernels.
\newblock In {\em Pattern Recognition}, volume 3175. Springer Berlin
  Heidelberg.

\bibitem[Hartwig, 1986]{hartwig_1986_rev}
Hartwig, R. (1986).
\newblock The reverse order law revisited.
\newblock {\em Linear Algebra and its Applications}, 76.

\bibitem[Harville, 1997]{harville_1997_mat}
Harville, D.~A. (1997).
\newblock {\em Matrix Algebra From a Statistician’s Perspective}.
\newblock Springer.

\bibitem[Hordijk et~al., 1985]{hordijk_1985_disc}
Hordijk, A., Dekker, R., and Kallenberg, L. C.~M. (1985).
\newblock Sensitivity analysis in discounted {M}arkovian decision problems.
\newblock {\em Operations Research Spektrum}, 7.

\bibitem[Lan et~al., 2021]{lan_2021_mcrl}
Lan, C.~L., Bellemare, M.~G., and Castro, P.~S. (2021).
\newblock Metrics and continuity in reinforcement learning.
\newblock In {\em Proceedings of the AAAI Conference on Artificial
  Intelligence}.

\bibitem[Lewis and Newman, 1968]{lewis_1968_psd}
Lewis, T.~O. and Newman, T.~G. (1968).
\newblock Pseudoinverses of positive semidefinite matrices.
\newblock {\em SIAM Journal on Applied Mathematics}, 16(4).

\bibitem[Liu and Olshevsky, 2021]{liu_2021_tdgs}
Liu, R. and Olshevsky, A. (2021).
\newblock Temporal difference learning as gradient splitting.
\newblock In {\em Proceedings of the 38th International Conference on Machine
  Learning}.

\bibitem[Muandet et~al., 2017]{muandet_2017_kme}
Muandet, K., Fukumizu, K., Sriperumbudur, B., and Schölkopf, B. (2017).
\newblock Kernel mean embedding of distributions: A review and beyond.
\newblock {\em Foundations and Trends® in Machine Learning}, 10.

\bibitem[Proszynski and Sosnowski, 1995]{proszynski_1995_snls}
Proszynski, W. and Sosnowski, A. (1995).
\newblock Matrix expressions for minimum seminorm least squares solutions of
  linear systems and the related inverse.
\newblock {\em Demonstratio Mathematica}, 28(3).

\bibitem[Puterman, 1994]{puterman_1994_mdp}
Puterman, M.~L. (1994).
\newblock {\em Markov Decision Processes: Discrete Stochastic Dynamic
  Programming}.
\newblock John Wiley \& Sons, Inc., 1st edition.

\bibitem[Song et~al., 2016]{song_2016_klstd}
Song, T., Li, D., Cao, L., and Hirasawa, K. (2016).
\newblock Kernel-based least squares temporal difference with gradient
  correction.
\newblock {\em IEEE Transactions on Neural Networks and Learning Systems},
  27(4).

\bibitem[Strens, 2000]{strens_2000_bfrl}
Strens, M. J.~A. (2000).
\newblock A {B}ayesian framework for reinforcement learning.
\newblock In {\em Proceedings of the 27th International Conference on Machine
  Learning}.

\bibitem[Sugiyama et~al., 2012]{sugiyama_2012_dre}
Sugiyama, M., Suzuki, T., and Kanamori, T. (2012).
\newblock {\em Density Ratio Estimation in Machine Learning}.
\newblock Cambridge University Press.

\bibitem[Sutton, 1988]{sutton_1988_td}
Sutton, R.~S. (1988).
\newblock Learning to predict by the methods of temporal differences.
\newblock {\em Machine Learning}, 3(1).

\bibitem[Sutton and Barto, 2018]{sutton_2018_irl}
Sutton, R.~S. and Barto, A.~G. (2018).
\newblock {\em Introduction to Reinforcement Learning}.
\newblock MIT Press.

\bibitem[Tian, 2019]{tian_2019_rev}
Tian, Y. (2019).
\newblock Reverse order laws for generalized inverses of products of two or
  three matrices with applications.
\newblock {\em arXiv:1912.05948}.

\bibitem[Tsitsiklis and Roy, 1999]{tsitsiklis_1999_avgtd}
Tsitsiklis, J.~N. and Roy, B.~V. (1999).
\newblock Average cost temporal-difference learning.
\newblock {\em Automatica}, 35(11).

\bibitem[Ueno et~al., 2008]{ueno_2008_lstd}
Ueno, T., Kawanabe, M., Mori, T., Maeda, S.-i., and Ishii, S. (2008).
\newblock A semiparametric statistical approach to model-free policy
  evaluation.
\newblock In {\em Proceedings of the 25th International Conference on Machine
  Learning}.

\bibitem[Xu et~al., 2005]{xu_2005_klstd}
Xu, X., Xie, T., Hu, D., and Lu, X. (2005).
\newblock Kernel least-squares temporal difference learning.
\newblock {\em Information Technology - IT}, 11.

\bibitem[{Yu} and {Bertsekas}, 2009]{yu_2009_lspe}
{Yu}, H. and {Bertsekas}, D.~P. (2009).
\newblock Convergence results for some temporal difference methods based on
  least squares.
\newblock {\em IEEE Transactions on Automatic Control}, 54(7).

\end{thebibliography}
\bibliographystyle{apalike}
\clearpage%
}

\end{document}